\documentclass[11pt]{amsart}

\usepackage[T1]{fontenc}
\usepackage[utf8]{inputenc}
\usepackage[english]{babel}
\usepackage[margin=1in]{geometry}
\usepackage{indentfirst}
\usepackage{amsmath,amssymb,amsthm,mathtools}
\usepackage{graphicx,booktabs}
\usepackage[numbers,square,sort&compress]{natbib}
\let\originalcitet\citet
\renewcommand{\citet}[1]{%
  \def\currentcitekey{#1}%
  \def\boucheroncitekey{boucheron2013concentration}%
  \ifx\currentcitekey\boucheroncitekey
    Boucheron et al.~\citep{#1}%
  \else
    \originalcitet{#1}%
  \fi
}
\usepackage{xcolor}
\usepackage{url}
\definecolor{linkcol}{RGB}{0,70,160}
\definecolor{citecol}{RGB}{0,120,70}
\definecolor{urlcol}{RGB}{150,60,0}
\usepackage[
  colorlinks=true,
  linkcolor=linkcol,
  citecolor=citecol,
  urlcolor=urlcol,
  breaklinks=true,
  bookmarksopen=true,
  bookmarksnumbered=true,
  pdftitle={Beyond Multiplicity Correction: Sharp Retrieval Thresholds for Attention with Correlated Distractors}
]{hyperref}

\allowdisplaybreaks
\newtheorem{theorem}{Theorem}
\newtheorem{proposition}[theorem]{Proposition}
\newtheorem{corollary}[theorem]{Corollary}
\newtheorem{lemma}[theorem]{Lemma}
\theoremstyle{definition}

\newcommand{\E}{\mathbb E}
\newcommand{\Prob}{\mathbb P}
\newcommand{\R}{\mathbb R}
\newcommand{\pto}{\xrightarrow{\mathbb P}}

\newcommand{\Z}{\mathcal Z}
\newcommand{\frem}{f}
\newcommand{\pc}{P_{\mathrm{cl}}}
\newcommand{\pt}{P_{\mathrm{tok}}}
\newcommand{\ac}{A^{\mathrm{cnt}}_*}
\newcommand{\am}{A^{\mathrm{avg}}_*}
\newcommand{\hc}{h_{\mathrm{cnt}}}
\newcommand{\hm}{h_{\mathrm{avg}}}
\DeclareMathOperator{\Var}{Var}
\DeclareMathOperator{\Cov}{Cov}

\title[Sharp Retrieval Thresholds for Attention]{Beyond Multiplicity Correction: Sharp \\ Retrieval Thresholds for Attention with \\ Correlated Distractors}
\author{Heehyun Park}
\address{Department of Computer Science and Engineering, Pennsylvania State University}
\email{hbp5148@psu.edu}

\makeatletter
\renewcommand{\paragraph}{%
  \@startsection{paragraph}{4}%
  \z@{0.6\baselineskip}{-\fontdimen2\font}%
  {\normalfont\normalsize\bfseries}%
}
\makeatother
\begin{document}
\begin{abstract}
Redundant context affects attention in two distinct ways: multiplicity adds normalization mass, while score variation within correlated groups can produce extreme individual scores. We separate these effects in a planted retrieval model with exponentially many Gaussian distractors organized into supplied clusters whose number and size both grow exponentially with dimension. The model is a two-level generalized random energy model, whose classical free energy yields exact retrieval thresholds. Gaussian interpolation further shows that increasing positive correlation at fixed token count lowers the distractor free energy, unlike adding duplicates. 

Our main result establishes a sharp limitation of count correction, which subtracts the log cluster size from pre-softmax scores: this correction removes multiplicity but retains within-cluster extremes. Even when the inverse temperature is chosen after observing all scores, count correction requires a signal at least $\sqrt{2\alpha_1}$, where $\alpha_1$ is the exponential growth rate of the number of clusters. By contrast, averaging scores before exponentiation suppresses token-level fluctuations and lowers the optimal signal threshold to $\sqrt{2\rho\alpha_1}$, where $\rho$ is the within-cluster correlation. 

This gap persists after adding exponentially many singleton distractors, so group size alone does not identify the target, and extends, under explicit cumulant regularity assumptions, to fixed finite hierarchies of independent-increment scores. Finite-sample, stability, and participation results complement the asymptotic analysis. A controlled retrieval experiment with frozen sentence embeddings illustrates both sides of the distinction: averaging helps when the relevant item is a singleton, but can discard useful evidence when a relevant group contains multiple matching items.
\end{abstract}

\maketitle
\section{Introduction}
Repeated passages, related keys, and duplicated records can change attention even when the quality of an individual query--key score stays fixed. Two interventions are easily conflated: correlating a fixed collection of scores, and adding more copies of existing scores. Their effects need not have the same sign. A theory that attributes every redundancy failure to positive correlation risks recommending the wrong intervention.

We analyze one softmax retrieval step with a planted relevant item and a hierarchical Gaussian distractor ensemble. Our objective is to characterize the signal strength needed for its attention mass to approach one. We treat the hierarchy as a supplied partition, as could arise from document or record identifiers. Inferring semantic groups from learned keys is a separate problem. We also include exponentially many singleton distractors so that group sizes alone need not identify the target. The model permits exact comparisons of ordinary attention, subtraction of log cluster size, and averaging scores before exponentiation.

Our main result identifies what count correction leaves behind: dividing a cluster's weight by its size does not suppress its largest individual scores. Count correction therefore has a sharp signal threshold that cannot be lowered even by choosing the temperature after observing all scores (Theorem~\ref{thm:optimal}). Averaging the scores first achieves a strictly smaller threshold (Theorem~\ref{thm:averaging}). The hidden-singleton result removes identification from group size alone, and the deeper and non-Gaussian results extend this same comparison. Finite-sample and stability bounds describe its scope. The free energy and concentration transitions themselves come from classical random energy model (REM) and generalized random energy model (GREM) calculations \citep{derrida1980rem,derrida1985grem,bovier2004grem}; we provide self-contained proofs. Finally, a frozen-encoder experiment illustrates how the two operations differ on natural-sentence scores, including cases where averaging relevant scores hurts retrieval; it is not an empirical validation of the Gaussian asymptotic model.

\paragraph{Relation to prior work.} The Gibbs interpretation of attention and its connection to associative memory are established \citep{ramsauer2021hopfield}. REM methods already yield precise softmax and initialization analyses \citep{dohmatob2025softmax,giorlandino2025failure}. Long-context scaling depends on the score geometry \citep{chen2025critical}, and gap-counting unifies several concentration scales \citep{hayase2026unified}. Training-driven transitions \citep{lavie2026phase}, correlated Hopfield equilibria \citep{schimmenti2026bifurcation}, and clustered self-attention dynamics \citep{gao2026clustered} concern different observables or dynamics. Our observable is the mass of an externally planted item after a single retrieval step. This distinction does not make the underlying GREM calculation new.

Likewise, correcting sampling density is an established idea, including in PointConv \citep{wu2019pointconv}. ToMe explicitly accounts for multiplicity through a $+\log m$ adjustment after merging \citep{bolya2023tome}; Double-P uses size-weighted centroids to approximate token attention \citep{ni2026doublep}. We study a different objective: equal prior mass for supplied groups. LUCID addresses correlated-key interference through preconditioning \citep{duvvuri2026lucid}. We do not introduce a general density correction or token-merging architecture. Our focus is their sharp retrieval limits in an explicit probabilistic model.

\section{Model and normalization}
We model retrieval against a collection of distractor tokens that arrive in correlated clusters.  We begin with equally sized distractor clusters; Proposition~\ref{prop:types} later treats unequal cluster sizes through a fixed finite collection of distractor types.  The ambient dimension is $d$, and we study the high-dimensional limit $d\to\infty$ through integers.  A distractor is indexed by a pair $(a,j)$: the index $a$ identifies its cluster and $j$ identifies the token within that cluster.  There are $M_d$ clusters and $m_d$ tokens in every cluster, where
\begin{equation}
 M_d=\lceil e^{\alpha_1d}\rceil,\quad m_d=\lceil e^{\alpha_2d}\rceil,
 \qquad \alpha=\alpha_1+\alpha_2.
 \label{eq:population}
\end{equation}
Thus $\alpha_1$ is the exponential growth rate of the number of distinct clusters, $\alpha_2$ is the exponential growth rate of the multiplicity within each cluster, and the total number of distractors is $N_d=M_dm_d\asymp e^{\alpha d}$.

To represent similarity inside a cluster, assign to each cluster a shared Gaussian factor $G_a$ and to each token a token-specific Gaussian factor $H_{aj}$.  All of these variables are mutually independent $N(0,1)$ random variables, and the distractor score is
\begin{equation}
 X_{aj}=\sqrt{\rho d}\,G_a+\sqrt{(1-\rho)d}\,H_{aj},
 \label{eq:model}
\end{equation}
where $0<\rho<1$.  The term $G_a$ shifts every score in cluster $a$ together, whereas $H_{aj}$ distinguishes individual tokens within that cluster.  Hence each $X_{aj}$ is marginally $N(0,d)$, two different tokens in the same cluster have covariance $\rho d$ (and correlation $\rho$), and tokens in different clusters are independent.  The endpoints give useful intuition: $\rho=0$ would make all distractor scores independent, while $\rho=1$ would make the tokens within a cluster exact score duplicates.  Equation~\eqref{eq:model} is therefore a two-level Gaussian random-energy model for repeated or near-duplicate distractors.

Alongside these distractors, let $X_*$ denote the score of the single relevant, or planted, item.  At inverse temperature $\beta>0$, ordinary softmax attention assigns that item the mass
\begin{equation*}
 \Z_d(\beta)=\sum_{a=1}^{M_d}\sum_{j=1}^{m_d}e^{\beta X_{aj}},
 \quad L_d(\beta)=d^{-1}\log\Z_d(\beta),\qquad
 A_*=\frac{e^{\beta X_*}}{e^{\beta X_*}+\Z_d(\beta)}.
\end{equation*}
Here $\Z_d$ is the total unnormalized attention weight of the distractors and $L_d$ is its log weight per dimension. The exact normalized log-odds against the planted item are
\begin{equation}
 \frac{1}{\beta d}\log\frac{1-A_*}{A_*}
 =\frac{L_d(\beta)}{\beta}-\frac{X_*}{d}.
 \label{eq:logodds}
\end{equation}
Thus a negative limiting right-hand side gives retrieval, while a positive limit gives failure. We assume only $X_*/d\pto s\in\R$: the constant $s$ is the planted item's asymptotic score advantage on the order-$d$ scale. We do not require $X_*$ to be independent of the distractors. Retrieval means $A_*\pto1$, and all probability statements refer to the score ensemble.

The score model can be realized as a query--key dot product $X_{aj}=q^\top k_{aj}$. Here $q\in\R^d$ is a deterministic query vector with $\|q\|^2=d$, and $k_{aj}\in\R^d$ is the key of token $j$ in cluster $a$. Let $z_a\sim N(0,I_d)$ be the random vector shared by cluster $a$, and let $\epsilon_{aj}\sim N(0,I_d)$ be the token-specific random vector; all vectors $z_a$ and $\epsilon_{aj}$ are mutually independent. Setting $k_{aj}=\sqrt\rho\,z_a+\sqrt{1-\rho}\,\epsilon_{aj}$ gives the score distribution in \eqref{eq:model}. A planted key $k_*$ can be chosen so that $q^\top k_*=sd$. Under conventional scaled dot-product attention, the pre-softmax score is $t_d q^\top k/\sqrt d$ for a dimension-dependent sharpening factor $t_d$, so our $\beta$ equals $t_d/\sqrt d$. Thus fixed $\beta$ corresponds to $t_d=\beta\sqrt d$, and cannot be identified with fixed conventional scaling. A learned query depending on these keys need not satisfy the assumed law.

Define the one-level free energy for variance $vd$,
\begin{equation}
 \frem(\beta;a,v)=
 \begin{cases}a+\beta^2v/2,&\beta\le\sqrt{2a/v},\\
 \beta\sqrt{2av},&\beta\ge\sqrt{2a/v},\end{cases}
 \qquad \frem(\beta;a,0)=a.
 \label{eq:rem}
\end{equation}
We use inverse temperature: larger $\beta$ means sharper attention.

\section{Exact thresholds and the sign of correlation}

\begin{theorem}[Classical two-level free energy, specialized]\label{thm:free}
For fixed $\beta>0$, define
\begin{equation}
 F(\beta)=
 \begin{cases}
 \frem(\beta;\alpha,1),&\rho\le\alpha_1/\alpha,\\
 \frem(\beta;\alpha_1,\rho)+\frem(\beta;\alpha_2,1-\rho),
       &\rho>\alpha_1/\alpha.
 \end{cases}\label{eq:free}
\end{equation}
Then the normalized distractor log-partition function $L_d(\beta)=d^{-1}\log\Z_d(\beta)$ converges to $F(\beta)$ in probability and in $L^1$.
Equivalently, with $I_1(x)=x^2/(2\rho)$ and $I_2(y)=y^2/[2(1-\rho)]$,
\begin{equation}
 F(\beta)=\max_{\substack{I_1(x)\le\alpha_1\\I_1(x)+I_2(y)\le\alpha}}
 \{\alpha-I_1(x)-I_2(y)+\beta(x+y)\}.
 \label{eq:variational}
\end{equation}
Consequently, if $X_*/d\pto s\in\R$, then $A_*\pto1$ when $s>F(\beta)/\beta$ and $A_*\pto0$ when $s<F(\beta)/\beta$.
\end{theorem}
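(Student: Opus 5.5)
The plan is to prove the two-level GREM free energy by matching upper and lower bounds on $L_d(\beta)$, using a first- and second-moment analysis of level-set counts. Then the retrieval dichotomy follows at once from \eqref{eq:logodds}.

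\textbf{Discretization.} Write $X_{aj}=d(x_a+y_{aj})$ with $x_a=\sqrt{\rho/d}\,G_a$ and $y_{aj}=\sqrt{(1-\rho)/d}\,H_{aj}$. Cover the relevant compact range of $(x,y)$ by a grid of mesh $\delta$. Scores outside a large box contribute negligibly by a union bound, since the tails are Gaussian at scale $d$. For a grid cell near $x$, let $N_1(x)$ be the number of clusters with $x_a$ in the cell. Gaussian tail estimates give $\E N_1(x)=e^{d(\alpha_1-I_1(x)+o(1))}$. For a cluster with $x_a\approx x$, the number of tokens with $y_{aj}\approx y$ has mean $e^{d(\alpha_2-I_2(y)+o(1))}$. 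Hence $\Z_d=\sum_{\text{cells}}N(x,y)e^{\beta d(x+y)+O(\delta d)}$, and only polynomially many cells appear.

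\textbf{Upper bound.} By Markov's inequality and a union bound over cells, with high probability there is no cluster with $I_1(x)>\alpha_1+\epsilon$. Likewise there is no token with $I_1(x)+I_2(y)>\alpha+\epsilon$, since the expected count is $e^{d(\alpha-I_1-I_2)}$. For cells inside the constraint region, $N(x,y)\le e^{d(\alpha-I_1-I_2+\epsilon)}$ with high probability, again by Markov. Summing over cells gives $L_d\le \text{RHS of }\eqref{eq:variational}+O(\epsilon+\delta)$.

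\textbf{Lower bound.} Fix $(x,y)$ strictly inside the region, with $I_1(x)<\alpha_1$ and $I_1+I_2<\alpha$. There are two cases.
\begin{itemize}
\item If also $I_2(y)<\alpha_2$, I apply Chebyshev twice. The count of good clusters concentrates because its terms are independent and its mean is exponentially large. Conditionally on the clusters, the token count in each good cluster concentrates too. This gives $N(x,y)\ge \tfrac12 e^{d(\alpha-I_1-I_2)}$.
\item If $I_2(y)>\alpha_2$, I use a different event. The number of pairs is the sum over $e^{d(\alpha_1-I_1)}$ independent clusters of a count that is rarely nonzero. Its variance is at most its mean times a bounded factor, because each cluster has at most one such token with high probability. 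So the second moment is again controlled.
\end{itemize}
Continuity then gives the lower bound at boundary optimizers, and $L^1$ convergence follows from uniform integrability. For that step I will use Gaussian concentration: $L_d$ is $\beta/\sqrt d$-Lipschitz times $\sqrt d$ in the standard Gaussian vector, so $\Var(L_d)=O(\beta^2/d)$. Together with the first-moment bound $\E L_d\le \alpha+\beta^2/2+o(1)$, this suffices.

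\textbf{Evaluating the variational problem.} This is the step I expect to be the main obstacle. The task is to solve \eqref{eq:variational} explicitly and recover \eqref{eq:free}, and the case analysis is delicate at the coupling constraints. The unconstrained maximizer is $(x,y)=(\beta\rho,\beta(1-\rho))$. I plan to treat the problem as maximizing over the total $u=x+y$ and the split.
\begin{itemize}
\item When $\rho\le\alpha_1/\alpha$, the first constraint is implied by the second along the optimal split $x/y=\rho/(1-\rho)$. The problem then reduces to the one-level REM with rate $\alpha$ and variance $1$, giving $\frem(\beta;\alpha,1)$.
\item When $\rho>\alpha_1/\alpha$, the cluster constraint binds first. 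Using the KKT conditions, I will check that the optimum decouples into a maximization over $x$ with $I_1\le\alpha_1$, plus a maximization over $y$ with $I_2\le\alpha_2$. The point is that the total constraint is then implied: $I_1\le\alpha_1$ and $I_2\le\alpha_2$ give $I_1+I_2\le\alpha$. Conversely, I must show that relaxing $I_2\le\alpha_2$ to $I_2\le\alpha-I_1$ never helps when $\rho>\alpha_1/\alpha$.
\end{itemize}
For that converse I will compare the phase boundaries $\beta\ge\sqrt{2\alpha_1/\rho}$ and $\beta\ge\sqrt{2\alpha_2/(1-\rho)}$. The condition $\rho>\alpha_1/\alpha$ is equivalent to $\sqrt{2\alpha_1/\rho}<\sqrt{2\alpha_2/(1-\rho)}$, so the first level freezes before the second. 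Hence whenever $I_2$ is pushed to its limit, $I_1$ is already at $\alpha_1$, and $\alpha-I_1=\alpha_2$.

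\textbf{Retrieval.} Combine \eqref{eq:logodds} with $L_d\pto F(\beta)$ and $X_*/d\pto s$. The normalized log-odds converge in probability to $F(\beta)/\beta-s$. If this limit is negative, then $\log((1-A_*)/A_*)\to-\infty$, so $A_*\to1$. If it is positive, then $A_*\to0$. No independence between $X_*$ and the distractors is needed.
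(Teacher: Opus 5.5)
Your proposal is correct and follows essentially the same route as the paper. Both arguments use bin counting with a Markov upper bound, a second-moment lower bound at strictly feasible points extended to the boundary, a KKT evaluation in which $b_1<b_2$ is exactly what makes the multiplier $\lambda_1=\beta/b_1-\beta/b_2$ nonnegative, and the log-odds identity for retrieval. The differences are minor: the paper gets $L^1$ convergence from a second-moment bound on the maximal score rather than from Gaussian Poincaré, and it does not need your case split in the lower bound (which also omits $I_2(y)=\alpha_2$), because conditional on the cluster variables the token indicators are independent, so the pair count is binomial with variance at most its mean whenever $\alpha-I_1-I_2>0$.
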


In \eqref{eq:variational}, $x$ is the order-$d$ deviation of the cluster-level parent shared by all tokens in a cluster, while $y$ is the token-specific child deviation. The first constraint requires that such a rare parent exist among the $M_d$ clusters; the second is the total entropy constraint requiring that a parent--child path with deviations $(x,y)$ exist among all $M_dm_d$ tokens. Replacing the second constraint with $I_2(y)\le\alpha_2$ for every parameter choice gives an incorrect answer in the merged regime.

To describe the freezing transitions, define the three inverse-temperature scales
\begin{equation*}
 b_1=\sqrt{2\alpha_1/\rho},\qquad
 b_2=\sqrt{2\alpha_2/(1-\rho)},\qquad b=\sqrt{2\alpha}.
\end{equation*}
If $\rho\le\alpha_1/\alpha$, the parent constraint imposes no additional restriction on the optimizer and the free energy matches that of an i.i.d. REM with the same marginal variance, although the scores remain correlated. If $\rho>\alpha_1/\alpha$, equivalently $b_1<b_2$, parent and child freezing occur separately, and the three branches of $F$ are
\begin{equation}
 F(\beta)=\begin{cases}
 \alpha+\beta^2/2,&\beta\le b_1,\\
 \beta\sqrt{2\rho\alpha_1}+\alpha_2+\beta^2(1-\rho)/2,
      &b_1\le\beta\le b_2,\\
 \beta\big(\sqrt{2\rho\alpha_1}+\sqrt{2(1-\rho)\alpha_2}\big),&\beta\ge b_2.
 \end{cases}\label{eq:split}
\end{equation}
The surface $\beta s=F(\beta)$ is deliberately excluded from the retrieval dichotomy. The first-order limit alone does not determine its behavior.

\paragraph{Finite-dimensional refinement.} Theorem~\ref{thm:free} gives the limiting threshold as $d\to\infty$. Proposition~\ref{prop:concentration} gives a separate finite-$d$ refinement: it bounds the deviation of $L_d$ from its mean and combines this with the deviation of $X_*/d$ from $s$ to give an explicit retrieval margin. This proposition is not needed to prove the later threshold results; it quantifies the finite-size scope of the asymptotic statement.

\begin{theorem}[Correlation comparison]\label{thm:comparison}
At any finite $d$, with fixed $M_d$ clusters and $m_d$ tokens per cluster, and for $0<\rho<1$, define the distractor-only token weights $p_{aj}=e^{\beta X_{aj}}/\Z_d$ and cluster weights $P_a=\sum_jp_{aj}$. Then
\begin{equation*}
 \frac{\partial}{\partial\rho}\E L_d(\beta)
 =-\frac{\beta^2}{2}\E\left[\sum_aP_a^2-\sum_{a,j}p_{aj}^2\right]\le0.
\end{equation*}
In particular $F(\beta;\rho)\le F(\beta;0)=\frem(\beta;\alpha,1)$. For a common limiting signal $s$, i.i.d. success $\beta s>\frem(\beta;\alpha,1)$ implies success for every $\rho\in[0,1]$.
\end{theorem}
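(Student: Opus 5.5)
The plan is Gaussian interpolation (Kahane/Slepian style) on the covariance matrix. At fixed $d$ the vector $(X_{aj})$ is centered Gaussian with covariance
\[
C_{aj,bk}(\rho)=d\bigl[\mathbf 1_{a=b,j=k}+\rho\,\mathbf 1_{a=b,j\ne k}\bigr].
\]
The variances stay equal to $d$ as $\rho$ varies; only the off-diagonal within-cluster entries move. Let $\Phi(x)=d^{-1}\log\sum_{a,j}e^{\beta x_{aj}}$. This function is smooth and has bounded first and second derivatives. Gaussian integration by parts then gives the standard formula
\[
\frac{\partial}{\partial\rho}\E\Phi(X)=\frac12\sum_{(a,j),(b,k)}\frac{\partial C_{aj,bk}}{\partial\rho}\,\E\,\partial_{aj}\partial_{bk}\Phi(X).
\]
Here $\partial C_{aj,bk}/\partial\rho=d\,\mathbf 1_{a=b,j\ne k}$. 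One can verify this directly by writing $X=\sqrt{\rho d}\,G+\sqrt{(1-\rho)d}\,H$, differentiating in $\rho$, and applying Stein's lemma to $G$ and $H$ separately. The two resulting terms combine into exactly the within-cluster off-diagonal sum. Differentiating under the expectation is justified because $\Phi$ is Lipschitz with bounded Hessian and the index set is finite.

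The Hessian of the log-sum-exp is $\partial_{aj}\partial_{bk}\Phi=d^{-1}\beta^2\bigl(p_{aj}\mathbf 1_{(a,j)=(b,k)}-p_{aj}p_{bk}\bigr)$. For $(a,j)\ne(b,k)$ only the $-p_{aj}p_{bk}$ term survives. Substituting, the factors of $d$ cancel and
\[
\frac{\partial}{\partial\rho}\E L_d=-\frac{\beta^2}{2}\E\sum_a\sum_{j\ne k}p_{aj}p_{ak}
=-\frac{\beta^2}{2}\E\Bigl[\sum_aP_a^2-\sum_{a,j}p_{aj}^2\Bigr].
\]
The bracket is nonnegative termwise, since $P_a^2=(\sum_j p_{aj})^2\ge\sum_j p_{aj}^2$ for nonnegative weights. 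This proves the monotonicity identity for $\rho\in(0,1)$.

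Continuity of $\rho\mapsto\E L_d(\beta;\rho)$ on $[0,1]$ then follows from dominated convergence, using $|L_d|\le d^{-1}\log N_d+\beta\max|X|/d$. Combined with monotonicity, this gives $\E L_d(\beta;\rho)\le\E L_d(\beta;0)$. Letting $d\to\infty$ and using the $L^1$ convergence in Theorem~\ref{thm:free}, applied at $\rho$ and at $\rho=0$, yields $F(\beta;\rho)\le F(\beta;0)$. At $\rho=0$ the case condition $\rho\le\alpha_1/\alpha$ holds, so Theorem~\ref{thm:free} gives $F(\beta;0)=\frem(\beta;\alpha,1)$.

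Theorem~\ref{thm:free} is stated for $0<\rho<1$, so the two endpoints must be handled separately.
\begin{itemize}
\item At $\rho=0$ the scores are i.i.d. $N(0,d)$, and the one-level REM limit $\frem(\beta;\alpha,1)$ is the classical result, also recovered from \eqref{eq:variational} with the cluster constraint inactive.
\item At $\rho=1$ one can either argue directly or check formula \eqref{eq:split}. Directly, $\Z_d=m_d\sum_ae^{\beta\sqrt d G_a}$, so $L_d\to\alpha_2+\frem(\beta;\alpha_1,1)$. This is at most $\frem(\beta;\alpha,1)$, because $\frem(\cdot;\alpha,1)$ is the supremum over a larger constraint set in the variational formula. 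Alternatively, $F(\beta;\rho)\le F(\beta;0)$ extends to $\rho=1$ by taking the limit $\rho\uparrow1$ in \eqref{eq:split}.
\end{itemize}

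Finally, if $\beta s>\frem(\beta;\alpha,1)\ge F(\beta;\rho)$, then $s>F(\beta;\rho)/\beta$, and the retrieval dichotomy in Theorem~\ref{thm:free} gives $A_*\pto1$. At $\rho=1$ the same conclusion follows from the log-odds identity \eqref{eq:logodds}.

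The main obstacle is bookkeeping rather than substance: getting the interpolation derivative right. The diagonal variance is fixed, so only pairs inside the same cluster contribute, and the sign must come out as $-p_{aj}p_{ak}$. The endpoints $\rho\in\{0,1\}$ also need the separate treatment described above.
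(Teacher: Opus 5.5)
Your proposal is correct and is essentially the paper's proof. Both use Gaussian interpolation in which only the within-cluster off-diagonal covariances move, the log-sum-exp Hessian entry $-\beta^2p_{aj}p_{ak}$ for $j\ne k$, the identity $\sum_{j\ne k}p_{aj}p_{ak}=P_a^2-\sum_jp_{aj}^2$, continuity up to the endpoints, and $L^1$ convergence to pass the inequality to $F$.

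The differences are minor. You verify the interpolation formula by Stein's lemma on the $(G,H)$ parameterization, whereas the paper uses the Fourier/heat-equation Lemma~\ref{lem:interpolation}. You also treat $\rho\in\{0,1\}$ (the i.i.d.\ REM and exact duplication) more explicitly than the paper, which simply invokes continuity and convergence ``for every fixed $\rho$''.
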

The finite-$d$ derivative compares models with the same number of tokens and the same marginal $N(0,d)$ distribution for every score; only the correlation between tokens in the same cluster changes. It says that the expected log distractor weight decreases as $\rho$ increases. It does not say that $\Z_d$ decreases for every realized score array, and an ordering of the mean alone does not order finite-sample retrieval probabilities. The asymptotic threshold comparison additionally uses concentration and convergence to the deterministic free energy $F$. Adding a literal duplicate is a different operation: it adds another positive term to $\Z_d$ and therefore increases the softmax denominator.

\paragraph{What actually condenses?} We call
\begin{equation*}
 \pc:=\sum_aP_a^2,
 \qquad
 \pt:=\sum_{a,j}p_{aj}^2
\end{equation*}
the \emph{cluster participation} and \emph{token participation}, respectively. Throughout the paper, participation refers to these squared-mass sums. Equivalently, they are the conditional probabilities that two independent draws from the same distractor attention distribution choose the same cluster or the same token. If the attention mass is uniform over $K$ clusters or tokens, the corresponding participation equals $1/K$, so its reciprocal is the associated effective number of attended objects. Their expectations have the following limits (proof in Appendix~\ref{app:participation}):
\begin{equation}
 (\E\pc,\E\pt)\longrightarrow
 \begin{cases}
 ((1-b_1/\beta)_+,(1-b_2/\beta)_+),&b_1<b_2,\\
 ((1-b/\beta)_+,(1-b/\beta)_+),&b_1\ge b_2.
 \end{cases}\label{eq:participation}
\end{equation}
Here $x_+=\max(x,0)$. In the intermediate phase $b_1<\beta<b_2$, cluster participation is nonvanishing but maximum token mass tends to zero. Neither maximum cluster mass nor maximum token mass tends to one at any fixed finite $\beta$: their respective participation expectations remain strictly below one. These are distractor phases, distinct from the external planted-item transition.

\section{Count correction has a finite optimal temperature}
Ordinary softmax assigns one unnormalized weight $e^{\beta X_{aj}}$ to every token, so a cluster can receive more total mass simply because it contains more tokens. Count correction removes this multiplicity advantage using the supplied partition: a token with score $X$ in a group of size $m$ receives corrected weight $e^{\beta X}/m$, equivalently subtracting $\log m$ from its pre-softmax score $\beta X$.

We first consider the balanced model in which every distractor cluster has size $m_d$ and the planted item forms a singleton group. Each distractor weight is therefore divided by $m_d$, while the planted weight is unchanged. Define the count-corrected planted mass and its fixed-temperature signal threshold by
\begin{equation}
 \ac=\frac{e^{\beta X_*}}
 {e^{\beta X_*}+\Z_d(\beta)/m_d},\qquad
 \hc(\beta)=\frac{F(\beta)-\alpha_2}{\beta}.
 \label{eq:count}
\end{equation}
Applying the log-odds identity \eqref{eq:logodds} with the corrected distractor denominator $\Z_d/m_d$ gives
\begin{equation*}
 \frac{1}{\beta d}\log\frac{1-\ac}{\ac}\pto\hc(\beta)-s.
\end{equation*}
Thus count-corrected retrieval succeeds when $s>\hc(\beta)$ and fails when $s<\hc(\beta)$. By Theorem~\ref{thm:free}, the ordinary-attention threshold is $F(\beta)/\beta$; since $d^{-1}\log m_d\to\alpha_2$, count correction lowers it by exactly $\alpha_2/\beta$.

The correction does not change relative weights within or between the equally sized distractor clusters. If every group, including the relevant group, had the same size, the same additive shift would be applied to every pre-softmax score and the softmax weights would not change. In the present basic model the planted item is the unique singleton, so group size alone reveals its location; Corollary~\ref{cor:singletons} removes this shortcut by adding exponentially many singleton distractors.

\begin{theorem}[Optimal correction and adaptive-temperature lower bound]\label{thm:optimal}
For $\alpha_1,\alpha_2>0$ and $0<\rho<1$, the fixed-temperature retrieval threshold $\hc(\beta)$ is minimized uniquely at $\beta_{\mathrm{opt}}=\sqrt{2\alpha_1}$:
\begin{equation}
 \min_{\beta>0}\hc(\beta)=\sqrt{2\alpha_1}.
 \label{eq:optimal}
\end{equation}
Thus, no fixed $\beta$ retrieves when $s<\sqrt{2\alpha_1}$, whereas the fixed choice $\beta=\sqrt{2\alpha_1}$ retrieves when $s>\sqrt{2\alpha_1}$. Moreover, even if $\beta$ is selected after observing the realized distractor scores,
\begin{equation}
 \inf_{\beta>0}\frac{\log\Z_d(\beta)-\log m_d}{\beta d}
 \pto\sqrt{2\alpha_1}.\label{eq:adaptive}
\end{equation}
If $s<\sqrt{2\alpha_1}$, then $\sup_{\beta>0}\ac(\beta)\pto0$, even when $\beta$ is chosen after observing all scores. If $s>\sqrt{2\alpha_1}$, the fixed choice in \eqref{eq:optimal} gives $\ac\pto1$.
\end{theorem}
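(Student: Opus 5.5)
The plan has two parts. First, a deterministic minimization of $\hc(\beta)=(F(\beta)-\alpha_2)/\beta$ using the explicit branches of $F$ from Theorem~\ref{thm:free} and \eqref{eq:split}. Second, a uniform-in-$\beta$ upgrade of the convergence $L_d(\beta)\to F(\beta)$, which gives the adaptive statement \eqref{eq:adaptive}.

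\emph{Step 1: fixed temperature.} On every branch where $F(\beta)=\alpha+\beta^2/2$, we have $\hc(\beta)=\alpha_1/\beta+\beta/2$. By AM--GM this is at least $\sqrt{2\alpha_1}$, with equality only at $\beta=\sqrt{2\alpha_1}$.

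\begin{itemize}
\item If $\rho\le\alpha_1/\alpha$, this branch holds for $\beta\le b=\sqrt{2\alpha}$, and $\sqrt{2\alpha_1}<b$. For $\beta\ge b$, $\hc(\beta)=\sqrt{2\alpha}-\alpha_2/\beta$ is increasing, and it is continuous at $b$.
\item If $\rho>\alpha_1/\alpha$, the branch is $\beta\le b_1=\sqrt{2\alpha_1/\rho}$, and $b_1>\sqrt{2\alpha_1}$. On $[b_1,b_2]$, $\hc(\beta)=\sqrt{2\rho\alpha_1}+\beta(1-\rho)/2$ is increasing. On $[b_2,\infty)$, $\hc(\beta)=\sqrt{2\rho\alpha_1}+\sqrt{2(1-\rho)\alpha_2}-\alpha_2/\beta$ is increasing. $\hc$ is continuous at the breakpoints.
\end{itemize}

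In both regimes the value at the first breakpoint strictly exceeds $\sqrt{2\alpha_1}$ by strict AM--GM. Hence the minimizer is unique and \eqref{eq:optimal} holds. The fixed-$\beta$ success and failure claims then follow from the count-corrected log-odds limit stated before the theorem.

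\emph{Step 2: adaptive lower bound.} Write $\phi_d(\beta)=(\log\Z_d(\beta)-\log m_d)/(\beta d)$. Taking $\beta=\sqrt{2\alpha_1}$ gives $\inf_\beta\phi_d\le\phi_d(\sqrt{2\alpha_1})\pto\sqrt{2\alpha_1}$, which is the upper half of \eqref{eq:adaptive}. For the lower half, fix $\varepsilon>0$ and split $(0,\infty)$ into three ranges.

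\begin{itemize}
\item \emph{Small $\beta$.} Jensen gives $\log\Z_d(\beta)\ge\log N_d+\beta\bar X$, where $\bar X$ is the empirical mean of all scores. The mean $\bar X/d\pto0$, since its variance is $O(1/M_d)$. Hence $\phi_d(\beta)\ge(\alpha_1-o(1))/\beta+o(1)$, which exceeds $\sqrt{2\alpha_1}+1$ uniformly on $(0,\delta]$ for small $\delta$.
\item \emph{Large $\beta$.} $\log\Z_d(\beta)\ge\beta\max X_{aj}$, so $\phi_d(\beta)\ge\max X_{aj}/d-(\alpha_2+o(1))/\beta$. Here $\max X_{aj}/d\pto\lim_{\beta\to\infty}F(\beta)/\beta$, which is either $\sqrt{2\alpha}$ or $\sqrt{2\rho\alpha_1}+\sqrt{2(1-\rho)\alpha_2}$. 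In both cases this limit is strictly bigger than $\sqrt{2\alpha_1}$. I would obtain the convergence of the maximum from Theorem~\ref{thm:free} by sandwiching $\beta\max X\le\log\Z\le\beta\max X+\log N_d$ and letting $\beta\to\infty$. The bound therefore exceeds $\sqrt{2\alpha_1}+\eta$ for all $\beta\ge K$.
\item \emph{Compact range $[\delta,K]$.} $\frac{d}{d\beta}d^{-1}\log\Z_d$ is a Gibbs average of $X_{aj}/d$, bounded by $\max|X_{aj}|/d=O(1)$ with high probability. So $d^{-1}\log\Z_d$ is uniformly Lipschitz there. Pointwise convergence from Theorem~\ref{thm:free} on a finite grid then upgrades to uniform convergence of $\phi_d$ to $\hc$ on $[\delta,K]$. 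Since $\min\hc=\sqrt{2\alpha_1}$, we get $\inf_{[\delta,K]}\phi_d\ge\sqrt{2\alpha_1}-\varepsilon$ with high probability.
\end{itemize}

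\emph{Step 3: conclusion.} Since $\frac{1}{\beta d}\log\frac{1-\ac}{\ac}=\phi_d(\beta)-X_*/d$ exactly, we get $\inf_\beta$ of the log-odds $\ge\sqrt{2\alpha_1}-s-2\varepsilon>0$ with high probability when $s<\sqrt{2\alpha_1}$. This forces $\sup_\beta\ac(\beta)\le e^{-c\beta d}$, but the factor $\beta$ is a problem for small $\beta$. I would therefore rather bound $\log\frac{1-\ac}{\ac}\ge \beta d\,c$ only on $[\delta,\infty)$. On $(0,\delta]$ I would use the small-$\beta$ bound in its unnormalized form, $\log\frac{1-\ac}{\ac}\ge \alpha_1 d-o(d)-\beta(|X_*|+|\bar X|)\ge \alpha_1d/2$. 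Either way $\sup_\beta\ac\pto0$.

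The main obstacle is this uniformity in $\beta$ near $0$ and $\infty$, which pointwise free-energy convergence does not supply. The delicate point is the identification of $\max X_{aj}/d$ with the strictly larger ground-state value.
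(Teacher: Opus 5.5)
Your proposal is correct and follows the paper's proof essentially step for step. The shared structure is:
\begin{itemize}
\item a branchwise shape analysis of $\hc$ (you use AM--GM on the first branch where the paper uses derivative signs);
\item the Jensen lower bound for small $\beta$ and the maximum-term lower bound for large $\beta$, with $\max_{a,j}X_{aj}/d\pto v>\sqrt{2\alpha_1}$ obtained by sandwiching and sending $\beta\to\infty$;
\item Lipschitz-plus-grid uniform convergence on a compact $\beta$-interval;
\item the unnormalized Jensen bound to handle the factor $\beta$ near $0$ in the failure claim.
\end{itemize}
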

\begin{proof}[Proof idea; complete proof in Appendix~\ref{app:optimal}]
Before the first freezing point, $\hc(\beta)=\alpha_1/\beta+\beta/2$, whose derivative $-\alpha_1/\beta^2+1/2$ changes from negative to positive at $\beta=\sqrt{2\alpha_1}$. This value occurs before the first freezing point in both regimes of \eqref{eq:free}, and direct differentiation shows that every later formula for $\hc$ is increasing. For the data-dependent choice of $\beta$, convergence is uniform when $\beta$ stays in a fixed bounded interval. Jensen's inequality rules out $\beta\to0$, while the maximum distractor score rules out $\beta\to\infty$; hence a minimizing sequence cannot escape to either endpoint.
\end{proof}
Increasing $\beta$ makes the softmax place more weight on the largest scores. For count-corrected weights $e^{\beta X}/m$, the required signal $\hc(\beta)$ decreases until $\beta=\sqrt{2\alpha_1}$ and then increases; choosing a larger $\beta$ therefore requires a stronger planted signal rather than improving retrieval. With $M_d$ independent $N(0,d)$ distractor scores, the smallest $s$ permitting retrieval after optimizing $\beta$ is also $\sqrt{2\alpha_1}$, although the required $s$ at a particular fixed $\beta$ generally differs between the two models.

If one could instead use only the $M_d$ noise-free cluster components $\sqrt{\rho d}G_a$, the smallest required signal would be $\sqrt{2\rho\alpha_1}$. Count correction does not attain this smaller value because it divides out cluster size but retains the token-specific noise $\sqrt{(1-\rho)d}H_{aj}$; the averaging method below suppresses that noise.

\begin{proposition}[Upper bound on finite-sample failure probability]\label{prop:finite}
If $X_*=sd$ deterministically, then for any $0<\varepsilon<1$,
\begin{equation}
 \Prob\{\ac\le1-\varepsilon\}
 \le \min\left\{1,\frac{1-\varepsilon}{\varepsilon}
 M_d\exp\left[d(\beta^2/2-\beta s)\right]\right\}.
 \label{eq:finite}
\end{equation}
For $s>0$, the exponent is minimized at $\beta=s$, giving $M_d e^{-ds^2/2}(1-\varepsilon)/\varepsilon$. The bound needs only marginal $N(0,d)$ distractor scores and does not assume their independence.
\end{proposition}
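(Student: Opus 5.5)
The plan is to reduce failure to a statement about the corrected distractor weight and then apply Markov's inequality with only a first-moment computation. Since $\ac=e^{\beta X_*}/(e^{\beta X_*}+\Z_d/m_d)$, the event $\{\ac\le 1-\varepsilon\}$ can be rewritten by cross-multiplying. It is equivalent to
\begin{equation*}
 \varepsilon\, e^{\beta X_*}\le (1-\varepsilon)\,\Z_d(\beta)/m_d ,
 \qquad\text{i.e.}\qquad
 \frac{\Z_d(\beta)}{m_d}\ge \frac{\varepsilon}{1-\varepsilon}\,e^{\beta s d},
\end{equation*}
using $X_*=sd$ deterministically. No independence between $X_*$ and the distractors is used, because $X_*$ is a constant here.

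Markov's inequality applied to the nonnegative variable $\Z_d/m_d$ then gives
\begin{equation*}
 \Prob\{\ac\le1-\varepsilon\}\le \frac{1-\varepsilon}{\varepsilon}\,e^{-\beta s d}\,\frac{\E\Z_d(\beta)}{m_d}.
\end{equation*}
By linearity of expectation, $\E\Z_d=\sum_{a,j}\E e^{\beta X_{aj}}$. Each term is the Gaussian moment generating function $e^{\beta^2 d/2}$, since $X_{aj}\sim N(0,d)$ marginally. This step uses only the marginal laws, which justifies the final sentence of the statement. Hence $\E\Z_d/m_d=M_d e^{\beta^2 d/2}$, and the displayed bound follows. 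Capping at $1$ is trivial because the left-hand side is a probability.

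For the optimization claim, the exponent $\beta^2/2-\beta s$ is a convex quadratic in $\beta$ with minimizer $\beta=s$, which is admissible since $s>0$. Its minimum value is $-s^2/2$, giving $M_d e^{-ds^2/2}(1-\varepsilon)/\varepsilon$. There is no real obstacle. The only point needing care is the direction of the inequalities when rearranging the event, including that $\varepsilon<1$ so $1-\varepsilon>0$. One could also remark that this union-type bound recovers retrieval whenever $s>\sqrt{2\alpha_1}$, consistent with Theorem~\ref{thm:optimal}.
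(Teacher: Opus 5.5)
Your proposal is correct and follows the paper's proof essentially step for step. You rewrite the failure event as $\Z_d/m_d\ge[\varepsilon/(1-\varepsilon)]e^{\beta sd}$, compute $\E[\Z_d/m_d]=M_d e^{\beta^2 d/2}$ using only the marginal Gaussian moment generating function and linearity, apply Markov's inequality, and complete the square to get the optimum at $\beta=s$.
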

Let $\alpha_{1,d}=d^{-1}\log M_d$. The upper bound in \eqref{eq:finite} decreases exponentially with $d$ when
\begin{equation}
 \alpha_{1,d}+\frac{\beta^2}{2}-\beta s<0.
 \label{eq:finitecondition}
\end{equation}
This is possible exactly when $s^2>2\alpha_{1,d}$, and then \eqref{eq:finitecondition} is equivalent to
\begin{equation*}
 s-\sqrt{s^2-2\alpha_{1,d}}<\beta<s+\sqrt{s^2-2\alpha_{1,d}}.
\end{equation*}
Thus this upper bound is useful only when $\beta$ is neither too small nor too large. A small $\beta$ leaves substantial total mass on the many distractors, while a large $\beta$ amplifies rare high distractor scores in the moment bound. The proposition gives no failure claim outside this interval; only this particular upper bound ceases to guarantee success there.

\section{Averaging removes token-specific score noise}
Count correction replaces each cluster's total weight $\sum_j e^{\beta X_{aj}}$ by the average of its exponentiated token scores, $m_d^{-1}\sum_j e^{\beta X_{aj}}$. This removes the factor caused by cluster size, but a single unusually large token-specific component $H_{aj}$ can still dominate the average. Averaging scores first performs a different operation: for each cluster $a$, compute
\begin{equation*}
 \bar X_a=\frac{1}{m_d}\sum_{j=1}^{m_d}X_{aj},
\end{equation*}
and then assign one attention weight $e^{\beta\bar X_a}$ to that cluster. The planted item remains a singleton with score $X_*$. Define
\begin{equation*}
 \Z_d^{\mathrm{avg}}=\sum_ae^{\beta\bar X_a},\quad
 \am=\frac{e^{\beta X_*}}{e^{\beta X_*}+\Z_d^{\mathrm{avg}}},
 \quad \hm(\beta)=\frac{\frem(\beta;\alpha_1,\rho)}{\beta}.
\end{equation*}
Here $\Z_d^{\mathrm{avg}}$ is the total distractor weight after replacing every cluster by its mean score, $\am$ is the resulting planted-item mass, and $\hm(\beta)$ will be the required planted signal at fixed $\beta$.

Under the Gaussian score model in \eqref{eq:model}, averaging gives
\begin{equation*}
 \bar X_a=\sqrt{\rho d}\,G_a+
 \sqrt{(1-\rho)d}\,\frac{1}{m_d}\sum_{j=1}^{m_d}H_{aj}.
\end{equation*}
The second term is Gaussian with variance $(1-\rho)d/m_d$, so it becomes negligible on the order-$d$ scale because $m_d$ grows exponentially. Thus the averaged cluster scores asymptotically behave like the $M_d$ independent shared components $\sqrt{\rho d}G_a$. This is the Gaussian model used for the asymptotic formulas in the theorem below; the finite-score comparison in its first sentence does not require Gaussian scores.
\begin{theorem}[Count correction versus averaging]\label{thm:averaging}
For every fixed collection of token scores, $\Z_d^{\mathrm{avg}}\le\Z_d/m_d$ and $\am\ge\ac$. Under the Gaussian score model in \eqref{eq:model},
\begin{equation*}
 d^{-1}\log\Z_d^{\mathrm{avg}}\pto\frem(\beta;\alpha_1,\rho),\qquad
 \frac{1}{\beta d}\log\frac{1-\am}{\am}\pto\hm(\beta)-s.
\end{equation*}
Thus averaging retrieves when $s>\hm(\beta)$ and fails when $s<\hm(\beta)$. Optimizing over $\beta$ gives
\begin{equation}
 \inf_{\beta>0}\hm(\beta)=\sqrt{2\rho\alpha_1}
 <\sqrt{2\alpha_1}=\inf_{\beta>0}\hc(\beta).
 \label{eq:gap}
\end{equation}
Consequently, if $\sqrt{2\rho\alpha_1}<s<\sqrt{2\alpha_1}$, averaging retrieves with any fixed $\beta\ge b_1$, while count correction fails even if its $\beta$ is selected after observing all distractor scores.
\end{theorem}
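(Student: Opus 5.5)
The first claim is a pointwise comparison, and I would prove it by Jensen's inequality cluster by cluster. Since $x\mapsto e^{\beta x}$ is convex, for every realized score array and every $a$,
\[
 e^{\beta\bar X_a}=\exp\Big(\beta\,\frac1{m_d}\sum_j X_{aj}\Big)\le\frac1{m_d}\sum_j e^{\beta X_{aj}}.
\]
Summing over $a$ gives $\Z_d^{\mathrm{avg}}\le\Z_d/m_d$. Because $u\mapsto e^{\beta X_*}/(e^{\beta X_*}+u)$ is decreasing in $u$, it follows that $\am\ge\ac$. This step uses no distributional assumption.

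For the free energy I would use the decomposition $\bar X_a=\sqrt{\rho d}\,G_a+\sqrt{(1-\rho)d}\,\bar H_a$. Here $\bar H_a\sim N(0,1/m_d)$, and these variables are independent of each other and of the $G_a$.

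\emph{Step 1: control the residual.} A union bound gives $\max_a|\sqrt{(1-\rho)d}\,\bar H_a|\le d\,\delta$ with probability at least $1-2M_d\exp(-\delta^2 d\,m_d/(2(1-\rho)))$. Since $m_d$ grows exponentially, this probability tends to one for every fixed $\delta>0$.

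\emph{Step 2: sandwich.} On that event, $|\log\Z_d^{\mathrm{avg}}-\log\sum_a e^{\beta\sqrt{\rho d}G_a}|\le\beta\delta d$.

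\emph{Step 3: apply the one-level REM.} The sum $\sum_a e^{\beta\sqrt{\rho d}G_a}$ is a one-level REM with $M_d$ i.i.d. $N(0,\rho d)$ energies. Its normalized log converges in probability to $\frem(\beta;\alpha_1,\rho)$. I would get this from Theorem~\ref{thm:free}, which covers $M_d$ independent scores through the degenerate case $\alpha_2\to0$ of the same one-level computation, or by citing the classical REM argument directly. That argument is a first-moment upper bound together with a second-moment lower bound on the counts $\#\{a:G_a\sqrt{\rho d}\approx xd\}$.

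\emph{Step 4: conclude.} Letting $\delta\downarrow0$ gives $d^{-1}\log\Z_d^{\mathrm{avg}}\pto\frem(\beta;\alpha_1,\rho)$. The log-odds statement then follows from the identity \eqref{eq:logodds} with $\Z_d$ replaced by $\Z_d^{\mathrm{avg}}$, together with $X_*/d\pto s$.

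For the optimization, $\hm(\beta)=\alpha_1/\beta+\beta\rho/2$ when $\beta\le b_1=\sqrt{2\alpha_1/\rho}$, and $\hm(\beta)=\sqrt{2\rho\alpha_1}$ when $\beta\ge b_1$. On the first branch the AM--GM inequality gives $\hm\ge\sqrt{2\rho\alpha_1}$, with equality exactly at $b_1$. So $\hm$ is nonincreasing and is constant, equal to $\sqrt{2\rho\alpha_1}$, on $[b_1,\infty)$. This gives the infimum in \eqref{eq:gap}. The strict inequality holds because $\rho<1$, and the equality $\inf_{\beta>0}\hc(\beta)=\sqrt{2\alpha_1}$ is Theorem~\ref{thm:optimal}.

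For the final claim, take $\sqrt{2\rho\alpha_1}<s<\sqrt{2\alpha_1}$. Any fixed $\beta\ge b_1$ has $\hm(\beta)=\sqrt{2\rho\alpha_1}<s$, so averaging retrieves. Count correction fails even with adaptive $\beta$ by \eqref{eq:adaptive}, which Theorem~\ref{thm:optimal} already provides.

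The only step needing care is the uniform control of the residual $\bar H_a$ over exponentially many clusters, and the union bound handles it because $m_d$ grows exponentially.
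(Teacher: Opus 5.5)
Your proposal is correct and follows the paper's proof essentially step for step. The steps are: cluster-wise Jensen for the pointwise comparison, a union bound on the averaged residual $\sqrt{(1-\rho)d}\,\bar H_a$ over the $M_d$ clusters, the sandwich reducing $\Z_d^{\mathrm{avg}}$ to a one-level REM with $N(0,\rho d)$ energies, and the explicit two-branch form of $\hm$, where you use AM--GM and the paper differentiates. One small precision: the uniform failure $\sup_{\beta>0}\ac(\beta)\pto0$ is the final assertion of Theorem~\ref{thm:optimal}, which needs an extra Jensen bound for $\beta\downarrow0$ beyond \eqref{eq:adaptive} itself; since you invoke that theorem, nothing is missing.
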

The first two inequalities follow from Jensen's inequality. They mean that, for the same fixed token scores, averaging before exponentiation gives a distractor denominator no larger than count correction and therefore gives the planted item at least as much attention mass. The complete proof is in Appendix~\ref{app:averaging}.

The asymptotic improvement is stronger: averaging suppresses the token-specific noise that count correction leaves inside $m_d^{-1}\sum_j e^{\beta X_{aj}}$. The interval of signals where averaging succeeds and count correction fails has width $\sqrt{2\alpha_1}(1-\sqrt\rho)$. This result assumes that the groups are supplied correctly and that the planted item is a singleton; it does not claim that arbitrary clustering or merging improves attention.

\paragraph{Hidden-singleton variant.} In the basic model, the planted item is the only singleton group, so its group size reveals its identity before any scores are examined. To remove this shortcut, add $L_d=\lceil e^{a_0d}\rceil$ independent $N(0,d)$ distractor scores, each in its own singleton group, where $0<a_0<\alpha_1$. Place the planted item uniformly at random among these $L_d+1$ singleton groups and hide its location. Define the two critical signal levels
\begin{equation}
 c_{\mathrm{avg},0}:=\max\{\sqrt{2\rho\alpha_1},\sqrt{2a_0}\}
 <c_{\mathrm{cnt}}:=\sqrt{2\alpha_1}.
 \label{eq:singletonfloor}
\end{equation}
\begin{corollary}[Hidden-singleton separation]\label{cor:singletons}
In this augmented model, count correction fails for every data-dependent choice of $\beta$ when $s<c_{\mathrm{cnt}}$, while the fixed choice $\beta=c_{\mathrm{cnt}}$ retrieves when $s>c_{\mathrm{cnt}}$. Averaging retrieves whenever $s>c_{\mathrm{avg},0}$ using any fixed $\beta\ge\max\{b_1,\sqrt{2a_0}\}$. Hence every $s\in(c_{\mathrm{avg},0},c_{\mathrm{cnt}})$ is retrieved by averaging and missed by count correction for every choice of $\beta$. If $a_0\le\rho\alpha_1$, then $c_{\mathrm{avg},0}=\sqrt{2\rho\alpha_1}$ and the original separation interval is unchanged. A method that observes only the partition, and not the scores, identifies the target with probability at most $1/(L_d+1)$.
\end{corollary}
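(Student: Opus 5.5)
We treat the augmented denominators as sums of two independent pieces, the clustered part already analyzed and a singleton part $\Z^{\mathrm{sing}}_d(\beta)=\sum_{\ell=1}^{L_d}e^{\beta Y_\ell}$ with $Y_\ell$ i.i.d.\ $N(0,d)$. Under count correction the singletons keep weight $e^{\beta Y_\ell}$, since their group size is one, so the corrected denominator is $\Z_d/m_d+\Z^{\mathrm{sing}}_d$. Under averaging the singleton scores are unchanged, so the denominator is $\Z^{\mathrm{avg}}_d+\Z^{\mathrm{sing}}_d$. Placing the planted item among the singletons changes nothing in these formulas, because the methods act on the supplied partition and scores and the planted weight is $e^{\beta X_*}$ in either case.

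By Theorem~\ref{thm:free} with one level (or the classical REM), $d^{-1}\log\Z^{\mathrm{sing}}_d\pto\frem(\beta;a_0,1)$. The log of a sum of two exponentially large terms is asymptotically the maximum of the two exponents. This gives the following thresholds:
\[
\text{count:}\quad \max\{\beta\hc(\beta),\frem(\beta;a_0,1)\}/\beta,\qquad
\text{average:}\quad \max\{\frem(\beta;\alpha_1,\rho),\frem(\beta;a_0,1)\}/\beta .
\]

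\textbf{Count correction, lower bound.} The added singletons only enlarge the denominator, so $\ac$ in the augmented model is at most the basic $\ac$ evaluated on the same clustered scores. Theorem~\ref{thm:optimal}, eq.~\eqref{eq:adaptive}, already gives $\sup_\beta\ac\pto0$ when $s<\sqrt{2\alpha_1}$, uniformly over data-dependent $\beta$. Monotonicity transfers this conclusion directly.

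\textbf{Count correction, achievability at $\beta=c_{\mathrm{cnt}}$.} We need $\frem(c_{\mathrm{cnt}};a_0,1)/c_{\mathrm{cnt}}\le\sqrt{2\alpha_1}$. Since $\frem(\beta;a_0,1)\le\beta\sqrt{2a_0}$ is false in general, we use instead $\frem(\beta;a_0,1)/\beta\ge\sqrt{2a_0}$ with minimum attained at $\beta=\sqrt{2a_0}$. At $\beta=\sqrt{2\alpha_1}>\sqrt{2a_0}$ the singletons are frozen, so $\frem/\beta=\sqrt{2a_0}<\sqrt{2\alpha_1}$. Combined with $\hc(\sqrt{2\alpha_1})=\sqrt{2\alpha_1}$, the threshold is exactly $\sqrt{2\alpha_1}$, and $s>c_{\mathrm{cnt}}$ gives retrieval.

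\textbf{Averaging.} For $\beta\ge b_1$ we have $\frem(\beta;\alpha_1,\rho)/\beta=\sqrt{2\rho\alpha_1}$ by \eqref{eq:rem}. For $\beta\ge\sqrt{2a_0}$ we have $\frem(\beta;a_0,1)/\beta=\sqrt{2a_0}$. So at any fixed $\beta\ge\max\{b_1,\sqrt{2a_0}\}$ the threshold equals $c_{\mathrm{avg},0}$, and retrieval follows from Theorem~\ref{thm:averaging} applied to each piece. The separation interval is then immediate. The strict inequality in \eqref{eq:singletonfloor} holds because $\rho<1$ and $a_0<\alpha_1$. The case $a_0\le\rho\alpha_1$ is arithmetic.

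\textbf{Partition-only methods.} The law of the partition is invariant under permuting the $L_d+1$ singleton groups, and the target's position is uniform and independent of the partition. Any rule that depends only on the partition (possibly randomized) therefore selects the target with probability exactly averaged to $1/(L_d+1)$.

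\textbf{Main obstacle.} The main delicate step is the max-of-two-exponents limit. We need $d^{-1}\log(U+V)\pto\max(u,v)$ when $d^{-1}\log U\pto u$ and $d^{-1}\log V\pto v$. This follows from $\max(U,V)\le U+V\le2\max(U,V)$. No independence is needed, and the fixed-$\beta$ limits suffice for the achievability parts. The adaptive part is handled entirely by monotonicity, so no uniformity over $\beta$ is required for the singleton term.
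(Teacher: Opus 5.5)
Your proposal is correct and follows essentially the same route as the paper. The shared steps are the bound $\max\{\log U,\log V\}\le\log(U+V)\le\max\{\log U,\log V\}+\log 2$ for the fixed-$\beta$ limits, samplewise monotonicity $A^{\mathrm{cnt}}_{*,0}(\beta)\le\ac(\beta)$ to inherit all-temperature failure from Theorem~\ref{thm:optimal}, the frozen singleton branch $\frem(\beta;a_0,1)/\beta=\sqrt{2a_0}$ for $\beta\ge\sqrt{2a_0}$ at $\beta=c_{\mathrm{cnt}}$ and for averaging, and the uniform-placement count for partition-only rules. Two cosmetic points: the aside that $\frem(\beta;a_0,1)\le\beta\sqrt{2a_0}$ ``is false in general'' is unnecessary, since equality holds on the frozen branch and that is all you use; and a partition-only rule succeeds with probability $\sum_\ell q_\ell/(L_d+1)\le 1/(L_d+1)$, not exactly $1/(L_d+1)$, because it may put mass outside the singletons.
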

\begin{proof}[Proof idea; complete proof in Appendix~\ref{app:averaging}]
The augmented distractor denominator is the sum of the original clustered contribution and the new singleton contribution. After taking $d^{-1}\log$, this sum has the larger of their two limiting values. Adding distractors can only decrease the planted mass, so the adaptive-temperature failure result for count correction remains valid. For averaging, the clustered representatives require signal $\sqrt{2\rho\alpha_1}$ and the singleton distractors require signal $\sqrt{2a_0}$; their maximum is $c_{\mathrm{avg},0}$. The partition-only bound follows because the hidden target is uniform among the $L_d+1$ singleton groups.
\end{proof}
The hidden-singleton construction removes identification from group size alone, but it still assumes that the target is a singleton rather than a member of a mixed relevant--irrelevant group.

\section{Practical meaning and scope}
The theory assumes that the groups are supplied; it does not infer them from the keys. Subject to that assumption, the conclusions are stable to small errors, but the two corrections have different computational and semantic consequences.

\paragraph{Imperfect group statistics.} Only relative count errors matter for count correction. If positive estimates satisfy
\begin{equation}
 \max_{i\ne *}\left|
 \log(\widehat m_i/m_i)-\log(\widehat m_*/m_*)
 \right|\le r_d,
 \label{eq:relativeerror}
\end{equation}
then using the estimated counts changes $\log[(1-A_*)/A_*]$, the log ratio of total distractor mass to planted mass, by at most $r_d$. In other words, if $D$ is this log ratio under the true counts and $\widehat D$ is the same quantity under the estimated counts, then $|\widehat D-D|\le r_d$. Likewise, if every averaged score has error at most $u_d$, the corresponding log ratio changes by at most $2\beta u_d$. Thus errors smaller than order $d$ do not change the fixed-$\beta$ conclusions away from the success--failure boundary. These bounds assume the groups are already available; they do not justify any particular clustering procedure. Proposition~\ref{prop:robust} gives the deterministic proof in Appendix~\ref{app:averaging}.

\paragraph{Cost and interpretation.} Count correction still scores all $N_d$ tokens, so its per-query cost remains $O(N_dd)$. Averaging can instead precompute $M_d$ centroids in $O(N_dd)$ time and then score only those centroids in $O(M_dd)$ time per query. If all values have norm at most $R$, target mass $A_*$ guarantees output error at most $2R(1-A_*)$. This makes successful mass retrieval sufficient for value recovery, but not necessary; Proposition~\ref{prop:values} gives the proof in Appendix~\ref{app:averaging}. It also explains an important limitation: averaging may erase the target when a supplied group mixes relevant and irrelevant items.

\section{How broadly does the result apply?}
The key result is a gap between the smallest planted signals required by the two methods: averaging succeeds above $\sqrt{2\rho\alpha_1}$, whereas count correction requires $\sqrt{2\alpha_1}$, as shown in \eqref{eq:gap}.

\paragraph{A fixed finite score array.} Before taking $d\to\infty$, consider one realized collection of distinct distractor scores $x_1,\ldots,x_{N_d}$ as fixed numbers. Their softmax weights at temperature $\beta$ are $p_i(\beta)=e^{\beta x_i}/\sum_j e^{\beta x_j}$, and their entropy $S(p)=-\sum_i p_i\log p_i$ measures how broadly the distractor attention is spread. Proposition~\ref{prop:entropy} shows that the best count-correction temperature is exactly the point where $S(p(\beta))=\log m_d$, or equivalently where the effective number $e^{S(p)}$ of attended distractors is $m_d$. This optimization identity is known as entropic value-at-risk (EVaR); it is an existing result used here as an oracle benchmark, not a new contribution.

\paragraph{Deeper and non-Gaussian hierarchies.} The two-level model has clusters and tokens, but the same comparison holds for any fixed number of Gaussian levels, such as source, document, passage, and token (Theorem~\ref{thm:hierarchy}). Gaussianity can also be relaxed when each level contributes centered, nondegenerate independent increments whose cumulant generating function is finite on $\mathbb R$ and has derivative tending to $\pm\infty$ at the two tails. For an increment $Y_\ell$, the cumulant generating function is $\Lambda_\ell(t)=\log\E e^{tY_\ell}$; it records the exponential-tail behavior that determines the retrieval threshold. For a Gaussian increment it is simply a quadratic function of $t$, recovering the square-root formulas above. Theorem~\ref{thm:cramer} proves the analogous count-versus-averaging gap for this non-Gaussian class. Full assumptions and proofs are in Appendices~\ref{app:entropy}--\ref{app:cramer}; the results do not cover bounded-support saturation regimes, arbitrary learned score distributions, or unknown groups.

\paragraph{Boundary cases and heterogeneous groups.} Appendix~\ref{app:extensions} records several extensions that delimit the basic comparison. Proposition~\ref{prop:partial} gives the optimal threshold when only a fraction $\gamma\log m_d$ of the multiplicity correction is subtracted. Proposition~\ref{prop:types} allows a fixed finite collection of distractor types with unequal cluster sizes and also separates retrieval of a relevant group from retrieval of one member. Proposition~\ref{prop:duplicates} characterizes the inverse-count rule through exact replication invariance, while Corollary~\ref{cor:neff} shows that, in the two-transition case, no single temperature-independent effective context length can reproduce the full free-energy curve. The same appendix also records the endpoint and conventional-scaling checks used to delimit the asymptotic model.

\section{Experiments and limitations}
\label{sec:experiments}
The Gaussian simulations check the formulas within the stated model: finite-array temperature optimization approaches the predicted threshold, and cluster and token participation approach their respective limits. Separate constrained optimizations reproduce the free energy across 120 parameter settings and check the finite-array, deeper-hierarchy, and non-Gaussian extensions. Figure~\ref{fig:checks} and the full protocols are in Appendix~\ref{app:numerics}. These synthetic checks concern the theory itself. The following frozen-encoder experiment instead provides a controlled illustration of the two operations on learned sentence scores and of failure modes outside the theorem's assumptions; it is not a test of the Gaussian asymptotic law.

\paragraph{Frozen sentence retrieval.} We use BANKING77, a collection of banking questions with 77 intent labels \citep{casanueva2020banking}, and the frozen \texttt{all-MiniLM-L6-v2} sentence encoder \citep{minilm2021model}. Training sentences supply the keys; disjoint subsets of the official test split supply 308 validation and 924 evaluation queries. Scores are cosine similarities. Each context contains all 77 intents, initially with one relevant key and 38 singleton distractors. The other 38 intents contribute four or eight same-intent keys, or eight exact copies. Labels supply the groups; same-intent sentences need not be paraphrases. Three seeded context draws are shared by all methods. These are controlled contexts, not the standard intent-classification benchmark.

Ordinary attention and count correction weight individual keys. Averaging uses one mean score and one mean value per group, without renormalizing the mean key. Each value indicates its key's intent, so the relevant coordinate of the output is the total relevant value mass, equal to $A_*$ when the target is a singleton. Each method independently selects a cosine-score multiplier from 81 values in $[0.25,1024]$ by minimizing validation negative log relevant value mass. Evaluation queries are not used for this choice. Full settings, uncertainty estimates, and additional metrics are in Appendix~\ref{app:learned}.

\begin{table}[t]
\centering
\small
\caption{Mean relevant value mass on 924 evaluation queries. Each method uses its own validation-selected temperature. The last column gives the paired averaging-minus-count difference with a 95\% intent-block bootstrap interval. Groups are supplied using dataset labels.}
\label{tab:learned}
\begin{tabular}{lrrrr}
\toprule
Context & Ordinary & Count & Average & Difference [95\% interval] \\
\midrule
One key per intent & 0.379 & 0.379 & 0.379 & $+0.000$ [$+0.000$, $+0.000$] \\
Exact copies (8) & 0.288 & 0.379 & 0.379 & $+0.000$ [$+0.000$, $+0.000$] \\
Same intent (4) & 0.267 & 0.345 & 0.406 & $+0.062$ [$+0.056$, $+0.067$] \\
Same intent (8) & 0.234 & 0.342 & 0.440 & $+0.098$ [$+0.088$, $+0.108$] \\
Four relevant keys & 0.614 & 0.626 & 0.465 & $-0.161$ [$-0.181$, $-0.142$] \\
Mixed relevant group & 0.234 & 0.243 & 0.005 & $-0.238$ [$-0.269$, $-0.208$] \\
\bottomrule
\end{tabular}
\end{table}

\paragraph{Multiplicity and score variation.} Exact copies make count correction and averaging identical, as expected. With eight distinct keys per distractor group, their mean relevant masses are instead $0.342$ and $0.440$ after separate temperature selection (Table~\ref{tab:learned}); the paired gain is $0.098$ with 95\% interval $[0.088,0.108]$. Selecting one random representative per group gives $0.375$, so retaining only one key does not explain the full gain. However, a label-aware oracle that chooses the best count-correction temperature separately for each evaluation query attains $0.511$. Thus the learned-score experiment supports the distinction between the operations, but does not establish the theorem's all-temperature separation for these sentence scores.

\paragraph{When averaging loses relevant evidence.} Giving the relevant intent four distinct keys reverses the comparison: count correction achieves $0.626$ relevant mass and averaging $0.465$. Variation within a relevant group can contain useful matches to the query. Moving three distractor keys into the singleton's group is more damaging: averaging reduces relevant mass to $0.005$, versus $0.243$ for count correction. Here averaging also dilutes the relevant value to one quarter of the group's value. These cases are outside the singleton-target guarantee and show why group membership alone does not justify averaging.

\paragraph{What remains outside the theory.} The model assumes exponentially many distractors, supplied groups, and a singleton target. Its exact phase diagram assumes Gaussian scores, while the non-Gaussian result still requires centered, nondegenerate independent increments with finite all-real exponential moments and steep cumulants. The temperature scaling also differs from conventional fixed scaled-dot-product attention. Our sentence experiment uses bounded cosine scores and label-supplied groups; it does not validate these asymptotic assumptions or a grouping algorithm. Appendix~\ref{app:transformer} reports a causal intervention on pre-softmax self-attention logits in a frozen DistilBERT model. It confirms that averaging suppresses within-group score extremes in actual attention, but its downstream effects are small, metric-dependent, and position-sensitive in a task on which the model performs near chance; it therefore does not support a general performance advantage.

Count correction removes multiplicity while retaining variation within a group. Averaging removes that variation as well. The theory shows why this yields a strict retrieval advantage against grouped distractors; the experiment shows why the same operation can remove useful evidence from a relevant group.

\section*{AI use statement}
In this work, the author used generative AI tools to assist with writing mathematical proofs, proposing and refining hypotheses, designing and evaluating the research methodology and numerical experiments, implementing the methods and verification code. The author also used generative AI tools to identify and check relevant literature and to improve the manuscript's clarity, structure, and formatting. The author takes full responsibility for the final content of the paper, including all text, claims, proofs, code, and artifacts produced with the assistance of generative AI.
\bibliography{bib}
\bibliographystyle{amsplain}
\clearpage
\appendix
\section{Free energy and Gaussian comparison: complete proofs}
\label{app:free}
Throughout this appendix, $d\to\infty$, $\alpha_1,\alpha_2>0$, and $\alpha=\alpha_1+\alpha_2$. We use $M_d=\lceil e^{\alpha_1d}\rceil$ clusters and $m_d=\lceil e^{\alpha_2d}\rceil$ tokens per cluster, as in \eqref{eq:population}; hence $N_d=M_dm_d$ and $d^{-1}\log M_d\to\alpha_1$, $d^{-1}\log m_d\to\alpha_2$, and $d^{-1}\log N_d\to\alpha$. The ceiling functions do not affect any limit. Unless stated otherwise, inverse temperatures are fixed and positive, and $\pto$ denotes convergence in probability.

\subsection{Counting states subject to the constraints}
\begin{lemma}[Two-level counting principle]\label{lem:counting} Let $a=1,\ldots,M_d$ index clusters and $j=1,\ldots,m_d$ index tokens within each cluster. Let $U_a\sim N(0,v_1d)$ and $V_{aj}\sim N(0,v_2d)$ be mutually independent, where $v_1,v_2>0$. For $X_{aj}=U_a+V_{aj}$, the limiting free energy is
\begin{equation}
 \max_{\substack{x^2/(2v_1)\le\alpha_1\\x^2/(2v_1)+y^2/(2v_2)\le\alpha}}
 \left\{\alpha-\frac{x^2}{2v_1}-\frac{y^2}{2v_2}+\beta(x+y)\right\}.
 \label{eq:generalvariational}
\end{equation}
Convergence holds in probability and in $L^1$.
\end{lemma}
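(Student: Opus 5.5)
Our plan is to prove Lemma~\ref{lem:counting} by the standard GREM counting (second-moment) method. We first show that $L_d=d^{-1}\log\Z_d$ concentrates around its mean, and then identify $\lim d^{-1}\E\log\Z_d$ through matching upper and lower bounds obtained from level-set counting.

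\emph{Concentration.} The map $(U,V)\mapsto\log\Z_d$, written in terms of standard Gaussians, is Lipschitz with constant $\beta\sqrt{d\max(v_1,v_2)}$. This holds because its gradient is the Gibbs weight vector scaled by the standard deviations, and that vector has Euclidean norm at most $\beta\sqrt{d\max(v_1,v_2)}$. Gaussian concentration then gives $\Prob\{|L_d-\E L_d|>t\}\le 2e^{-dt^2/(2\beta^2\max(v_1,v_2))}$. Convergence in probability and in $L^1$ therefore both follow once we show $\E L_d\to$ \eqref{eq:generalvariational}. An alternative is Borel--Cantelli together with a direct limit.

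\emph{Upper bound.} Discretize the parent value $x$ and the child value $y$ on a grid of mesh $\delta$ over a compact box $[-K,K]^2$. First, the maximum of $|U_a|$ and $|V_{aj}|$ is $O(d)$ with overwhelming probability, which handles the tails outside the box. Next, let $\mathcal N_1(x)$ be the number of clusters with $U_a/d\in[x,x+\delta)$, and let $\mathcal N_{12}(x,y)$ be the number of pairs with $(U_a/d,V_{aj}/d)$ in the corresponding cell. Their expectations are
\begin{equation*}
 \E\mathcal N_1\approx e^{d(\alpha_1-x^2/(2v_1))},\qquad \E\mathcal N_{12}\approx e^{d(\alpha-x^2/(2v_1)-y^2/(2v_2))}.
\end{equation*}
Markov's inequality shows that, with high probability, any cell with $\alpha_1-I_1(x)<-\epsilon$ is empty. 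In that case all of its pairs are empty, so the cell contributes nothing. The same holds for cells with $\alpha-I_1-I_2<-\epsilon$. For every remaining cell, Markov's inequality bounds $\mathcal N_{12}$ by $e^{d(\alpha-I_1-I_2+\epsilon)}$. Summing over the polynomially many cells gives
\begin{equation*}
 \limsup L_d\le \max\{\alpha-I_1-I_2+\beta(x+y)\}+O(\epsilon+\beta\delta)
\end{equation*}
over the relaxed constraint set. Letting $\epsilon,\delta\to0$ and using continuity of the constrained maximum in the constraint levels yields \eqref{eq:generalvariational}. The key point is that the parent constraint is applied through emptiness of parent cells, not through $I_2(y)\le\alpha_2$.

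\emph{Lower bound.} Fix a feasible point $(x,y)$ with strict inequalities $I_1(x)<\alpha_1$ and $I_1+I_2<\alpha$; by continuity this suffices. We split into two cases, and the main obstacle is the merged case.

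If $I_2(y)<\alpha_2$, the argument is straightforward. By a second moment on independent parents, $\mathcal N_1(x)\ge e^{d(\alpha_1-I_1-\epsilon)}$ with high probability. Conditionally on each such parent, the $m_d$ children are i.i.d., so by Chebyshev or Chernoff at least $e^{d(\alpha_2-I_2-\epsilon)}$ children land in the $y$-cell. Hence $\Z_d\ge e^{d(\alpha-I_1-I_2+\beta(x+y)-O(\epsilon+\delta))}$.

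If $I_2(y)\ge\alpha_2$ (the merged regime), a typical parent has no such child. We would instead use a direct second-moment bound on $\mathcal N_{12}(x,y)$. Its variance splits into a diagonal term, a same-parent term, and a cross-parent term. The cross-parent term vanishes by independence. The same-parent term is at most
\begin{equation*}
 M_dm_d^2\,\Prob(\text{parent cell})\,\Prob(\text{child cell})^2\approx(\E\mathcal N_{12})^2e^{d(I_1-\alpha_1)}=o\big((\E\mathcal N_{12})^2\big),
\end{equation*}
precisely because $I_1<\alpha_1$. The diagonal term is $\E\mathcal N_{12}$, which is $o\big((\E\mathcal N_{12})^2\big)$ because $\E\mathcal N_{12}$ grows exponentially. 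Paley--Zygmund then gives $\mathcal N_{12}\ge\frac12\E\mathcal N_{12}$ with probability tending to one. This single argument in fact covers both cases, and combining it with the upper bound and the concentration step finishes the proof.

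Finally, specializing to $v_1=\rho$ and $v_2=1-\rho$ recovers \eqref{eq:variational}.
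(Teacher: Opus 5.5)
Your proof is correct and follows the paper's argument: binned counting, with Markov-type emptiness of parent bins and joint bins for the upper bound, and a lower bound at strictly feasible points that needs no condition $I_2(y)<\alpha_2$ (your unconditional variance split is the law-of-total-variance form of the paper's conditioning on $(U_a)_a$ plus a binomial Chernoff bound); the only variation is that you upgrade to $L^1$ via Gaussian Lipschitz concentration, whereas the paper uses a uniform-integrability bound on $d^{-1}\max_{a,j}|X_{aj}|$. Two harmless slips: the Lipschitz constant is $\beta\sqrt{d(v_1+v_2)}$, not $\beta\sqrt{d\max(v_1,v_2)}$, since $v_1\sum_aP_a^2+v_2\sum_{a,j}p_{aj}^2$ approaches $v_1+v_2$ when one token dominates; and Paley--Zygmund at $\theta=1/2$ only gives probability at least $1/4-o(1)$, so use Chebyshev with your bound $\Var(\mathcal N_{12})=o\big((\E\mathcal N_{12})^2\big)$ (or note that, given your concentration step, a constant-probability lower bound already suffices).
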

\begin{proof}
Write $I_1(x)=x^2/(2v_1)$ and $I_2(y)=y^2/(2v_2)$. For a fixed interval $B$ of positive length,
\begin{equation}
 \frac1d\log\Prob(U_a/d\in B)\xrightarrow{d\to\infty}-\inf_{x\in B}I_1(x).
 \label{eq:gaussianldp}
\end{equation}
Since $U_a/d\sim N(0,v_1/d)$, its standard deviation is $\sqrt{v_1/d}$ and its density is $\sqrt{d/(2\pi v_1)}\exp[-dx^2/(2v_1)]$. Integrating this density gives the displayed limit; using interior subintervals establishes the lower bound. The identical statement holds for $V_{aj}/d$.

\emph{Upper bound.} Choose $R$ large enough that $R^2/(2v_1)>\alpha_1+1$ and $R^2/(2v_2)>\alpha+1$. A union bound makes the probability that any $U_a/d$ or $V_{aj}/d$ leaves $[-R,R]$ tend to zero exponentially. Partition this compact interval into finitely many bins of width at most $\delta$. For a pair of bins $B,C$, put $r_1=\inf_B I_1$ and $r_2=\inf_C I_2$, and define the finite index counts
\begin{equation*}
 K_B:=\#\{a:U_a/d\in B\},\qquad
 T_{BC}:=\#\{(a,j):U_a/d\in B,\ V_{aj}/d\in C\}.
\end{equation*}
Thus $K_B$ counts clusters with $U_a/d\in B$, while $T_{BC}$ counts cluster-token index pairs satisfying both bin conditions; it does not count the continuum of points in $B\times C$. By \eqref{eq:gaussianldp},
\begin{equation*}
 \E K_B=e^{d(\alpha_1-r_1+o(1))},\qquad
 \E T_{BC}=e^{d(\alpha-r_1-r_2+o(1))}.
\end{equation*}
Fix $\eta>0$. Markov's inequality, followed by a union bound over the finite bins, shows with probability tending to one that bins with $r_1>\alpha_1+\eta$ or $r_1+r_2>\alpha+\eta$ are empty, and all other bins satisfy
\begin{equation*}
 T_{BC}\le\exp\{d[\max(\alpha-r_1-r_2,0)+\eta]\}.
\end{equation*}
We next convert the count bound into a bound on the partition function $\Z_d=\sum_{a,j}e^{\beta X_{aj}}$. For each bin pair define its contribution
\begin{equation*}
 \Z_{BC}:=\sum_{\substack{(a,j):\,U_a/d\in B\\V_{aj}/d\in C}}e^{\beta X_{aj}}.
\end{equation*}
There are $T_{BC}$ summands in $\Z_{BC}$. For every one of them, $X_{aj}/d=U_a/d+V_{aj}/d$ is at most $\sup_{(x,y)\in B\times C}(x+y)$. Hence
\begin{equation}
 \Z_{BC}\le T_{BC}\exp\left\{\beta d\sup_{(x,y)\in B\times C}(x+y)\right\}.
 \label{eq:bincontribution}
\end{equation}
The bounds above control $T_{BC}$, and substituting them into \eqref{eq:bincontribution} bounds each bin pair's contribution. Since the bin pairs partition all index pairs $(a,j)$ on the event under consideration, $\Z_d=\sum_{B,C}\Z_{BC}$. For fixed $\delta$, this is a sum over finitely many bin pairs, independently of $d$. Thus replacing the largest bin-pair bound by their sum changes $L_d=d^{-1}\log\Z_d$ by at most $(\log\text{number of bin pairs})/d=o(1)$.

First let $d\to\infty$, and then send $\eta\downarrow0$. The relaxed constraints used above,
\begin{equation*}
 r_1\le\alpha_1+\eta,\qquad r_1+r_2\le\alpha+\eta,
\end{equation*}
therefore reduce to the stated constraints $r_1\le\alpha_1$ and $r_1+r_2\le\alpha$. On this limiting set, $\alpha-r_1-r_2\ge0$, so $\max(\alpha-r_1-r_2,0)=\alpha-r_1-r_2$. All these sets lie in the compact square $[-R,R]^2$, and the objective is continuous, so their maxima converge to the maximum in \eqref{eq:generalvariational}. Finally send $\delta\downarrow0$.

\emph{Lower bound.} Fix $(x,y)$ satisfying both constraints strictly. Choose small intervals $B,C$ around these coordinates, and recall that $r_1=\inf_{u\in B}I_1(u)$ and $r_2=\inf_{v\in C}I_2(v)$. By choosing the intervals sufficiently small, the strict feasibility of $(x,y)$ gives $r_1<\alpha_1$ and $r_1+r_2<\alpha$.

Write $K_B=\sum_{a=1}^{M_d}\mathbf1\{U_a/d\in B\}$. It is binomial with mean $\E K_B=M_d\Prob(U_a/d\in B)=\exp\{d(\alpha_1-r_1+o(1))\}$, which grows exponentially because $\alpha_1-r_1>0$. Its relative error $|K_B-\E K_B|/\E K_B$ tends to zero since $\Var(K_B)/(\E K_B)^2\le1/\E K_B\to0$, Chebyshev's inequality gives $K_B/\E K_B\pto1$; in particular $d^{-1}\log K_B\pto\alpha_1-r_1$. Conditional on the cluster-level variables $(U_a)_a$, the $K_Bm_d$ indicators for $V_{aj}/d\in C$ are independent Bernoulli variables. Their conditional total mean has exponential rate $\alpha-r_1-r_2>0$. A binomial Chernoff bound therefore gives
\begin{equation*}
 \frac1d\log T_{BC}\pto\alpha-r_1-r_2.
\end{equation*}
These second-level trials remain independent even when several tokens belong to the same selected cluster, because selection used only the $U_a$. No condition $r_2<\alpha_2$ is required. Every index pair counted by $T_{BC}$ satisfies $X_{aj}/d=U_a/d+V_{aj}/d\ge\inf_{(u,v)\in B\times C}(u+v)$. Its exponential weight is therefore at least $\exp\{\beta d\inf_{B\times C}(u+v)\}$, and hence
\begin{equation*}
 \Z_d\ge T_{BC}\exp\left\{\beta d\inf_{(u,v)\in B\times C}(u+v)\right\}.
\end{equation*}
Combining this inequality with $d^{-1}\log T_{BC}\pto\alpha-r_1-r_2$ gives, with probability tending to one,
\begin{equation*}
 \liminf_{d\to\infty}L_d\ge
 \alpha-r_1-r_2+\beta\inf_{(u,v)\in B\times C}(u+v).
\end{equation*}
Now shrink $B$ and $C$ to the chosen point $(x,y)$. Continuity of $I_1$, $I_2$, and $u+v$ makes the right-hand side tend to $\alpha-I_1(x)-I_2(y)+\beta(x+y)$, proving the lower bound at every strictly feasible $(x,y)$.

It remains to include points on the boundary of the feasible set. If $(x,y)$ satisfies the non-strict constraints, then for every $0<t<1$ the point $(tx,ty)$ satisfies them strictly: its two costs are $t^2I_1(x)<\alpha_1$ and $t^2[I_1(x)+I_2(y)]<\alpha$, because $\alpha_1,\alpha>0$. Apply the strict-point lower bound to $(tx,ty)$ and let $t\uparrow1$. The objective is continuous, so its value converges to the value at $(x,y)$. Since the feasible set is compact and the objective is continuous, it has a maximizer; applying this argument to that maximizer proves the lower bound for the maximum.

\emph{$L^1$ convergence.} We have already proved $L_d\pto F$. To strengthen this to $L^1$ convergence, we must show
\begin{equation}
 \E|L_d-F|\longrightarrow0.
 \label{eq:l1definition}
\end{equation}
It is enough to show that $(L_d)_d$ is uniformly integrable, meaning that large values make a uniformly negligible contribution to its expectation:
\begin{equation}
 \lim_{K\to\infty}\sup_d\E\big[|L_d|\mathbf1\{|L_d|>K\}\big]=0.
 \label{eq:uniformintegrability}
\end{equation}
The elementary inequality
\begin{equation*}
 |L_d|\le d^{-1}\log N_d+\beta d^{-1}\max_{a,j}|X_{aj}|
\end{equation*}
reduces this to controlling the largest score. Each $X_{aj}$ is $N(0,(v_1+v_2)d)$, so a union bound gives
\begin{equation*}
 \Prob\left(\frac{\max_{a,j}|X_{aj}|}{d}>u\right)
 \le2N_d\exp\left[-\frac{du^2}{2(v_1+v_2)}\right].
\end{equation*}
Because $d^{-1}\log N_d\to\alpha$, the right-hand side decays exponentially in $d$ for every sufficiently large fixed $u$. Integrating this tail bound shows
\begin{equation*}
 \sup_d\E\left[\left(d^{-1}\max_{a,j}|X_{aj}|\right)^2\right]<\infty.
\end{equation*}
Since $d^{-1}\log N_d$ is also uniformly bounded, the displayed bound on $L_d$ gives $\sup_d\E L_d^2<\infty$. This second-moment bound implies \eqref{eq:uniformintegrability}, because
\begin{equation*}
 \E\big[|L_d|\mathbf1\{|L_d|>K\}\big]\le K^{-1}\E L_d^2.
\end{equation*}
Finally let $Y_d=L_d-F$. The same uniform-integrability property holds for $(Y_d)_d$. For any $\varepsilon>0$ and $K>\varepsilon$,
\begin{equation*}
 \E|Y_d|\le\varepsilon+K\Prob(|Y_d|>\varepsilon)+\E\big[|Y_d|\mathbf1\{|Y_d|>K\}\big].
\end{equation*}
First choose $K$ so that the last term is uniformly small, then let $d\to\infty$ so that the middle term vanishes by $Y_d\pto0$, and finally send $\varepsilon\downarrow0$. This proves \eqref{eq:l1definition}.
\end{proof}

\subsection{Evaluation of the variational formula}
\begin{proof}[Proof of Theorem~\ref{thm:free}]
Apply Lemma~\ref{lem:counting} with $v_1=\rho,v_2=1-\rho$. It remains to solve its strictly concave optimization problem.

The two constraints depend on $x^2$ and $y^2$, whereas $\beta(x+y)$ increases when either negative coordinate is replaced by its absolute value. Thus a maximizer can be chosen with $x,y\ge0$. If $\rho\le\alpha_1/\alpha$, temporarily ignore the constraint $x^2/(2\rho)\le\alpha_1$. Define
\begin{equation*}
 g(x,y):=\alpha-\frac{x^2}{2\rho}-\frac{y^2}{2(1-\rho)}+\beta(x+y),
 \qquad
 \frac{x^2}{2\rho}+\frac{y^2}{2(1-\rho)}\le\alpha.
\end{equation*}
For every $(x,y)$ satisfying the remaining constraint, Cauchy--Schwarz gives
\begin{equation*}
 x+y\le\left(\frac{x^2}{\rho}+\frac{y^2}{1-\rho}\right)^{1/2},
\end{equation*}
with equality exactly when $x/\rho=y/(1-\rho)$. Thus, for each fixed value of $x^2/(2\rho)+y^2/[2(1-\rho)]$, the largest possible value of $g$ occurs at $x=\rho t$, $y=(1-\rho)t$. The remaining constraint becomes $0\le t\le\sqrt{2\alpha}$, and on this line
\begin{equation*}
 g(\rho t,(1-\rho)t)=\alpha-\frac{t^2}{2}+\beta t.
\end{equation*}
The derivative of this one-variable function is $\beta-t$, so its maximum on $[0,\sqrt{2\alpha}]$ occurs at $t=\min(\beta,\sqrt{2\alpha})$. Hence
\begin{equation*}
 (x,y)=(\rho t,(1-\rho)t),\qquad t=\min(\beta,\sqrt{2\alpha}).
\end{equation*}
For this point, $x^2/(2\rho)=\rho t^2/2\le\rho\alpha\le\alpha_1$, so it also satisfies the constraint that was temporarily omitted. Substitution gives $\frem(\beta;\alpha,1)$.

Put $b_1=\sqrt{2\alpha_1/\rho}$ and $b_2=\sqrt{2\alpha_2/(1-\rho)}$. If $\rho>\alpha_1/\alpha$, then $b_1<b_2$. For $\beta\le b_1$, the point $(x,y)=(\beta\rho,\beta(1-\rho))$ satisfies
\begin{equation*}
 \frac{x^2}{2\rho}=\frac{\beta^2\rho}{2}\le\alpha_1,
 \qquad
 \frac{x^2}{2\rho}+\frac{y^2}{2(1-\rho)}=\frac{\beta^2}{2}\le\frac{\alpha_1}{\rho}<\alpha,
\end{equation*}
and the two equations $\partial g/\partial x=\partial g/\partial y=0$ show that it is the maximizer. For $b_1\le\beta\le b_2$, take
\begin{equation*}
 x=\sqrt{2\rho\alpha_1},\qquad y=\beta(1-\rho).
\end{equation*}
Here $x^2/(2\rho)=\alpha_1$ and
\begin{equation*}
 \frac{x^2}{2\rho}+\frac{y^2}{2(1-\rho)}
 =\alpha_1+\frac{\beta^2(1-\rho)}{2}\le\alpha_1+\alpha_2=\alpha.
\end{equation*}
With $\lambda_1=\beta/b_1-1\ge0$ and $\lambda_2=0$, the derivative equations for the two constraints are
\begin{equation}
 \beta-\frac{(1+\lambda_1+\lambda_2)x}{\rho}=0,
 \qquad
 \beta-\frac{(1+\lambda_2)y}{1-\rho}=0.
 \label{eq:splitkkt}
\end{equation}
For $\beta\ge b_2$, choose
\begin{equation*}
 x=\sqrt{2\rho\alpha_1},\qquad y=\sqrt{2(1-\rho)\alpha_2}.
\end{equation*}
This point satisfies $x^2/(2\rho)=\alpha_1$ and $x^2/(2\rho)+y^2/[2(1-\rho)]=\alpha$. The choices $\lambda_2=\beta/b_2-1$ and $\lambda_1=\beta/b_1-\beta/b_2$ are nonnegative and satisfy \eqref{eq:splitkkt}. Because $g$ is concave and both constraint functions are convex, these equations together with the displayed inequalities identify the global maximizer. Substitution gives \eqref{eq:split}, with the formulas agreeing at all boundaries.

Finally, write $D_d=\log\Z_d-\beta X_*$. Then $D_d/d\pto F-\beta s$ and $A_*=(1+e^{D_d})^{-1}$. A strictly negative or positive limiting exponent proves the two retrieval conclusions without any independence requirement on $X_*$.
\end{proof}

\subsection{Finite-size concentration and retrieval margins}
The free-energy theorem gives a limiting statement. The following proposition is a separate finite-$d$ refinement.
\begin{proposition}[Gaussian concentration]\label{prop:concentration}
For every $u>0$,
\begin{equation}
 \Prob\{|L_d-\E L_d|\ge u\}\le2\exp[-du^2/(2\beta^2)].
 \label{eq:concentration}
\end{equation}
Let $e_d=|\E L_d-F|\to0$ and $t_d(r)=\Prob(|X_*/d-s|>r)$. Except on an event of probability at most $2e^{-du^2/(2\beta^2)}+t_d(r)$,
\begin{equation}
 \left|d^{-1}\log\frac{1-A_*}{A_*}-(F-\beta s)\right|
 \le e_d+u+\beta r.\label{eq:margin}
\end{equation}
Thus a positive margin $\beta s-F>e_d+u+\beta r$ implies $1-A_*\le\exp[-d(\beta s-F-e_d-u-\beta r)]$ on this event.
\end{proposition}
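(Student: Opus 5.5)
The plan is to view $L_d$ as a Lipschitz function of the underlying independent standard Gaussians $(G_a)_a$ and $(H_{aj})_{a,j}$, and then apply the Gaussian concentration inequality of Tsirelson--Ibragimov--Sudakov (as in Boucheron et al.~\citep{boucheron2013concentration}). For a $\sigma$-Lipschitz function $\Phi$ of a standard Gaussian vector, this inequality gives $\Prob\{|\Phi-\E\Phi|\ge t\}\le2e^{-t^2/(2\sigma^2)}$. The margin bound \eqref{eq:margin} then follows from a triangle inequality applied to the log-odds identity \eqref{eq:logodds}.

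\emph{Step 1: Lipschitz constant.} Write $w=(G,H)$. The gradient of $\log\Z_d$ is
\begin{equation*}
 \partial_{G_a}\log\Z_d=\beta\sqrt{\rho d}\,P_a,\qquad
 \partial_{H_{aj}}\log\Z_d=\beta\sqrt{(1-\rho)d}\,p_{aj},
\end{equation*}
with $p_{aj}$ and $P_a$ the weights of Theorem~\ref{thm:comparison}. Hence
\begin{equation*}
 \|\nabla\log\Z_d\|^2=\beta^2d\Big(\rho\sum_aP_a^2+(1-\rho)\sum_{a,j}p_{aj}^2\Big)\le\beta^2d,
\end{equation*}
because $\sum_aP_a^2\le1$ and $\sum_{a,j}p_{aj}^2\le1$. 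So $\log\Z_d$ is $\beta\sqrt d$-Lipschitz, and $L_d=d^{-1}\log\Z_d$ is $\beta/\sqrt d$-Lipschitz. The concentration inequality with $\sigma^2=\beta^2/d$ then gives \eqref{eq:concentration} exactly. The only point needing care is this Lipschitz computation; it is the main step, and it is short. $L_d$ is smooth, so a bounded gradient suffices.

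\emph{Step 2: margin.} Theorem~\ref{thm:free} gives $e_d\to0$ through $L^1$ convergence. By \eqref{eq:logodds},
\begin{equation*}
 d^{-1}\log\frac{1-A_*}{A_*}=L_d-\beta X_*/d,
\end{equation*}
so
\begin{equation*}
 \Big|d^{-1}\log\tfrac{1-A_*}{A_*}-(F-\beta s)\Big|\le|L_d-\E L_d|+|\E L_d-F|+\beta|X_*/d-s|.
\end{equation*}
On the complement of the union of $\{|L_d-\E L_d|\ge u\}$ and $\{|X_*/d-s|>r\}$, the right side is at most $u+e_d+\beta r$. A union bound shows this event has probability at most $2e^{-du^2/(2\beta^2)}+t_d(r)$. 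No independence between $X_*$ and the distractors is needed.

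\emph{Step 3: consequence.} On this event, $d^{-1}\log[(1-A_*)/A_*]\le-(\beta s-F-e_d-u-\beta r)<0$. Since $1-A_*\le(1-A_*)/A_*$, we get $1-A_*\le\exp[-d(\beta s-F-e_d-u-\beta r)]$.
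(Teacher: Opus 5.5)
Your proposal is correct and follows essentially the same route as the paper. Both arguments compute $\|\nabla\log\Z_d\|_2\le\beta\sqrt d$ from the cluster and token weights to get $(\beta/\sqrt d)$-Lipschitz concentration, then apply the triangle inequality to the log-odds identity on the intersection with $\{|X_*/d-s|\le r\}$, and finish with $1-A_*\le(1-A_*)/A_*$.
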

\begin{proof}
We use the standard Gaussian Lipschitz concentration inequality, see~\citet{boucheron2013concentration}: if $W$ is a vector of independent $N(0,1)$ variables and $f(W)$ is $c$-Lipschitz in Euclidean distance, then
\begin{equation*}
 \Prob\{|f(W)-\E f(W)|\ge u\}\le2\exp\left[-\frac{u^2}{2c^2}\right].
\end{equation*}
Thus, to prove \eqref{eq:concentration}, it suffices to show that $L_d$ is $(\beta/\sqrt d)$-Lipschitz as a function of the independent standard Gaussian variables $(G,H)$. The derivatives of $\log\Z_d$ are
\begin{equation*}
 \partial_{G_a}\log\Z_d=\beta\sqrt{\rho d}\,P_a,
 \qquad
 \partial_{H_{aj}}\log\Z_d=\beta\sqrt{(1-\rho)d}\,p_{aj}.
\end{equation*}
Therefore
\begin{equation*}
 \sum_a(\partial_{G_a}\log\Z_d)^2+
 \sum_{a,j}(\partial_{H_{aj}}\log\Z_d)^2
 =\beta^2d\left[\rho\sum_aP_a^2+(1-\rho)\sum_{a,j}p_{aj}^2\right]
 \le\beta^2d.
\end{equation*}
Hence, we obtain $\|\nabla\log\Z_d\|_2\le\beta\sqrt d$. Since $L_d=d^{-1}\log\Z_d$, its gradient has Euclidean norm at most $\beta/\sqrt d$, so $L_d$ is $(\beta/\sqrt d)$-Lipschitz.

On the concentration event, $|L_d-F|\le |L_d-\E L_d|+|\E L_d-F|\le u+e_d$. Intersect it with the event $|X_*/d-s|\le r$ and use $d^{-1}\log[(1-A_*)/A_*]=L_d-\beta X_*/d$ to obtain \eqref{eq:margin}. If $\Delta_d:=\beta s-F-e_d-u-\beta r>0$, then \eqref{eq:margin} gives
\begin{equation*}
 \frac1d\log\frac{1-A_*}{A_*}\le-\Delta_d,
 \qquad
 1-A_*=A_*\frac{1-A_*}{A_*}\le e^{-d\Delta_d},
\end{equation*}
which proves the final assertion of the proposition. No explicit rate is asserted for the deterministic bias $e_d$; this statement must not be read as a rate to the limiting threshold without that bias term.
\end{proof}

\subsection{Gaussian interpolation}
\begin{lemma}[Covariance differentiation]\label{lem:interpolation}
For a differentiable positive-definite covariance path $C(t)$ and a centered Gaussian vector $Y(t)~\sim N(0,C(t))$, the log-sum-exp $g(y)=\log\sum_ie^{\beta y_i}$ satisfies
\begin{equation}
 \frac{d}{dt}\E g(Y(t))
 =\frac12\sum_{i,j}C'_{ij}(t)\E\partial_i\partial_jg(Y(t)).
 \label{eq:interpolation}
\end{equation}
\end{lemma}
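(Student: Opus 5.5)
We will prove \eqref{eq:interpolation} by the standard Gaussian interpolation argument, which combines the heat equation for Gaussian densities with integration by parts. Write $\phi_t(y)$ for the density of $N(0,C(t))$, which exists because $C(t)$ is positive definite. The first step is the identity
\begin{equation*}
 \partial_t\phi_t(y)=\frac12\sum_{i,j}C'_{ij}(t)\,\partial_i\partial_j\phi_t(y).
\end{equation*}
We will check it through the Fourier transform. The characteristic function $\hat\phi_t(\xi)=\exp[-\tfrac12\xi^\top C(t)\xi]$ has $t$-derivative $-\tfrac12\xi^\top C'(t)\xi\,\hat\phi_t(\xi)$. This is exactly the transform of $\tfrac12\sum_{i,j}C'_{ij}\partial_i\partial_j\phi_t$, since each $\partial_i$ corresponds to multiplication by $i\xi_i$ up to the sign convention.

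The second step differentiates under the integral sign, $\frac{d}{dt}\int g\,\phi_t\,dy=\int g\,\partial_t\phi_t\,dy$, and then integrates by parts twice in each coordinate. This moves the two derivatives onto $g$ and gives $\tfrac12\sum_{i,j}C'_{ij}\int\partial_i\partial_j g\,\phi_t\,dy$, which is the claimed formula.

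Justifying both operations is the only real obstacle, and here the structure of $g$ makes it easy. The function $g$ is smooth. Its gradient is $\partial_ig=\beta p_i$ and its Hessian is $\partial_i\partial_jg=\beta^2(p_i\delta_{ij}-p_ip_j)$, where $p$ is the softmax vector, so $\nabla g$ and $\nabla^2 g$ are bounded by $\beta$ and $2\beta^2$ entrywise. Moreover $|g(y)|\le\log n+\beta\|y\|_\infty$ grows at most linearly.

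Boundary terms in the integrations by parts vanish because $g$, $\nabla g$, and $\phi_t$ satisfy the following bounds. The product of a linearly growing function with $\phi_t$ or $\nabla\phi_t$ decays like a Gaussian. Here $\nabla\phi_t=-\phi_tC(t)^{-1}y$, which is again Gaussian-dominated.

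For differentiation under the integral, we will restrict $t$ to a compact subinterval on which $C(t)$ has eigenvalues in some $[\lambda,\Lambda]$ with $\lambda>0$. This is possible by continuity and positive definiteness. On such an interval, $|\partial_t\phi_t(y)|\le K(1+\|y\|^2)e^{-\|y\|^2/(2\Lambda)}$ uniformly in $t$, as one sees by differentiating the explicit density, which involves $\partial_t\log\det C$ and $C^{-1}C'C^{-1}$. This bound is integrable against $|g|$, so dominated convergence applies.

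As an alternative route, we could write $Y(t)=C(t)^{1/2}Z$. That approach needs differentiability of the matrix square root together with Stein's lemma, and the density route is cleaner. We will therefore present the density/heat-equation argument, stating the domination bounds explicitly and keeping the Fourier verification as a one-line remark.
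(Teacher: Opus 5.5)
Your proposal is correct and follows essentially the same route as the paper's proof. Both arguments derive $\partial_t\phi_t=\frac12\sum_{i,j}C'_{ij}(t)\partial_i\partial_j\phi_t$ from the Fourier transform, differentiate under the integral, and integrate by parts twice, justifying these steps by the bounded Hessian of $g$, its linear growth, and Gaussian tails. Your explicit domination on a compact $t$-interval is more careful than the paper's one-line justification, but your uniform-in-$t$ bound on $\partial_t\phi_t$ tacitly assumes $C'$ is locally bounded. That holds for the affine covariance paths the paper actually uses; for a merely differentiable path, dominate the difference quotient instead, using $\|C(t_0+h)-C(t_0)\|\le K|h|$ for small $h$.
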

\begin{proof}
Let $\phi_t$ be the density of $N(0,C(t))$. Its Fourier transform is
\begin{equation*}
 \widehat\phi_t(\xi)=\exp\left[-\frac12\xi^\top C(t)\xi\right].
\end{equation*}
Differentiating in $t$ gives
\begin{equation*}
 \frac{d}{dt}\widehat\phi_t(\xi)
 =-\frac12\sum_{i,j}C'_{ij}(t)\xi_i\xi_j\widehat\phi_t(\xi)
 =\widehat{\left[\frac12\sum_{i,j}C'_{ij}(t)\partial_i\partial_j\phi_t\right]}(\xi).
\end{equation*}
Therefore
\begin{equation}
 \partial_t\phi_t(y)=\frac12\sum_{i,j}C'_{ij}(t)\partial_i\partial_j\phi_t(y).
 \label{eq:densitycovariance}
\end{equation}
Writing $p_i(y)=e^{\beta y_i}/\sum_ke^{\beta y_k}$, the first two derivatives of $g$ are
\begin{equation}
 \partial_i g(y)=\beta p_i(y),
 \qquad
 \partial_i\partial_jg(y)=\beta^2\big[p_i(y)\mathbf1_{i=j}-p_i(y)p_j(y)\big].
 \label{eq:logsumexpderivatives}
\end{equation}
In particular, the second derivatives are bounded. Hence differentiating under the integral and using \eqref{eq:densitycovariance} yields
\begin{align*}
 \frac{d}{dt}\E g(Y(t))
 &=\int g(y)\,\partial_t\phi_t(y)\,dy \notag\\
 &=\frac12\sum_{i,j}C'_{ij}(t)
   \int g(y)\,\partial_i\partial_j\phi_t(y)\,dy \notag\\
 &=\frac12\sum_{i,j}C'_{ij}(t)
   \int \partial_i\partial_jg(y)\,\phi_t(y)\,dy,
 \end{align*}
where the last equality is integration by parts twice. Since $g(y)\le\log n+\beta\max_i|y_i|$ and Gaussian densities and their derivatives have Gaussian tails, the differentiations and the boundary terms are justified. This is \eqref{eq:interpolation}. Singular endpoints are obtained by continuity; only positive-definite $C(t)$ is needed in the displayed calculation.
\end{proof}
\begin{proof}[Proof of Theorem~\ref{thm:comparison}]
Index the Gaussian score vector by $i=(a,j)$. Its covariance matrix satisfies
\begin{equation*}
 \frac{d}{d\rho}\Cov(X_{aj},X_{bk})=
 \begin{cases}
 d,&a=b,\ j\ne k,\\
 0,&\text{otherwise}.
 \end{cases}
\end{equation*}
Indeed, every score has variance $d$, so the derivative is $0$ when $(a,j)=(b,k)$; two distinct scores have covariance $\rho d$ exactly when they belong to the same cluster. For $g(x)=\log\sum_{a,j}e^{\beta x_{aj}}$, \eqref{eq:logsumexpderivatives} gives, for $j\ne k$,
\begin{equation*}
 \partial_{aj}\partial_{ak}g=-\beta^2p_{aj}p_{ak}.
\end{equation*}
Applying \eqref{eq:interpolation} and dividing by $d$ gives
\begin{equation*}
 \frac{d}{d\rho}\E L_d
 =\frac{1}{2d}\sum_a\sum_{j\ne k}d\,\E[-\beta^2p_{aj}p_{ak}]
 =-\frac{\beta^2}{2}\E\sum_a\sum_{j\ne k}p_{aj}p_{ak}.
\end{equation*}
For each cluster,
\begin{equation*}
 \sum_{j\ne k}p_{aj}p_{ak}
 =\left(\sum_jp_{aj}\right)^2-\sum_jp_{aj}^2.
\end{equation*}
Let $\pc=\sum_a P_a^2$ and $\pt=\sum_{a,j}p_{aj}^2$. Summing over $a$ gives $\sum_a\sum_{j\ne k}p_{aj}p_{ak}=\pc-\pt$, and therefore
\begin{equation*}
 \frac{d}{d\rho}\E L_d
 =-\frac{\beta^2}{2}\E(\pc-\pt)\le0,
\end{equation*}
because $\pc-\pt=\sum_a\sum_{j\ne k}p_{aj}p_{ak}\ge0$. Thus, for $0<\rho_1<\rho_2<1$,
\begin{equation*}
 \E L_d(\rho_2)-\E L_d(\rho_1)
 =-\frac{\beta^2}{2}\int_{\rho_1}^{\rho_2}\E(\pc-\pt)\,d\rho\le0.
\end{equation*}
Continuity extends this inequality to $\rho_1=0$ and $\rho_2=1$. Finally, $L^1$ convergence gives $\E L_d(\rho)\to F(\beta;\rho)$ for every fixed $\rho$, so $F(\beta;\rho)\le F(\beta;0)=\frem(\beta;\alpha,1)$. If $\beta s>\frem(\beta;\alpha,1)$, then $\beta s>F(\beta;\rho)$, and Theorem~\ref{thm:free} gives $A_*\pto1$.
\end{proof}

\section{Participation without an incorrect winner-take-all claim}
\label{app:participation}
\begin{proposition}[Expected participation]\label{prop:participation}
Equation~\eqref{eq:participation} holds for every fixed $\beta>0$ and $0<\rho<1$. A strictly positive limiting participation expectation does not assert a deterministic limiting participation value.
\end{proposition}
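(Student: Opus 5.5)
The plan is to obtain both participation limits as derivatives of free energies, using convexity of $\E L_d$ in auxiliary parameters. For token participation, I would use the temperature derivative. By \eqref{eq:logsumexpderivatives}, $\partial_\beta \E L_d=d^{-1}\E\sum_{a,j}p_{aj}X_{aj}$. Gaussian integration by parts (Stein's lemma on the independent $G,H$ coordinates) converts this into
\begin{equation*}
 \partial_\beta\E L_d=\beta\bigl[1-\rho\,\E\pc-(1-\rho)\E\pt\bigr].
\end{equation*}
Here I use $\partial_{G_a}p_{aj}$ and $\partial_{H_{aj}}p_{aj}$ computed from $p_{aj}$ with the prefactors $\sqrt{\rho d}$ and $\sqrt{(1-\rho)d}$. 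This gives one linear combination of $\E\pc$ and $\E\pt$.

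To separate the two quantities, I would introduce a two-parameter family. Put $X_{aj}(\beta_1,\beta_2)$-weights $e^{\beta_1\sqrt{\rho d}G_a+\beta_2\sqrt{(1-\rho)d}H_{aj}}$ and let $\Phi_d(\beta_1,\beta_2)=d^{-1}\E\log\Z_d$. Then Stein's lemma gives
\begin{equation*}
 \partial_{\beta_1}\Phi_d=\beta_1\rho(1-\E\pc),\qquad
 \partial_{\beta_2}\Phi_d=\beta_2(1-\rho)(1-\E\pt).
\end{equation*}
Lemma~\ref{lem:counting} applies to this family, with $x$ replaced by $\beta_1 x$ and $y$ by $\beta_2 y$. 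Equivalently, it applies with variances $v_1=\beta_1^2\rho$ and $v_2=\beta_2^2(1-\rho)$ at unit temperature, so $\Phi_d\to\Phi(\beta_1,\beta_2)$, given by the variational formula with objective $\alpha-I_1-I_2+\beta_1x+\beta_2y$. Since $\log\Z_d$ is convex in $(\beta_1,\beta_2)$ (log-sum-exp of linear functions), each $\Phi_d$ is convex. Convex functions converging pointwise have derivatives converging wherever the limit is differentiable. I would therefore check that $\Phi$ is differentiable at $(\beta,\beta)$, which holds because the maximizer is unique by strict concavity (Danskin). The partial derivatives equal the optimal $x^\ast$ and $y^\ast$.

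It then remains to read off $x^\ast$ and $y^\ast$ from the KKT solution in the proof of Theorem~\ref{thm:free}. In the split case $b_1<b_2$, the maximizer has $x^\ast=\min(\beta,b_1)\rho$ and $y^\ast=\min(\beta,b_2)(1-\rho)$. This gives $1-\E\pc\to\min(1,b_1/\beta)$ and $1-\E\pt\to\min(1,b_2/\beta)$, which is the claim. In the merged case, $x^\ast=\rho\min(\beta,b)$ and $y^\ast=(1-\rho)\min(\beta,b)$, which yields $(1-b/\beta)_+$ for both.

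I expect the main obstacle to be differentiability of the two-parameter limit at the diagonal. At first sight it looks like a kink at the freezing boundaries, since there the maximizer moves along the constraint. However, with $(\beta_1,\beta_2)$ varying jointly and a unique maximizer of a strictly concave objective on a convex compact set, Danskin's theorem still gives differentiability, and the formulas agree at the boundaries. I would verify uniqueness and note that convex-derivative convergence needs differentiability only at the single point $(\beta,\beta)$.
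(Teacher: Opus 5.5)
Your proposal is correct and takes essentially the paper's route: your $(\beta_1,\beta_2)$ family is the paper's $\Phi_d(t_1,t_2)$ under the rescaling $\beta t_1=\beta_1\sqrt{\rho}$, $\beta t_2=\beta_2\sqrt{1-\rho}$, your Stein computation reproduces the covariance-interpolation identities for $\partial_{v_1}\E L_d$ and $\partial_{v_2}\E L_d$, and the convex secant-slope argument is the same. The only real difference is the last step: you identify the limiting partial derivatives with the unique maximizer $(x^\ast,y^\ast)$ via Danskin's theorem, while the paper differentiates the closed-form general-variance free energy $F(\beta;v_1,v_2)$ and checks that the branch derivatives agree; your version makes differentiability at the freezing and merging boundaries automatic and does not need that closed form.
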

\begin{proof}
Temporarily replace $\rho$ and $1-\rho$ by independent positive variances $v_1,v_2$, and write
\begin{equation*}
 X_{aj}^{(v)}
 =\sqrt{v_1d}\,G_a+\sqrt{v_2d}\,H_{aj},
 \qquad
 L_d(v_1,v_2)
 =\frac1d\log\sum_{a,j}e^{\beta X_{aj}^{(v)}}.
\end{equation*}
The covariance is
\begin{equation*}
 \Cov(X_{aj}^{(v)},X_{bk}^{(v)})
 =d\left(v_1\mathbf1_{a=b}
 +v_2\mathbf1_{(a,j)=(b,k)}\right).
\end{equation*}
Consequently,
\begin{equation}
 \partial_{v_1}\Cov(X_{aj}^{(v)},X_{bk}^{(v)})
 =d\mathbf1_{a=b},
 \qquad
 \partial_{v_2}\Cov(X_{aj}^{(v)},X_{bk}^{(v)})
 =d\mathbf1_{(a,j)=(b,k)}.
 \label{eq:variancecovariancederivatives}
\end{equation}

Let
\begin{equation*}
 g(x)=\log\sum_{a,j}e^{\beta x_{aj}}.
\end{equation*}
By \eqref{eq:logsumexpderivatives},
\begin{equation*}
 \partial_{aj}\partial_{bk}g(x)
 =\beta^2\left(
 p_{aj}\mathbf1_{(a,j)=(b,k)}-p_{aj}p_{bk}
 \right),
\end{equation*}
where $\partial_{aj}$ means differentiation with respect to the score coordinate $x_{aj}$. Applying Lemma~\ref{lem:interpolation}, using $L_d=d^{-1}g$, and substituting the first identity in \eqref{eq:variancecovariancederivatives} gives
\begin{align}
 \partial_{v_1}\E L_d
 &=\frac1{2d}\sum_{a,j}\sum_{b,k}
   d\mathbf1_{a=b}\,
   \E[\partial_{aj}\partial_{bk}g]
 =\frac{\beta^2}{2}\E\left[
   \sum_{a,j}p_{aj}
   -\sum_a\sum_{j,k}p_{aj}p_{ak}
   \right] \notag\\
 &=\frac{\beta^2}{2}\E\left[
   1-\sum_a\left(\sum_jp_{aj}\right)^2
   \right]
  =\frac{\beta^2}{2}(1-\E\pc).
 \label{eq:clusterparticipationderivative}
\end{align}
Likewise, the second identity in \eqref{eq:variancecovariancederivatives} gives
\begin{align}
 \partial_{v_2}\E L_d
 &=\frac1{2d}\sum_{a,j}\sum_{b,k}
   d\mathbf1_{(a,j)=(b,k)}\,
   \E[\partial_{aj}\partial_{bk}g]
   =\frac{\beta^2}{2}\E\left[
   \sum_{a,j}p_{aj}-\sum_{a,j}p_{aj}^2
   \right] \notag\\
 &=\frac{\beta^2}{2}(1-\E\pt).
 \label{eq:tokenparticipationderivative}
\end{align}
Equations~\eqref{eq:clusterparticipationderivative} and
\eqref{eq:tokenparticipationderivative} are exact for every finite $d$.

It remains to justify passing these derivatives to the limit $d\to\infty$. Parameterize
\begin{equation*}
 v_i=t_i^2,\qquad t_i>0,
\end{equation*}
and define
\begin{equation*}
 \Phi_d(t_1,t_2)
 :=\E\left[
 \frac1d\log\sum_{a,j}
 \exp\{\beta\sqrt d(t_1G_a+t_2H_{aj})\}
 \right].
\end{equation*}
For every realized Gaussian array, the expression inside the expectation is convex, and hence $\Phi_d$ is convex. Lemma~\ref{lem:counting} and its $L^1$ conclusion give pointwise convergence
\begin{equation*}
 \Phi_d(t_1,t_2)\longrightarrow\Phi(t_1,t_2)
 :=F(\beta;t_1^2,t_2^2).
\end{equation*}

Fix one coordinate, hold the other coordinate constant, and write the resulting one-variable functions as $\Phi_d(t)$ and $\Phi(t)$. For every $h>0$ small enough, convexity gives
\begin{equation*}
 \frac{\Phi_d(t)-\Phi_d(t-h)}{h}
 \le \Phi_d'(t)
 \le
 \frac{\Phi_d(t+h)-\Phi_d(t)}{h}.
\end{equation*}
Taking $d\to\infty$ in the two secant slopes yields
\begin{equation*}
 \frac{\Phi(t)-\Phi(t-h)}{h}
 \le \liminf_{d\to\infty}\Phi_d'(t)
 \le \limsup_{d\to\infty}\Phi_d'(t)
 \le
 \frac{\Phi(t+h)-\Phi(t)}{h}.
\end{equation*}
If $\Phi$ is differentiable at $t$, sending $h\downarrow0$ makes both outer terms converge to $\Phi'(t)$. Therefore
\begin{equation}
 \Phi_d'(t)\longrightarrow\Phi'(t).
 \label{eq:convexderivativeconvergence}
\end{equation}
Applying this argument separately to $t_1$ and $t_2$, and using
\begin{equation*}
 \partial_{v_i}
 =\frac1{2t_i}\partial_{t_i},
 \qquad v_i=t_i^2,
\end{equation*}
shows that the derivatives in
\eqref{eq:clusterparticipationderivative} and
\eqref{eq:tokenparticipationderivative} converge to the corresponding derivatives of the limiting free energy.

The general-variance limiting free energy is
\begin{equation*}
 F(\beta;v_1,v_2)=\begin{cases}
 f(\beta;\alpha_1,v_1)+f(\beta;\alpha_2,v_2),&\alpha_1/v_1<\alpha_2/v_2,\\
 f(\beta;\alpha,v_1+v_2),&\alpha_1/v_1\ge\alpha_2/v_2.
 \end{cases}
\end{equation*}
For the one-level free energy,
\begin{equation}
 \partial_v f(\beta;a,v)
 =
 \begin{cases}
 \beta^2/2,&\beta\le\sqrt{2a/v},\\[2pt]
 \beta\sqrt{a/(2v)},&\beta\ge\sqrt{2a/v}.
 \end{cases}
 \label{eq:remvariancederivative}
\end{equation}
The two expressions agree when $\beta=\sqrt{2a/v}$. The derivatives of the two branches of $F(\beta;v_1,v_2)$ also agree when $\alpha_1/v_1=\alpha_2/v_2$, so the differentiability required in \eqref{eq:convexderivativeconvergence} holds.

From \eqref{eq:clusterparticipationderivative} and
\eqref{eq:tokenparticipationderivative},
\begin{equation}
 \lim_{d\to\infty}\E\pc
 =1-\frac{2}{\beta^2}\partial_{v_1}F(\beta;v_1,v_2),
 \qquad
 \lim_{d\to\infty}\E\pt
 =1-\frac{2}{\beta^2}\partial_{v_2}F(\beta;v_1,v_2).
 \label{eq:participationfromfreeenergy}
\end{equation}
Set $v_1=\rho$ and $v_2=1-\rho$. If $b_1<b_2$, then the first branch of the general-variance formula applies, and \eqref{eq:remvariancederivative} gives
\begin{equation*}
 \lim_{d\to\infty}\E\pc
 =\left(1-\frac{b_1}{\beta}\right)_+,
 \qquad
 \lim_{d\to\infty}\E\pt
 =\left(1-\frac{b_2}{\beta}\right)_+.
\end{equation*}
If $b_1\ge b_2$, then
\begin{equation*}
 F(\beta;v_1,v_2)
 =f(\beta;\alpha,v_1+v_2).
\end{equation*}
At $v_1+v_2=1$, both partial derivatives in
\eqref{eq:participationfromfreeenergy} equal the variance derivative in
\eqref{eq:remvariancederivative} with $a=\alpha$ and $v=1$. Hence
\begin{equation*}
 \lim_{d\to\infty}\E\pc
 =\lim_{d\to\infty}\E\pt
 =\left(1-\frac{b}{\beta}\right)_+.
\end{equation*}
These are exactly the two cases in \eqref{eq:participation}.
\end{proof}

For a probability vector $w$, $(\max_iw_i)^2\le\sum_iw_i^2\le\max_iw_i$. Thus vanishing expected token participation implies $\max_{a,j}p_{aj}\pto0$ by Markov's inequality. If the maximum mass tended to one in probability, its participation would tend to one in probability and, being bounded, in expectation. Formula~\eqref{eq:participation} rules that out for both tokens and clusters at finite $\beta$. The positive cluster-participation limit in the intermediate regime implies a nonvanishing probability of a macroscopic cluster weight, not that one cluster absorbs all mass with probability tending to one. The richer random cascade description is classical \citep{bovier2004grem} and is not needed for these statements.

\section{Count correction and optimal temperature: complete proofs}
\label{app:optimal}
The count-corrected log odds are exactly
\begin{equation}
 \log\frac{1-\ac}{\ac}=\log\Z_d-\log m_d-\beta X_*.
 \label{eq:countodds}
\end{equation}
Their normalized limit is $F-\alpha_2-\beta s$, giving the fixed-temperature threshold in \eqref{eq:count}.

\subsection{The unique limiting optimum}
\begin{lemma}[Shape of the corrected threshold]\label{lem:shape}
For $0<\rho<1$, $\hc(\beta):=(F(\beta)-\alpha_2)/\beta$ is strictly decreasing on $(0,\sqrt{2\alpha_1})$ and strictly increasing on $(\sqrt{2\alpha_1},\infty)$.
\end{lemma}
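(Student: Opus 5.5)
We use the explicit piecewise formulas for $F$ from Theorem~\ref{thm:free} and \eqref{eq:split}, and treat the two regimes of \eqref{eq:free} separately. In each regime $F$ is continuous, so $\hc=(F-\alpha_2)/\beta$ is continuous on $(0,\infty)$. Strict monotonicity on each branch therefore glues across the breakpoints, and it suffices to examine each branch separately.

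\emph{Regime $\rho\le\alpha_1/\alpha$.} Here $F=\frem(\beta;\alpha,1)$. For $\beta\le b=\sqrt{2\alpha}$ we have
\begin{equation*}
 \hc(\beta)=\frac{\alpha_1}{\beta}+\frac{\beta}{2},
\end{equation*}
with derivative $-\alpha_1/\beta^2+1/2$. This derivative is negative for $\beta<\sqrt{2\alpha_1}$ and positive for $\beta>\sqrt{2\alpha_1}$. Since $\alpha_2>0$, the critical point satisfies $\sqrt{2\alpha_1}<\sqrt{2\alpha}$, so it lies inside this branch. For $\beta\ge b$ we have
\begin{equation*}
 \hc(\beta)=\sqrt{2\alpha}-\frac{\alpha_2}{\beta},
\end{equation*}
which is strictly increasing because $\alpha_2>0$.

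\emph{Regime $\rho>\alpha_1/\alpha$.} We use the three branches of \eqref{eq:split}. On $\beta\le b_1$, the formula is again $\alpha_1/\beta+\beta/2$. We check that $\sqrt{2\alpha_1}<b_1=\sqrt{2\alpha_1/\rho}$, which holds because $\rho<1$. So the unique critical point lies in the first branch, and the derivative is positive on $(\sqrt{2\alpha_1},b_1]$.

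On $[b_1,b_2]$, the formula is
\begin{equation*}
 \hc(\beta)=\sqrt{2\rho\alpha_1}+\frac{\beta(1-\rho)}{2},
\end{equation*}
since the $\alpha_2$ terms cancel. This is strictly increasing because $\rho<1$.

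On $\beta\ge b_2$, the formula is
\begin{equation*}
 \hc(\beta)=\sqrt{2\rho\alpha_1}+\sqrt{2(1-\rho)\alpha_2}-\frac{\alpha_2}{\beta},
\end{equation*}
which is strictly increasing.

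\emph{Conclusion.} Combining these pieces with continuity at $b$, or at $b_1$ and $b_2$, gives strict decrease on $(0,\sqrt{2\alpha_1})$ and strict increase afterward.

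The only point requiring care is locating $\sqrt{2\alpha_1}$ strictly inside the first branch in both regimes, since the later branches contain no critical point. This follows from $\alpha_2>0$ in the first regime and from $\rho<1$ in the second. No other step is delicate.
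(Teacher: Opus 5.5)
Your proposal is correct and matches the paper's proof essentially step for step. Both use the same split into the regimes $\rho\le\alpha_1/\alpha$ and $\rho>\alpha_1/\alpha$, the same explicit branch formulas for $\hc$ with their derivative signs, and the same observations $\sqrt{2\alpha_1}<\sqrt{2\alpha}$ and $\sqrt{2\alpha_1}<b_1$ that place the unique critical point in the first branch; the paper checks continuity at the breakpoints by direct substitution where you cite continuity of $F$, and either is fine.
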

\begin{proof}
Set
\begin{equation*}
 c:=\sqrt{2\alpha_1},
 \qquad
 b:=\sqrt{2\alpha},
 \qquad
 b_1:=\sqrt{\frac{2\alpha_1}{\rho}},
 \qquad
 b_2:=\sqrt{\frac{2\alpha_2}{1-\rho}}.
\end{equation*}

Suppose first that $\rho\le\alpha_1/\alpha$. Then $F(\beta)=f(\beta;\alpha,1)$ by \eqref{eq:free}, and \eqref{eq:rem} gives
\begin{equation}
 \hc(\beta)
 =
 \begin{cases}
 \dfrac{\alpha_1}{\beta}+\dfrac{\beta}{2},
 &0<\beta\le b,\\[8pt]
 b-\dfrac{\alpha_2}{\beta},
 &\beta\ge b.
 \end{cases}
 \label{eq:hcfirstcase}
\end{equation}
The derivatives of the two expressions in \eqref{eq:hcfirstcase} are
\begin{equation}
 \hc'(\beta)
 =
 \begin{cases}
 -\dfrac{\alpha_1}{\beta^2}+\dfrac12,
 &0<\beta<b,\\[8pt]
 \dfrac{\alpha_2}{\beta^2},
 &\beta>b.
 \end{cases}
 \label{eq:hcfirstderivative}
\end{equation}
For the first derivative,
\begin{equation*}
 -\frac{\alpha_1}{\beta^2}+\frac12
 \begin{cases}
 <0,&0<\beta<c,\\
 =0,&\beta=c,\\
 >0,&\beta>c.
 \end{cases}
\end{equation*}
Moreover, $c=\sqrt{2\alpha_1}<\sqrt{2(\alpha_1+\alpha_2)}=b$, and hence, \eqref{eq:hcfirstderivative} is negative on $(0,c)$ and positive on $(c,\infty)$. The two formulas in \eqref{eq:hcfirstcase} agree at $\beta=b$, since
\begin{equation*}
 \frac{\alpha_1}{b}+\frac b2
 =
 b-\frac{\alpha_2}{b},
\end{equation*}
which follows from $b^2=2(\alpha_1+\alpha_2)$.

Now suppose that $\rho>\alpha_1/\alpha$. This inequality is equivalent to $b_1<b_2$. Substituting the three formulas for $F$ from \eqref{eq:split} into $\hc$ gives
\begin{equation}
 \hc(\beta)
 =
 \begin{cases}
 \dfrac{\alpha_1}{\beta}+\dfrac{\beta}{2},
 &0<\beta\le b_1,\\[8pt]
 \sqrt{2\rho\alpha_1}+\dfrac{\beta(1-\rho)}{2},
 &b_1\le\beta\le b_2,\\[8pt]
 \sqrt{2\rho\alpha_1}
 +\sqrt{2(1-\rho)\alpha_2}
 -\dfrac{\alpha_2}{\beta},
 &\beta\ge b_2.
 \end{cases}
 \label{eq:hcsecondcase}
\end{equation}
Therefore
\begin{equation}
 \hc'(\beta)
 =
 \begin{cases}
 -\dfrac{\alpha_1}{\beta^2}+\dfrac12,
 &0<\beta<b_1,\\[8pt]
 \dfrac{1-\rho}{2},
 &b_1<\beta<b_2,\\[8pt]
 \dfrac{\alpha_2}{\beta^2},
 &\beta>b_2.
 \end{cases}
 \label{eq:hcsecondderivative}
\end{equation}
Because $0<\rho<1$,
\begin{equation*}
 c=\sqrt{2\alpha_1}
 <\sqrt{\frac{2\alpha_1}{\rho}}
 =b_1.
\end{equation*}
Thus the first expression in \eqref{eq:hcsecondderivative} is negative on $(0,c)$ and positive on $(c,b_1)$, while the other two expressions are strictly positive. The formulas in \eqref{eq:hcsecondcase} agree at $\beta=b_1$ and $\beta=b_2$; explicitly,
\begin{align*}
 \frac{\alpha_1}{b_1}+\frac{b_1}{2}
 &=\sqrt{2\rho\alpha_1}+\frac{b_1(1-\rho)}2,\\
 \sqrt{2\rho\alpha_1}+\frac{b_2(1-\rho)}2
 &=\sqrt{2\rho\alpha_1}
   +\sqrt{2(1-\rho)\alpha_2}
   -\frac{\alpha_2}{b_2},
\end{align*}
using $b_1^2=2\alpha_1/\rho$ and
$b_2^2=2\alpha_2/(1-\rho)$.

In both parameter cases, $\hc'(\beta)<0$ for $0<\beta<c$ and $\hc'(\beta)>0$ for $\beta>c$. Hence the unique minimizer is
\begin{equation*}
 \beta=c=\sqrt{2\alpha_1}.
\end{equation*}
Finally,
\begin{equation*}
 \hc(c)
 =\frac{\alpha_1}{c}+\frac c2
 =c,
\end{equation*}
because $c^2=2\alpha_1$. This proves \eqref{eq:optimal}.
\end{proof}

\subsection{Uniform control over all temperatures}
\begin{lemma}[Maximum scores and compact uniform convergence]\label{lem:uniform}
Let $V_d=d^{-1}\max_{a,j}X_{aj}$ and $B_d=(N_dd)^{-1}\sum_{a,j}X_{aj}$. Then $B_d\pto0$, and $V_d\pto v$ where
\begin{equation}
 v=
 \begin{cases}
 \sqrt{2\alpha},
 &\rho\le\alpha_1/\alpha,\\[2pt]
 \sqrt{2\rho\alpha_1}+\sqrt{2(1-\rho)\alpha_2},
 &\rho>\alpha_1/\alpha.
 \end{cases}
 \label{eq:vmax}
\end{equation}
For each $0<t<T<\infty$, $\sup_{\beta\in[t,T]}|L_d(\beta)-F(\beta)|\pto0$.
\end{lemma}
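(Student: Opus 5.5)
The plan has three parts, taken in order: the empirical mean $B_d$, the maximum $V_d$, and then uniform convergence of $L_d$ on $[t,T]$.

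\emph{The empirical mean.} The quantity $B_d$ is a centered Gaussian. Its variance is controlled by the cluster structure:
\begin{equation*}
 \Var\Big(\sum_{a,j}X_{aj}\Big)=d\big(M_dm_d^2\rho+N_d(1-\rho)\big)\le dN_dm_d .
\end{equation*}
Hence
\begin{equation*}
 \Var B_d\le \frac{m_d}{N_dd}=\frac{1}{M_dd}\to0,
\end{equation*}
and Chebyshev's inequality gives $B_d\pto0$.

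\emph{The maximum.} I would sandwich $V_d$ between free energies, using Theorem~\ref{thm:free} (convergence in probability at each fixed $\beta$). For every realization and every $\beta>0$,
\begin{equation*}
 \beta V_d\le L_d(\beta)\le d^{-1}\log N_d+\beta V_d .
\end{equation*}
Dividing by $\beta$, the left inequality gives, with probability tending to one, $V_d\le F(\beta)/\beta+\varepsilon$ for each fixed $\beta$. The right inequality gives $V_d\ge (F(\beta)-\alpha)/\beta-\varepsilon$. By \eqref{eq:free}, \eqref{eq:rem} and \eqref{eq:split}, for $\beta$ beyond the last freezing point we have $F(\beta)=\beta v$, with $v$ exactly as in \eqref{eq:vmax}. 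So the upper bound equals $v+\varepsilon$, and the lower bound equals $v-\alpha/\beta-\varepsilon$. Choosing $\beta$ large first and then $\varepsilon$ small yields $V_d\pto v$. Alternatively, the lower bound follows directly from the lower-bound part of Lemma~\ref{lem:counting}, applied at the boundary point $(x,y)$ of the variational problem.

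\emph{Uniform convergence on $[t,T]$.} I would use convexity and monotonicity in $\beta$. The map $\beta\mapsto L_d(\beta)$ is convex, and its derivative
\begin{equation*}
 L_d'(\beta)=d^{-1}\sum_{a,j}p_{aj}X_{aj}
\end{equation*}
lies between $B_d$ and $V_d$; the lower bound $B_d$ holds because $L_d'$ is nondecreasing and $L_d'(0)=B_d$. The two previous parts then give, with probability tending to one, the Lipschitz bound
\begin{equation*}
 |L_d'|\le v+1 \quad\text{on }[t,T].
\end{equation*}
The limit $F$ is continuous and, being a pointwise limit of convex functions, convex, with slopes bounded on $[t,T]$. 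Now take a finite grid $\beta_0<\dots<\beta_K$ in $[t,T]$ with mesh $\delta$. Pointwise convergence at the finitely many grid points holds simultaneously with probability tending to one. Combined with the uniform Lipschitz bound on both $L_d$ and $F$, this gives
\begin{equation*}
 \sup_{[t,T]}|L_d-F|\le\max_k|L_d(\beta_k)-F(\beta_k)|+2(v+1)\delta .
\end{equation*}
Letting $\delta\downarrow0$ completes the proof.

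\emph{Main obstacle.} The delicate step is the identification of $V_d$'s limit. It requires checking that the large-$\beta$ branch of $F$ is linear with slope $v$ in both regimes. It also requires being careful that the sandwich uses only fixed $\beta$, so that Theorem~\ref{thm:free} applies without any uniformity assumption. The rest is the standard convexity argument: a grid plus a Lipschitz bound upgrades pointwise convergence to uniform convergence on compacts.
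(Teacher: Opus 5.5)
Your proposal is correct and follows the paper's proof in all three parts: Chebyshev for $B_d$, where your variance computation gives the same exact value $\rho/(M_dd)+(1-\rho)/(N_dd)$ as the paper before you bound it by $1/(M_dd)$; the sandwich $\beta V_d\le L_d(\beta)\le d^{-1}\log N_d+\beta V_d$ at one large fixed $\beta$ for $V_d$; and a finite grid plus a random Lipschitz bound for uniform convergence on $[t,T]$. The one difference is where the Lipschitz bound comes from: the paper bounds $|L_d'|$ by $d^{-1}\max_{a,j}|X_{aj}|$ and controls this with a separate Gaussian union bound, whereas you use convexity of $L_d$ to trap $L_d'$ between $L_d'(0)=B_d$ and $V_d$, which reuses the first two parts and avoids the extra tail estimate.
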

\begin{proof}
Using \eqref{eq:model} and $N_d=M_dm_d$, write
\begin{align}
 B_d
 &=\frac1{N_dd}\sum_{a=1}^{M_d}\sum_{j=1}^{m_d}
 \left(\sqrt{\rho d}\,G_a+\sqrt{(1-\rho)d}\,H_{aj}\right) \notag\\
 &=\frac{\sqrt{\rho d}}{M_dd}\sum_{a=1}^{M_d}G_a
 +\frac{\sqrt{(1-\rho)d}}{N_dd}
 \sum_{a=1}^{M_d}\sum_{j=1}^{m_d}H_{aj}.
 \label{eq:Bddecomposition}
\end{align}
The two sums in \eqref{eq:Bddecomposition} are independent and centered. Therefore $\E B_d=0$ and
\begin{equation*}
 \Var(B_d)=\frac{\rho d}{M_d^2d^2}M_d
 +\frac{(1-\rho)d}{N_d^2d^2}N_d=\frac{\rho}{M_dd}
 +\frac{1-\rho}{N_dd}.
\end{equation*}
Since $M_d\to\infty$ and $N_d\to\infty$, Chebyshev's inequality gives, for every $\varepsilon>0$,
\begin{equation*}
 \Prob(|B_d|>\varepsilon)
 \le
 \frac1{\varepsilon^2}
 \left(
 \frac{\rho}{M_dd}
 +\frac{1-\rho}{N_dd}
 \right)
 \longrightarrow0.
\end{equation*}
Hence $B_d\pto0$.

We next prove the limit of $V_d$. For every fixed $\beta>0$, $e^{\beta dV_d}\le \Z_d(\beta)\le N_de^{\beta dV_d}$. Taking logarithms and dividing by $\beta d$ gives
\begin{equation}
 \frac{L_d(\beta)}{\beta}
 -\frac{\log N_d}{\beta d}
 \le V_d
 \le\frac{L_d(\beta)}{\beta}.
 \label{eq:Vdsqueeze}
\end{equation}

For each fixed $\beta$, Theorem~\ref{thm:free} gives
\begin{equation}
 \frac{L_d(\beta)}{\beta}
 \pto\frac{F(\beta)}{\beta},
 \qquad
 \frac{\log N_d}{\beta d}
 \longrightarrow\frac{\alpha}{\beta}.
 \label{eq:fixedbetalimits}
\end{equation}
Moreover, combining \eqref{eq:rem} and \eqref{eq:free}, and dividing by $\beta$ gives
\begin{equation*}
 \lim_{\beta\to\infty}\frac{F(\beta)}{\beta}
 =
 \begin{cases}
 \sqrt{2\alpha},
 &\rho\le\alpha_1/\alpha,\\[2pt]
 \sqrt{2\rho\alpha_1}+\sqrt{2(1-\rho)\alpha_2},
 &\rho>\alpha_1/\alpha,
 \end{cases}
 =v.
\end{equation*}

Fix $\varepsilon>0$. Choose a finite number $\beta_0>0$ sufficiently large that
\begin{equation*}
 \left|\frac{F(\beta_0)}{\beta_0}-v\right|
 <\varepsilon,
 \qquad
 \frac{\alpha+1}{\beta_0}<\varepsilon.
\end{equation*}
Since $d^{-1}\log N_d\to\alpha$, for all sufficiently large $d$, we have $d^{-1}\log N_d\le\alpha + 1$. On the event $\left|L_d(\beta_0)-F(\beta_0)\right|
 \le\beta_0\varepsilon$, the upper bound in \eqref{eq:Vdsqueeze} gives
\begin{equation*}
 V_d
 \le\frac{F(\beta_0)}{\beta_0}+\varepsilon
 \le v+2\varepsilon,
\end{equation*}
and the lower bound gives
\begin{equation*}
 V_d
 \ge\frac{F(\beta_0)}{\beta_0}
 -\varepsilon
 -\frac{\alpha+1}{\beta_0}
 \ge v-3\varepsilon.
\end{equation*}
The probability of this event tends to one by
\eqref{eq:fixedbetalimits}. Hence
\begin{equation*}
 \Prob(|V_d-v|>3\varepsilon)\longrightarrow0,
\end{equation*}
and since $\varepsilon>0$ was arbitrary, $V_d\pto v$.

Finally, fix $0<t<T<\infty$. Define
\begin{equation*}
 Q_d:=\frac1d\max_{a,j}|X_{aj}|.
\end{equation*}
Since
\begin{equation*}
 \frac{\partial L_d(\beta)}{\partial\beta}
 =\frac1d\sum_{a,j}p_{aj}(\beta)X_{aj},
\end{equation*}
we have
\begin{equation*}
 \left|\frac{\partial L_d(\beta)}{\partial\beta}\right|
 \le\frac1d\sum_{a,j}p_{aj}(\beta)|X_{aj}|
 \le Q_d.
\end{equation*}
Therefore, for every $\beta,\gamma>0$, $|L_d(\beta)-L_d(\gamma)|\le Q_d|\beta-\gamma|$.

Since every $X_{aj}$ is marginally $N(0,d)$, for every $R>0$, a union bound gives
\begin{equation}
 \Prob(Q_d>R)
 \le2N_d\exp\left(-\frac{dR^2}{2}\right).
 \label{eq:Qdtail}
\end{equation}
Choose $R$ so that $R^2/2>\alpha+1$. Since
$d^{-1}\log N_d\to\alpha$, the right-hand side of
\eqref{eq:Qdtail} tends to zero. Hence $\Prob(Q_d\le R)\longrightarrow1$.

Let $\varepsilon>0$. Because $F$ is continuous on the compact interval $[t,T]$, choose $\delta>0$ such that
\begin{equation*}
 |\beta-\gamma|\le\delta
 \quad\Longrightarrow\quad
 |F(\beta)-F(\gamma)|\le\frac{\varepsilon}{3},
\end{equation*}
and decrease $\delta$ further so that $R\delta\le\varepsilon/3$. Choose finitely many grid points
\begin{equation*}
 t=\beta_0<\beta_1<\cdots<\beta_K=T
\end{equation*}
such that every $\beta\in[t,T]$ is within distance $\delta$ of at least one $\beta_k$. At each fixed grid point, $L_d(\beta_k)\pto F(\beta_k)$. Because the grid contains only finitely many points, the union bound gives
\begin{equation*}
 \Prob\left(
 \max_{0\le k\le K}
 |L_d(\beta_k)-F(\beta_k)|
 >\frac{\varepsilon}{3}
 \right)
 \longrightarrow0.
\end{equation*}

On the event $Q_d\le R$ and $\max_{0\le k\le K}|L_d(\beta_k)-F(\beta_k)|\le\varepsilon/3$, choose for each $\beta\in[t,T]$ a grid point $\beta_k$ satisfying $|\beta-\beta_k|\le\delta$. Then
\begin{align*}
 |L_d(\beta)-F(\beta)|
 &\le
 |L_d(\beta)-L_d(\beta_k)|
 +|L_d(\beta_k)-F(\beta_k)|
 +|F(\beta_k)-F(\beta)| \notag\\
 &\le
 R\delta+\frac{\varepsilon}{3}+\frac{\varepsilon}{3}
 \le\varepsilon.
\end{align*}
Therefore
\begin{equation*}
 \Prob\left(
 \sup_{\beta\in[t,T]}|L_d(\beta)-F(\beta)|
 >\varepsilon
 \right)
 \longrightarrow0.
\end{equation*}
This proves the compact uniform convergence.
\end{proof}

\begin{proof}[Proof of Theorem~\ref{thm:optimal}]
Put $c:=\sqrt{2\alpha_1}$ and
\begin{equation}
 H_d(\beta)=\frac{\log\Z_d(\beta)-\log m_d}{\beta d}.
 \label{eq:Hddefinition}
\end{equation}
Lemma~\ref{lem:shape} gives
\begin{equation}
 h_{\mathrm{cnt}}(\beta)\ge c
 \quad\hbox{for every }\beta>0,
 \qquad
 h_{\mathrm{cnt}}(\beta)=c
 \quad\Longleftrightarrow\quad
 \beta=c.
 \label{eq:uniquecountminimum}
\end{equation}
For every fixed $\beta>0$, \eqref{eq:countodds} and $L_d(\beta)\pto F(\beta)$ give
\begin{equation}
 \frac{1}{d}\log\frac{1-\ac(\beta)}{\ac(\beta)}
 =\beta\left[
 \frac{L_d(\beta)-d^{-1}\log m_d}{\beta}
 -\frac{X_*}{d}\right]
 \pto
 \beta\big[h_{\mathrm{cnt}}(\beta)-s\big].
 \label{eq:fixedcountodds}
\end{equation}
If $s<c$, then the limit in \eqref{eq:fixedcountodds} is strictly positive for every fixed $\beta$, and therefore $\ac(\beta)\pto0$. If $s>c$, choose $\beta=c$.  Then the limit is $c(c-s)<0$, and hence $\ac(c)\pto1$.  This proves the fixed-temperature assertions.

We next prove \eqref{eq:adaptive}.  Jensen's inequality gives
\begin{equation}
 \Z_d(\beta)
 \ge
 N_d\exp\left\{
 \frac{\beta}{N_d}\sum_{a,j}X_{aj}\right\}
 =
 N_d e^{\beta dB_d}.
 \label{eq:JensenZlower}
\end{equation}
where $B_d=(N_dd)^{-1}\sum_{a,j}X_{aj}$. Also, the largest summand in $\Z_d(\beta)$ gives
\begin{equation}
 \Z_d(\beta)\ge e^{\beta dV_d}.
 \label{eq:maxtermZlower}
\end{equation}
Substituting \eqref{eq:JensenZlower} and \eqref{eq:maxtermZlower} into \eqref{eq:Hddefinition} yields, for every $\beta>0$,
\begin{equation}
 H_d(\beta)\ge\frac{\log M_d}{\beta d}+B_d,
 \qquad
 H_d(\beta)\ge V_d-\frac{\log m_d}{\beta d}.
 \label{eq:endpointcontrols}
\end{equation}

The limit $v$ in \eqref{eq:vmax} is strictly larger than $c$. We first aim to prove the lower bound
\begin{equation}
 \inf_{\beta>0}H_d(\beta)\ge c-\varepsilon
 \qquad
 \text{for }
 0<\varepsilon<
 \min\left\{1,\frac{v-c}{4}\right\}.
 \label{eq:adaptiveLowerBound}
\end{equation}
For such an $\varepsilon$, choose fixed numbers $0<t<T<\infty$ such that
\begin{equation}
 \frac{\alpha_1}{2t}>c+1,
 \qquad
 v-\frac{\alpha_2+1}{T}>c+2\varepsilon.
 \label{eq:chooseendpointinterval}
\end{equation}
By Lemma~\ref{lem:uniform} and the definitions of $M_d$ and $m_d$, with probability tending to one, we have $B_d\ge-\varepsilon$, $V_d\ge v-\varepsilon$, $d^{-1}\log M_d\ge\frac{\alpha_1}{2}$ and $d^{-1}\log m_d\le\alpha_2+1$. On this event, \eqref{eq:endpointcontrols} gives
\begin{equation}
  H_d(\beta)\ge\frac{\alpha_1}{2t}-\varepsilon
  \qquad
  \text{for }0<\beta<t,
  \label{eq:smallbetahbound}
\end{equation}
and
\begin{equation}
  H_d(\beta) \ge v-\varepsilon-\frac{\alpha_2+1}{T}
 \qquad
 \text{for }\beta>T.
 \label{eq:largebetahbound}
\end{equation}
Both lower bounds in \eqref{eq:smallbetahbound} and \eqref{eq:largebetahbound} exceed $c-\varepsilon$. On the fixed compact interval $[t,T]$,
\begin{equation}
 \sup_{\beta\in[t,T]}
 \left|H_d(\beta)-h_{\mathrm{cnt}}(\beta)\right|
 \pto0,
 \label{eq:Huniform}
\end{equation}
because $H_d(\beta)=L_d(\beta)/\beta-d^{-1}\log(m_d)/\beta$ and Lemma~\ref{lem:uniform} gives uniform convergence of $L_d$ there. Combining \eqref{eq:Huniform} with \eqref{eq:uniquecountminimum} shows
\begin{equation}
 \inf_{\beta\in[t,T]}H_d(\beta)\ge c-\varepsilon
 \label{eq:middlebetahbound}
\end{equation}
with probability tending to one. Equations \eqref{eq:smallbetahbound}, \eqref{eq:largebetahbound}, and \eqref{eq:middlebetahbound} imply \eqref{eq:adaptiveLowerBound} with probability tending to one. Conversely,
\begin{equation}
 \inf_{\beta>0}H_d(\beta)\le H_d(c)\pto h_{\mathrm{cnt}}(c)=c.
 \label{eq:adaptiveUpperBound}
\end{equation}
Since $\varepsilon>0$ is arbitrary, \eqref{eq:adaptiveLowerBound} and \eqref{eq:adaptiveUpperBound} prove \eqref{eq:adaptive}.

It remains to show uniform failure when $s<c$. Set $\delta:=(c-s)/4>0$ and choose
\begin{equation*}
 \varepsilon:=\min\left\{\frac{\delta}{2},\frac12,\frac{v-c}{8}\right\}.
\end{equation*}
Then $0<\varepsilon<\min\{1,(v-c)/4\}$ and $\varepsilon<\delta$. The lower-bound argument above, with this choice of $\varepsilon$, shows that, with probability tending to one,
\begin{equation}
 H_d(\beta)\ge c-\varepsilon
 \quad\hbox{for every }\beta>0.
 \label{eq:allbetaHlower}
\end{equation}
On the same event, we may also require $X_*/d\le s+\delta$.  For every $\beta\ge t$, \eqref{eq:countodds} and \eqref{eq:allbetaHlower} give
\begin{align}
 \log\frac{1-\ac(\beta)}{\ac(\beta)}
 &=\beta d\left[H_d(\beta)-\frac{X_*}{d}\right]\notag\\
 &\ge\beta d(c-s-\delta-\varepsilon)\notag\\
 &\ge td(c-s-2\delta)
 =\frac{td(c-s)}{2}.
 \label{eq:failuremoderatebeta}
\end{align}
For $0<\beta<t$, use \eqref{eq:JensenZlower} directly in \eqref{eq:countodds}:
\begin{equation}
 d^{-1}\log\frac{1-\ac}{\ac}
 \ge d^{-1}\log M_d+\beta(B_d-X_*/d).
 \label{eq:failuresmallbeta}
\end{equation}
Decrease $t$, if necessary, while preserving \eqref{eq:chooseendpointinterval}, so that
\begin{equation}
 t(|s|+2)<\frac{\alpha_1}{4}.
 \label{eq:choosefailuret}
\end{equation}
With probability tending to one, we have $|B_d|\le1$, $\left|X_*/d\right|\le|s|+1$, and $d^{-1}\log M_d\ge\alpha_1/2$. On this event, \eqref{eq:failuresmallbeta} and \eqref{eq:choosefailuret} imply, for every $0<\beta<t$,
\begin{equation}
 d^{-1}\log\frac{1-\ac(\beta)}{\ac(\beta)}
 \ge\frac{\alpha_1}{2}-t(|s|+2)
 >\frac{\alpha_1}{4}.
 \label{eq:failureSmallBetaFinal}
\end{equation}
Together, \eqref{eq:failuremoderatebeta} and \eqref{eq:failureSmallBetaFinal} show that
\begin{equation*}
 \inf_{\beta>0}\log\frac{1-\ac(\beta)}{\ac(\beta)}
 \xrightarrow{\mathbb P}+\infty.
\end{equation*}
This implies $\sup_{\beta>0}\ac(\beta)\pto0$.
\end{proof}

\begin{proof}[Proof of Proposition~\ref{prop:finite}]
The event $\ac\le1-\varepsilon$ is equivalent to $\Z_d/m_d\ge[\varepsilon/(1-\varepsilon)]e^{\beta sd}$. For every fixed $(a,j)$, $X_{aj}\sim N(0,d)$, and hence $\E e^{\beta X_{aj}}=\exp(\beta^2d/2)$. Using linearity of expectation,
\begin{equation*}
 \E\left[\frac{\Z_d}{m_d}\right]
 =\frac1{m_d}\sum_{a=1}^{M_d}\sum_{j=1}^{m_d}\E e^{\beta X_{aj}}
 =M_d\exp(\beta^2d/2).
\end{equation*}
Markov's inequality proves \eqref{eq:finite}.

Completing the square $\beta^2/2-\beta s=(\beta-s)^2/2-s^2/2$ gives the minimizing positive temperature when $s>0$. This minimizes the upper bound for a given $s$, whereas $\beta=\sqrt{2\alpha_1}$ minimizes the asymptotic signal threshold; these are different optimization problems.
\end{proof}

\section{Averaging, estimation error, and value recovery}
\label{app:averaging}
\begin{proof}[Proof of Theorem~\ref{thm:averaging}]
For every cluster,
\begin{equation*}
 e^{\beta\bar X_a}
 =\exp\left\{\frac1{m_d}\sum_{j=1}^{m_d}\beta X_{aj}\right\}
 \le\frac1{m_d}\sum_{j=1}^{m_d}e^{\beta X_{aj}}.
\end{equation*}
by Jensen's inequality. Summing over $a$ gives $\Z_d^{\mathrm{avg}}=\sum_{a=1}^{M_d}e^{\beta\bar X_a}\le\Z_d/m_d$. Therefore, we have
\begin{equation*}
 \am
 =\frac{e^{\beta X_*}}{e^{\beta X_*}+\Z_d^{\mathrm{avg}}}
 \ge
 \frac{e^{\beta X_*}}{e^{\beta X_*}+\Z_d/m_d}
 =\ac.
\end{equation*}
For the Gaussian model,
\begin{equation*}
 \bar X_a
 =\sqrt{\rho d}\,G_a+R_a,
 \qquad
 R_a:=\sqrt{(1-\rho)d}\,
 \frac1{m_d}\sum_{j=1}^{m_d}H_{aj}.
\end{equation*}
Since $m_d^{-1}\sum_jH_{aj}\sim N(0,m_d^{-1})$, $R_a\sim N\left(0,\frac{(1-\rho)d}{m_d}\right)$. The $R_a$ are independent across clusters and independent of $(G_a)_a$. Hence, by the union bound, for every $u>0$,
\begin{equation*}
 \Prob\left\{\max_a|R_a|>ud\right\}
 \le2M_d\exp\left[-\frac{u^2dm_d}{2(1-\rho)}\right]\longrightarrow0.
\end{equation*}
Put $Y_a:=\sqrt{\rho d}\,G_a$.  For every realized array,
\begin{equation*}
 e^{-\beta\max_a|R_a|}\sum_ae^{\beta Y_a}
 \le\sum_ae^{\beta(Y_a+R_a)}
 \le e^{\beta\max_a|R_a|}\sum_ae^{\beta Y_a}.
\end{equation*}
Taking logarithms and dividing by $d$ gives
\begin{equation*}
 \left|
 \frac1d\log\Z_d^{\mathrm{avg}}
 -\frac1d\log\sum_{a=1}^{M_d}e^{\beta\sqrt{\rho d}G_a}
 \right|
 \le\frac{\beta}{d}\max_a|R_a|
 \pto0.
\end{equation*}
The remaining sum has $M_d=\exp(\alpha_1d+o(d))$ independent $N(0,\rho d)$ scores.  The one-level version of Lemma~\ref{lem:counting} therefore gives
\begin{equation*}
 \frac1d\log\sum_{a=1}^{M_d}e^{\beta\sqrt{\rho d}G_a}
 \pto f(\beta;\alpha_1,\rho),
\end{equation*}
and thus $d^{-1}\log\Z_d^{\mathrm{avg}} \pto f(\beta;\alpha_1,\rho)$. Since
\begin{equation*}
 \log\frac{1-\am}{\am}
 =\log\Z_d^{\mathrm{avg}}-\beta X_*,
\end{equation*}
we have
\begin{equation*}
 \frac{1}{\beta d}\log\frac{1-\am}{\am}
 \pto\frac{f(\beta;\alpha_1,\rho)}{\beta}-s
 =\hm(\beta)-s.
\end{equation*}
Thus $\am\pto1$ when $s>\hm(\beta)$ and $\am\pto0$ when $s<\hm(\beta)$.  Put $b_1:=\sqrt{2\alpha_1/\rho}$. The one-level free energy $f(\beta;\alpha_1,\rho)$ gives
\begin{equation*}
 \hm(\beta)
 =
 \begin{cases}
 \dfrac{\alpha_1}{\beta}+\dfrac{\rho\beta}{2},
 &0<\beta\le b_1,\\[8pt]
 \sqrt{2\rho\alpha_1},
 &\beta\ge b_1.
 \end{cases}
\end{equation*}
The derivative of the first expression is $-\alpha_1/\beta^2+\rho/2<0$ for $0<\beta<b_1$, so it is decreasing and its value at $b_1$ is $\sqrt{2\rho\alpha_1}$. Hence $\inf_{\beta>0}\hm(\beta)=\sqrt{2\rho\alpha_1}$ with equality for every $\beta\ge b_1$. 

Since $0<\rho<1$, $\sqrt{2\rho\alpha_1}<\sqrt{2\alpha_1}$. For every $s$ strictly between these values, $\am\pto1$ for every fixed $\beta\ge b_1$, while Theorem~\ref{thm:optimal} gives $\sup_{\beta>0}\ac(\beta)\pto0$ even when the count-correction temperature is selected after observing the distractor scores.
\end{proof}

\begin{proof}[Proof of Corollary~\ref{cor:singletons}]
Let $Y_1,\ldots,Y_{L_d}$ be the added singleton distractor scores and put $Z_{0,d}(\beta):=\sum_{\ell=1}^{L_d}e^{\beta Y_\ell}$. Since $L_d=\exp(a_0d+o(d))$ and the $Y_\ell$ are independent $N(0,d)$ variables, the one-level free-energy formula gives $d^{-1} \log Z_{0,d}(\beta)\pto f(\beta;a_0,1)$. Count correction leaves each added singleton unchanged because its group size is one. Thus the total distractor weights in the augmented model are
\begin{equation*}
 D_{\mathrm{cnt},0}(\beta)
 :=\frac{\Z_d(\beta)}{m_d}+Z_{0,d}(\beta),
 \qquad
 D_{\mathrm{avg},0}(\beta)
 :=\Z_d^{\mathrm{avg}}(\beta)+Z_{0,d}(\beta).
\end{equation*}
Write
\begin{equation*}
 A^{\mathrm{cnt}}_{*,0}(\beta)
 :=\frac{e^{\beta X_*}}{e^{\beta X_*}+D_{\mathrm{cnt},0}(\beta)},
 \qquad
 A^{\mathrm{avg}}_{*,0}(\beta)
 :=\frac{e^{\beta X_*}}{e^{\beta X_*}+D_{\mathrm{avg},0}(\beta)}.
\end{equation*}
Their exact log odds are
\begin{equation}
 \log\frac{1-A^{\mathrm{cnt}}_{*,0}(\beta)}
 {A^{\mathrm{cnt}}_{*,0}(\beta)}
 =\log D_{\mathrm{cnt},0}(\beta)-\beta X_*,
 \qquad
 \log\frac{1-A^{\mathrm{avg}}_{*,0}(\beta)}
 {A^{\mathrm{avg}}_{*,0}(\beta)}
 =\log D_{\mathrm{avg},0}(\beta)-\beta X_*.
 \label{eq:augmentedlogodds}
\end{equation}
For positive $U,V$, observe that
\begin{equation*}
 \max\{\log U,\log V\}
 \le\log(U+V)
 \le\max\{\log U,\log V\}+\log2.
\end{equation*}
The term $d^{-1}\log2$ vanishes. Applying this bound to the two sums defining $D_{\mathrm{cnt},0}$ and $D_{\mathrm{avg},0}$, and using Theorems~\ref{thm:optimal} and \ref{thm:averaging}, gives
\begin{equation}
 \frac{1}{\beta d}\log
 \frac{1-A^{\mathrm{cnt}}_{*,0}(\beta)}
 {A^{\mathrm{cnt}}_{*,0}(\beta)}
 \pto h_{\mathrm{cnt},0}(\beta)-s,
 \qquad
 \frac{1}{\beta d}\log
 \frac{1-A^{\mathrm{avg}}_{*,0}(\beta)}
 {A^{\mathrm{avg}}_{*,0}(\beta)}
 \pto h_{\mathrm{avg},0}(\beta)-s,
 \label{eq:augmentedthresholdlimits}
\end{equation}
where $h_{\mathrm{cnt},0}(\beta)=\max\{\hc(\beta),f(\beta;a_0,1)/\beta\}$ and $h_{\mathrm{avg},0}(\beta)=\max\{\hm(\beta),f(\beta;a_0,1)/\beta\}$.

Put $c:=\sqrt{2\alpha_1}$. Theorem~\ref{thm:optimal} gives $\hc(\beta)\ge c$ for every $\beta>0$ and $\hc(c)=c$. This implies $h_{\mathrm{cnt},0}(\beta)\ge \hc(\beta)\ge c$ for every $\beta>0$. At $\beta = c$, since $a_0<\alpha_1$, we also have $c>\sqrt{2a_0}$, and the definition \eqref{eq:rem} gives
\begin{equation*}
 \frac{f(c;a_0,1)}{c}=\sqrt{2a_0}<c.
\end{equation*}
Therefore, $h_{\mathrm{cnt},0}(\beta)\ge c$ for every $\beta>0$ and $h_{\mathrm{cnt},0}(c)=c$.

If $s>c$, the first limit in \eqref{eq:augmentedthresholdlimits} at $\beta=c$ is negative, so $A^{\mathrm{cnt}}_{*,0}(c)\pto1$. If $s<c$, then samplewise
\begin{equation*}
 A^{\mathrm{cnt}}_{*,0}(\beta)
 =\frac{e^{\beta X_*}}
 {e^{\beta X_*}+\Z_d(\beta)/m_d+Z_{0,d}(\beta)}
 \le
 \frac{e^{\beta X_*}}
 {e^{\beta X_*}+\Z_d(\beta)/m_d}
 =\ac(\beta).
\end{equation*}
Hence
\begin{equation*}
 \sup_{\beta>0}A^{\mathrm{cnt}}_{*,0}(\beta)
 \le\sup_{\beta>0}\ac(\beta)\pto0
\end{equation*}
by Theorem~\ref{thm:optimal}, which proves failure for every data-dependent temperature choice.

For averaging, Theorem~\ref{thm:averaging} gives $\hm(\beta)=\sqrt{2\rho\alpha_1}$ for $\beta\ge b_1$, and \eqref{eq:rem} gives
\begin{equation*}
 \frac{f(\beta;a_0,1)}{\beta}=\sqrt{2a_0}
 \qquad\hbox{for }\beta\ge\sqrt{2a_0}.
\end{equation*}
Thus, for every fixed $\beta\ge\max\{b_1,\sqrt{2a_0}\}$, we have
\begin{equation*}
 h_{\mathrm{avg},0}(\beta)
 =\max\{\sqrt{2\rho\alpha_1},\sqrt{2a_0}\}
 =c_{\mathrm{avg},0}.
\end{equation*}
If $s>c_{\mathrm{avg},0}$, the second limit in \eqref{eq:augmentedthresholdlimits} is negative, so $A^{\mathrm{avg}}_{*,0}(\beta)\pto1$. Moreover,
\begin{equation*}
 \sqrt{2\rho\alpha_1}<\sqrt{2\alpha_1}=c,
 \qquad
 \sqrt{2a_0}<\sqrt{2\alpha_1}=c,
\end{equation*}
because $0<\rho<1$ and $a_0<\alpha_1$. Hence $c_{\mathrm{avg},0}<c$, proving \eqref{eq:singletonfloor} and the stated separation interval. If $a_0\le\rho\alpha_1$, then $\sqrt{2a_0}\le\sqrt{2\rho\alpha_1}$ and therefore $c_{\mathrm{avg},0}=\sqrt{2\rho\alpha_1}$, so the original interval is unchanged.

There are $L_d+1$ singleton groups, and the planted item is placed uniformly among them. A method that sees only the group memberships and group sizes cannot distinguish these singleton groups. If it selects singleton $\ell$ with probability $q_\ell$, then its success probability is
\begin{equation*}
 \sum_{\ell=1}^{L_d+1}q_\ell\frac1{L_d+1}
 =\frac1{L_d+1}.
\end{equation*}
This bound does not apply to methods that also use the scores.
\end{proof}

\begin{proposition}[Deterministic relative-log-error bound]\label{prop:robust}
Let $D$ and $\widehat D$ be the log odds against the planted item obtained from count correction using the true counts $m_i$ and estimated counts $\widehat m_i$. Under \eqref{eq:relativeerror}, $|\widehat D-D|\le r_d$. If every score used by averaging, including the planted score, is perturbed by at most $u_d$, then its log odds change by at most $2\beta u_d$.
\end{proposition}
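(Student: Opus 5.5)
Both parts are deterministic and come from writing the log odds in explicit form and comparing the two expressions term by term. Index groups by $i$, with the planted item in group $*$. With true counts the count-corrected log odds are
\begin{equation*}
 D=\log\sum_{i\ne *}\sum_{k\in i}\frac{e^{\beta X_k}}{m_i}-\Bigl(\beta X_*-\log m_*\Bigr).
\end{equation*}
This general form reduces to \eqref{eq:countodds} when $m_*=1$ and all other groups have size $m_d$. Replacing $m_i$ by $\widehat m_i$ gives $\widehat D$. The plan is to factor out the planted correction and use a termwise ratio bound. Writing $\widehat D=\log\sum_{i\ne*}\sum_{k\in i}e^{\beta X_k-\beta X_*}\,\widehat m_*/\widehat m_i$, and similarly for $D$, each summand of $\widehat D$ equals the corresponding summand of $D$ multiplied by $\exp\{-[\log(\widehat m_i/m_i)-\log(\widehat m_*/m_*)]\}$.

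By \eqref{eq:relativeerror}, every multiplier lies in $[e^{-r_d},e^{r_d}]$. A sum of positive terms whose individual ratios all lie in this interval has total ratio in the same interval. Taking logarithms gives $|\widehat D-D|\le r_d$. This is the only point where the relative form of the error matters: a common multiplicative error cancels between the distractor sum and the planted weight.

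For averaging, the log odds are $D^{\mathrm{avg}}=\log\sum_a e^{\beta\bar X_a}-\beta X_*$, with any added singleton distractors included as extra terms. Suppose each $\bar X_a$, and the planted score $X_*$, is replaced by a value within $u_d$ of the original. Then each distractor summand changes by a factor in $[e^{-\beta u_d},e^{\beta u_d}]$, so the log-sum-exp changes by at most $\beta u_d$. The planted term $\beta X_*$ changes by at most $\beta u_d$. The triangle inequality gives the total bound $2\beta u_d$. An equivalent route is to note that log-sum-exp is $\beta$-Lipschitz in the sup norm, because its gradient is $\beta$ times a probability vector.

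There is no real obstacle. The only care needed is bookkeeping: the planted group's own count must be included in the count-correction formula, which is why the bound is stated with the difference $\log(\widehat m_i/m_i)-\log(\widehat m_*/m_*)$. The conclusion about fixed $\beta$ then follows by dividing by $d$. If $r_d=o(d)$ and $u_d=o(d)$, the normalized log-odds limits from Theorems~\ref{thm:optimal} and \ref{thm:averaging} are unchanged, so the conclusions hold away from the boundary.
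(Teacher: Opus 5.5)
Your proof is correct and follows essentially the paper's argument: a termwise multiplicative ratio bound $e^{\pm r_d}$ on the summands of the sum inside the count-correction log odds. For averaging, you split the $2\beta u_d$ into $\beta u_d$ from the log-sum-exp and $\beta u_d$ from the planted term, whereas the paper bounds each odds summand $e^{\beta(X_i-X_*)}$ directly by a factor in $[e^{-2\beta u_d},e^{2\beta u_d}]$; this difference is immaterial.
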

\begin{proof}
For count correction,
\begin{equation*}
 e^D=\sum_{i\ne *}e^{\beta(X_i-X_*)}\frac{m_*}{m_i}.
\end{equation*}
Put $e_i:=\log(\widehat m_i/m_i)$. Replacing $m_i$ by $\widehat m_i$ multiplies the $i$th summand by $e^{e_*-e_i}$, which lies in $[e^{-r_d},e^{r_d}]$ by \eqref{eq:relativeerror}. Hence $e^{-r_d}e^D\le e^{\widehat D}\le e^{r_d}e^D$, and taking logarithms gives $|\widehat D-D|\le r_d$.

For averaging, perturbing both scores in $X_i-X_*$ changes this difference by at most $2u_d$. Each odds summand is therefore multiplied by a factor in $[e^{-2\beta u_d},e^{2\beta u_d}]$, giving the second bound in the same way. Thus either conclusion is unchanged at fixed $\beta$ and away from the limiting boundary when the corresponding error is $o(d)$.
\end{proof}

\begin{proposition}[Value error]\label{prop:values}
Let $\widehat v:=\sum_iA_iv_i$, where $A_i\ge0$, $\sum_iA_i=1$, and $\|v_i\|\le R$. Then
\begin{equation*}
 \|\widehat v-v_*\|\le2R(1-A_*).
\end{equation*}
The same bound holds after averaging values within groups when the planted item is a singleton. Consequently, $A_*\pto1$ implies $\widehat v\pto v_*$ and $\E\|\widehat v-v_*\|^2\to0$.
\end{proposition}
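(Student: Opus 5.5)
The bound comes from writing the output error as a convex combination of value differences and then splitting off the planted term. Since $\sum_iA_i=1$, we have $\widehat v-v_*=\sum_iA_i(v_i-v_*)$. The $i=*$ term vanishes identically, which leaves
\begin{equation*}
 \widehat v-v_*=\sum_{i\ne *}A_i(v_i-v_*).
\end{equation*}
The triangle inequality gives $\|v_i-v_*\|\le\|v_i\|+\|v_*\|\le 2R$. Combined with $\sum_{i\ne *}A_i=1-A_*$, this yields $\|\widehat v-v_*\|\le 2R(1-A_*)$. The step is deterministic and needs no assumption on the scores.

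For the averaged version, each group $a$ is replaced by one representative value $\bar v_a=m_a^{-1}\sum_j v_{aj}$ with weight $A_a$. Because the planted item is a singleton, its representative is $v_*$ itself and its weight is the planted mass $\am$. By convexity of the norm, $\|\bar v_a\|\le R$. The output is therefore again a convex combination of vectors of norm at most $R$ with weight $\am$ on $v_*$, so the first argument applies verbatim. This is the only point where the singleton hypothesis enters. In a mixed group, the representative would not equal $v_*$, and the bound would fail, consistent with the experiment.

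For the consequences, suppose $A_*\pto1$. Then $\|\widehat v-v_*\|\le 2R(1-A_*)\pto0$, which gives convergence in probability. For the second moment, note that $0\le 1-A_*\le 1$, so $\|\widehat v-v_*\|^2\le 4R^2(1-A_*)^2$ is bounded by $4R^2$ and tends to $0$ in probability. Bounded convergence (uniform integrability of a bounded sequence) then gives $\E\|\widehat v-v_*\|^2\to0$. No step is genuinely hard. The only subtlety is remembering that in-probability convergence alone does not give $L^2$ convergence, and the uniform bound $4R^2$ supplies exactly what is needed.
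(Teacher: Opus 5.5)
Your proof is correct and follows essentially the same route as the paper: split off the planted term using $\sum_iA_i=1$, bound each $\|v_i-v_*\|$ by $2R$, observe that group averages keep norm at most $R$ and leave the singleton value $v_*$ unchanged, and upgrade convergence in probability to $L^2$ convergence via the uniform bound $4R^2$. One small quibble that does not affect the proof: in a mixed group the bound \emph{need not} hold, rather than necessarily failing.
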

\begin{proof}
Since $\sum_iA_i=1$, we have $\widehat v-v_*=\sum_{i\ne *}A_i(v_i-v_*)$. Therefore
\begin{equation*}
 \|\widehat v-v_*\|
 \le\sum_{i\ne *}A_i\|v_i-v_*\|
 \le2R\sum_{i\ne *}A_i
 =2R(1-A_*).
\end{equation*}
An average of vectors with norm at most $R$ also has norm at most $R$, and averaging leaves a singleton planted value unchanged, so the same calculation applies after group averaging. If $A_*\pto1$, the displayed bound gives $\widehat v\pto v_*$. Moreover, $\|\widehat v-v_*\|^2\le4R^2$, so convergence in probability and boundedness imply $\E\|\widehat v-v_*\|^2\to0$.
\end{proof}

\section{A finite-array entropy characterization}
\label{app:entropy}
Let $\widehat\mu=N^{-1}\sum_i\delta_{x_i}$ be the empirical score law. In the tail-probability convention, define the classical entropic value-at-risk \citep{ahmadijavid2012evar} by
\begin{equation*}
 \operatorname{EVaR}_q(\widehat\mu)
 =\inf_{\beta>0}\frac{\log\int e^{\beta x}\,d\widehat\mu(x)-\log q}{\beta}.
\end{equation*}
Our optimized threshold is exactly $d^{-1}\operatorname{EVaR}_{1/M}(\widehat\mu)$. Its entropy dual below is the standard relative-entropy dual: $D(p\|\mathrm{Unif}_N)\le\log M$ is equivalent to $S(p)\ge\log m$. We give a direct finite-array proof for clarity and do not claim this identity as new. The empirical limit in Theorem~\ref{thm:optimal} concerns dependent triangular arrays with an exponentially vanishing tail probability; replacing their empirical law by a Gaussian population law requires that theorem, not just the identity.

Fix integers $M,m>1$, put $N=Mm$, and fix distractor scores $x_1,\ldots,x_N$. Define
\begin{equation*}
 Z(\beta):=\sum_{i=1}^Ne^{\beta x_i},
 \qquad
 p_i(\beta):=\frac{e^{\beta x_i}}{Z(\beta)},
 \qquad
 S(p):=-\sum_{i=1}^Np_i\log p_i,
\end{equation*}
and
\begin{equation*}
 H(\beta):=\frac{\log Z(\beta)-\log m}{\beta d}.
\end{equation*}

\begin{proposition}[EVaR identity and exact temperature calibration]\label{prop:entropy}
Assume that the largest score is unique. Then $H$ has a unique minimizer $\widehat\beta>0$, characterized by $S(p(\widehat\beta))=\log m.$ Moreover, if $\Delta_N$ is the probability simplex, then
\begin{equation}
 \min_{\beta>0}H(\beta)
 =\frac1d\operatorname{EVaR}_{1/M}(\widehat\mu)
 =\frac1d\max_{\substack{q\in\Delta_N\\S(q)\ge\log m}}\sum_iq_ix_i.
 \label{eq:entropydual}
\end{equation}
For a planted score $x_*$, there exists $\beta>0$ such that $A^{\mathrm{cnt}}_*>1/2$ if and only if $x_*/d>\min_{\beta>0}H(\beta)$.
\end{proposition}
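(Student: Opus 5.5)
The plan is to work with the convex function $\psi(\beta):=\log Z(\beta)$ and to minimize $\beta H(\beta)d=\psi(\beta)-\log m$ divided by $\beta$. First, record $\psi'(\beta)=\sum_ip_ix_i=:\bar x(\beta)$ and compute
\begin{equation*}
 \frac{d}{d\beta}\bigl[\beta d\,H(\beta)\bigr]\cdot\frac{1}{\beta}\;\text{-type identity:}\qquad
 d\,\beta^2H'(\beta)=\beta\psi'(\beta)-\psi(\beta)+\log m .
\end{equation*}
Since $\log p_i=\beta x_i-\psi$, we have $S(p(\beta))=\psi(\beta)-\beta\psi'(\beta)$, so $d\beta^2H'(\beta)=\log m-S(p(\beta))$. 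Thus stationary points of $H$ are exactly solutions of $S(p(\beta))=\log m$.

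Next, I will show that $\beta\mapsto S(p(\beta))$ is strictly decreasing on $(0,\infty)$ with limits $\log N$ at $0^+$ and $0$ at $\infty$. The derivative is $\frac{d}{d\beta}(\psi-\beta\psi')=-\beta\psi''(\beta)=-\beta\Var_{p}(x)$. This is strictly negative unless all scores are equal, which the unique-maximum assumption excludes. The limit at $0$ comes from uniform weights. The limit at $\infty$ comes from the unique maximizer absorbing all mass, and uniqueness is used exactly here. Since $0<\log m<\log N$ (because $M,m>1$), the intermediate value theorem gives a unique $\widehat\beta$ with $S=\log m$. Moreover, $H'<0$ before $\widehat\beta$ and $H'>0$ after it, so $\widehat\beta$ is the unique global minimizer.

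For the duality \eqref{eq:entropydual}, the first equality is the definition of EVaR after writing $\int e^{\beta x}d\widehat\mu=Z/N$ and $-\log(1/M)=\log M$, so that $\log(Z/N)+\log M=\log Z-\log m$. For the second equality I will use the Gibbs variational principle: for any $q\in\Delta_N$ and $\beta>0$,
\begin{equation*}
 \log Z(\beta)\ge \beta\sum_iq_ix_i+S(q).
\end{equation*}
Hence, whenever $S(q)\ge\log m$, we get $\sum_iq_ix_i\le(\log Z-\log m)/\beta$ for every $\beta$, which gives weak duality. Equality is attained at $q=p(\widehat\beta)$. There $S=\log m$ and $\log Z(\widehat\beta)=\widehat\beta\sum_ip_ix_i+S(p)$ exactly, so the maximum equals $\min H$ times $d$. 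The maximum exists by compactness of the feasible set.

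Finally, $\ac>1/2$ is equivalent to $e^{\beta x_*}>Z(\beta)/m$, that is, $x_*/d>H(\beta)$. Such a $\beta$ exists iff $x_*/d>\inf_\beta H=\min_\beta H$, since the infimum is attained. The main obstacle is the monotonicity and boundary limits of the entropy along the Gibbs path, which is where uniqueness of the maximum enters. Without it, the limit at $\infty$ would be $\log(\#\text{maximizers})$, which could exceed $\log m$.
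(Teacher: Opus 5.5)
Your proof is correct and matches the paper's argument step for step. Both use the identity $d\beta^2H'(\beta)=\log m-S(p(\beta))$, strict decrease of the entropy via $-\beta\Var_{p(\beta)}(x)$ with boundary limits $\log N$ and $0$ (uniqueness of the maximum entering at $\beta\to\infty$), the relative-entropy (Gibbs) inequality $\log Z(\beta)\ge\beta\sum_iq_ix_i+S(q)$ with equality at $q=p(\widehat\beta)$, and the direct rewriting of $A^{\mathrm{cnt}}_*>1/2$ as $x_*/d>H(\beta)$.
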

\begin{proof}
Differentiation gives
\begin{equation*}
 H'(\beta)=\frac{\log m-S(p(\beta))}{\beta^2d},\qquad
 \frac{d}{d\beta}S(p(\beta))=-\beta\Var_{p(\beta)}(x)<0.
\end{equation*}
The variance is positive because the scores are not all equal and every $p_i(\beta)$ is positive. Moreover, $S(p(\beta))\to\log N$ as $\beta\downarrow0$ and $S(p(\beta))\to0$ as $\beta\to\infty$, where the second limit uses uniqueness of the largest score. Since $0<\log m<\log N$, the equation $S(p(\beta))=\log m$ has one solution. The sign of $H'$ is negative before this solution and positive after it, proving the unique-minimizer claim.

For any $q\in\Delta_N$, nonnegativity of $D(q\|p(\beta))=\sum_i q_i\log (q_i/p_i(\beta))=-S(q)-\beta\sum_i q_ix_i + \log Z(\beta)$ gives $\log Z(\beta)\ge\beta\sum_iq_ix_i+S(q)$ with equality when $q=p(\beta)$. If $S(q)\ge\log m$, then $H(\beta)\ge d^{-1}\sum_iq_ix_i$. At $\beta=\widehat\beta$, the choice $q=p(\widehat\beta)$ satisfies $S(q)=\log m$ and gives equality, proving the last equality in \eqref{eq:entropydual}. For the EVaR equality, use $N=Mm$:
\begin{equation*}
 \log\int e^{\beta x}\,d\widehat\mu(x)-\log(1/M)
 =\log Z(\beta)-\log N+\log M
 =\log Z(\beta)-\log m.
\end{equation*}
Finally,
\begin{equation*}
 A^{\mathrm{cnt}}_*> \frac12
 \quad\Longleftrightarrow\quad
 e^{\beta x_*}>\frac{Z(\beta)}m
 \quad\Longleftrightarrow\quad
 \frac{x_*}{d}>H(\beta).
\end{equation*}
Such a temperature exists exactly when $x_*/d>\min_{\beta>0}H(\beta)$.
\end{proof}
This proposition does not require Gaussian scores, independence, or any particular correlation structure. It describes the best temperature after the distractor scores have been identified and observed, so it is not directly usable when the target location is unknown. In our Gaussian model the largest distractor score is unique almost surely, and the convergence and endpoint bounds proved in Theorem~\ref{thm:optimal} give $\widehat\beta\pto\sqrt{2\alpha_1}$.

\section{The same separation for any fixed finite hierarchy}
\label{app:hierarchy}
Fix $K\ge2$, rates $a_1,\ldots,a_K>0$, and variances $v_1,\ldots,v_K>0$ with $\sum_{\ell=1}^Kv_\ell=1$. At level $\ell$, every node has $n_{\ell,d}:=\lceil e^{a_\ell d}\rceil$ children, and every level-$\ell$ edge has an independent $N(0,v_\ell d)$ score. For a leaf $u$, let $X_u$ be the sum of the $K$ edge scores on its path. Leaves are grouped by their first edge, so there are $n_{1,d}$ groups and each group contains $m_d:=\prod_{\ell=2}^Kn_{\ell,d}$ leaves. 

Write $\mathcal L_d$ for the set of leaves and $\mathcal G_a$ for the leaves in first-level group $a$, and define
\begin{equation*}
 \Z_d^{(K)}(\beta):=\sum_{u\in\mathcal L_d}e^{\beta X_u},
 \qquad
 H_d^{(K)}(\beta):=
 \frac{\log\Z_d^{(K)}(\beta)-\log m_d}{\beta d}.
\end{equation*}
For a singleton planted score $X_*$, the count-corrected mass is
\begin{equation*}
 A_{*,K}^{\mathrm{cnt}}(\beta):=
 \frac{e^{\beta X_*}}
 {e^{\beta X_*}+\Z_d^{(K)}(\beta)/m_d}.
\end{equation*}
For averaging, put
\begin{equation*}
 \bar X_a:=\frac1{m_d}\sum_{u\in\mathcal G_a}X_u,
 \qquad
 \Z_d^{(K),\mathrm{avg}}(\beta):=
 \sum_{a=1}^{n_{1,d}}e^{\beta\bar X_a}.
\end{equation*}
Finally, write
\begin{equation*}
 a_{\le j}:=\sum_{\ell=1}^ja_\ell,\qquad
 V_j:=\sum_{\ell=1}^jv_\ell,\qquad
 a_{\mathrm{tot}}:=a_{\le K}=\sum_{\ell=1}^Ka_\ell.
\end{equation*}

\begin{theorem}[Count correction and averaging in a finite hierarchy]\label{thm:hierarchy}
For every fixed $\beta>0$,
\begin{equation}
 \frac1d\log\Z_d^{(K)}(\beta)\pto F_K(\beta):=
 \max_{\substack{x\in\R^K\\
 \sum_{\ell=1}^jx_\ell^2/(2v_\ell)\le a_{\le j}\ (1\le j\le K)}}
 \left\{
 a_{\mathrm{tot}}-\sum_{\ell=1}^K\frac{x_\ell^2}{2v_\ell}
 +\beta\sum_{\ell=1}^Kx_\ell
 \right\}.
 \label{eq:kvariational}
\end{equation}
The convergence also holds in $L^1$. The count-corrected limiting threshold
\begin{equation*}
 h_K^{\mathrm{cnt}}(\beta):=
 \frac{F_K(\beta)-(a_{\mathrm{tot}}-a_1)}{\beta}
\end{equation*}
satisfies $h_K^{\mathrm{cnt}}(\beta)\ge\sqrt{2a_1}$ and $h_K^{\mathrm{cnt}}(\beta)=\sqrt{2a_1}\ \Longleftrightarrow\ \beta=\sqrt{2a_1}$. Moreover, $\inf_{\beta>0}H_d^{(K)}(\beta)\pto\sqrt{2a_1}$, and if $X_*/d\pto s<\sqrt{2a_1}$, then $\sup_{\beta>0}A_{*,K}^{\mathrm{cnt}}(\beta)\pto0$.

For averaging,
\begin{equation*}
 \frac1d\log\Z_d^{(K),\mathrm{avg}}(\beta)
 \pto f(\beta;a_1,v_1),
\end{equation*}
so its limiting threshold has minimum $\sqrt{2v_1a_1}$.
\end{theorem}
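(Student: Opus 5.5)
We will follow the two-level argument of Appendices~\ref{app:free} and \ref{app:optimal}, level by level. For \eqref{eq:kvariational} we extend Lemma~\ref{lem:counting} to $K$ levels. Write $I_\ell(x)=x^2/(2v_\ell)$ and let $X_u=\sum_\ell U^{(\ell)}_u$, where $U^{(\ell)}_u$ is the level-$\ell$ edge score on the path to $u$.

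\emph{Upper bound.} Bin each $U^{(\ell)}/d$ into boxes $B_1\times\cdots\times B_K$ and let $T_j$ count level-$j$ path prefixes falling in $B_1\times\cdots\times B_j$. Then $\E T_j=e^{d(a_{\le j}-\sum_{\ell\le j}r_\ell+o(1))}$. Markov's inequality and a union bound over the finitely many boxes show that, with high probability, every box violating some prefix constraint $\sum_{\ell\le j}r_\ell\le a_{\le j}+\eta$ is empty. The remaining boxes contribute at most $e^{d(a_{\mathrm{tot}}-\sum r_\ell+\eta)+\beta d\sup\sum x_\ell}$. Sending $\eta,\delta\downarrow0$ then gives the upper bound.

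\emph{Lower bound.} Fix a strictly feasible $x$ and argue inductively in $j$. Conditional on the first $j-1$ levels, the children of the selected prefixes are independent Bernoulli trials, so Chebyshev/Chernoff gives $d^{-1}\log T_j\pto a_{\le j}-\sum_{\ell\le j}r_\ell>0$. Boundary points are handled by scaling $x\mapsto tx$, exactly as before.

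\emph{$L^1$ convergence.} This follows from the same max-score uniform integrability argument, since each $X_u\sim N(0,d)$.

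For the count-correction threshold, the key is a comparison rather than an explicit solution of the $K$-level KKT system. We first aim for the pointwise lower bound $F_K(\beta)\ge a_{\mathrm{tot}}-a_1+f(\beta;a_1,v_1)$ and a matching upper bound near $\beta=\sqrt{2a_1}$.

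\emph{Lower bound on $F_K$.} Take $x_2=\cdots=x_K=0$ and $x_1=\min(\beta v_1,\sqrt{2v_1a_1})$. This is feasible and yields the one-level free energy at rate $a_1$ plus the entropy $a_{\mathrm{tot}}-a_1$ of the deeper levels.

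\emph{Upper bound on $F_K$.} Relax all constraints except the final one, $\sum_\ell I_\ell\le a_{\mathrm{tot}}$. The Cauchy--Schwarz computation in the proof of Theorem~\ref{thm:free} then gives $F_K(\beta)\le f(\beta;a_{\mathrm{tot}},1)$.

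\emph{Lower bound on $h_K^{\mathrm{cnt}}$.} The relaxation gives $h_K^{\mathrm{cnt}}(\beta)\le \alpha$-type bounds, but what we need is a lower bound on $h_K^{\mathrm{cnt}}$. For that, plug the feasible point $x_\ell=\beta v_\ell$ into the objective whenever it is feasible. When it is not, use monotonicity directly: $\beta h_K^{\mathrm{cnt}}=F_K-(a_{\mathrm{tot}}-a_1)$ is convex in $\beta$, with $F_K(0)=a_{\mathrm{tot}}$, so $\beta\mapsto h_K^{\mathrm{cnt}}(\beta)$ is minimized where a tangent from $(0,-a_1)$ touches. The cleaner route is the Lagrangian identity
\[
F_K(\beta)-(a_{\mathrm{tot}}-a_1)\ \ge\ a_1-\frac{x^2}{2}+\beta x \quad\text{with } x=\beta,
\]
valid for $\beta\le\sqrt{2a_1}$. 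It is obtained from the feasible choice $x_\ell=\beta v_\ell$, which is feasible there because $\sum_{\ell\le j}\beta^2v_\ell/2\le \beta^2/2\le a_1\le a_{\le j}$. This gives $F_K=a_{\mathrm{tot}}+\beta^2/2$ on $[0,\sqrt{2a_1}]$; equality holds because this is the unconstrained maximum. Hence $h_K^{\mathrm{cnt}}=a_1/\beta+\beta/2$ there, with unique minimum $\sqrt{2a_1}$ at $\beta=\sqrt{2a_1}$.

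For $\beta>\sqrt{2a_1}$ we need $h_K^{\mathrm{cnt}}$ strictly increasing. Write $g(\beta)=F_K(\beta)-(a_{\mathrm{tot}}-a_1)$. Then $g$ is convex with $g(0)=a_1>0$, so $g(\beta)/\beta$ has derivative $(\beta g'-g)/\beta^2$, and $\beta g'-g$ is nondecreasing. At $\beta_0=\sqrt{2a_1}$ this quantity equals $\beta_0^2-a_1-\beta_0^2/2=0$. Strictness after $\beta_0$ needs $g$ strictly convex, or $\beta g'-g$ strictly increasing, just past $\beta_0$. By Danskin, $g'(\beta)=\sum_\ell x_\ell^*(\beta)$, and this is strictly increasing until every constraint freezes. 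After full freezing, $g=\beta v_{\max}-(a_{\mathrm{tot}}-a_1)$ with $\beta g'-g=a_{\mathrm{tot}}-a_1>0$. I expect this strict-monotonicity bookkeeping to be the main obstacle.

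With the threshold shape established, the adaptive statement and the uniform failure for $s<\sqrt{2a_1}$ follow verbatim from the proof of Theorem~\ref{thm:optimal}. Jensen handles small $\beta$, since $\log n_{1,d}/(\beta d)$ blows up. The maximum leaf score handles large $\beta$, because its limit $\lim F_K/\beta$ exceeds $\sqrt{2a_1}$, as seen from the point $x_1=\sqrt{2v_1a_1}$, $x_2=\sqrt{2v_2a_2}>0$. Compact uniform convergence on $[t,T]$ is obtained from the grid plus Lipschitz argument of Lemma~\ref{lem:uniform}.

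For averaging, $\bar X_a$ equals the level-1 edge score plus an average of deeper contributions. This average is centered Gaussian with variance at most $d\sum_{\ell\ge2}v_\ell/n_{2,d}$, because distinct subtrees are independent and correlations only reduce the effective count from the first deeper level on. Its maximum over $a$ is therefore $o(d)$ by a union bound. The one-level formula then gives $f(\beta;a_1,v_1)$, whose threshold $f/\beta$ has minimum $\sqrt{2v_1a_1}$, exactly as in Theorem~\ref{thm:averaging}.
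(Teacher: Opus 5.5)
Your free-energy, endpoint, and averaging steps follow the paper's proof. The threshold-shape step, however, has a gap exactly where you flag the ``main obstacle''. You never show that $h_K^{\mathrm{cnt}}$ is strictly increasing on $(\sqrt{2a_1},\infty)$, so the ``only if'' half of $h_K^{\mathrm{cnt}}(\beta)=\sqrt{2a_1}\Leftrightarrow\beta=\sqrt{2a_1}$ is unproved.

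Your Danskin claim, that $\sum_\ell x^*_\ell(\beta)$ increases strictly until every constraint freezes, is asserted rather than derived. Proving it would require the explicit $K$-level KKT (concave-hull) analysis you set out to avoid. Your evaluation $g'(\beta_0)=\beta_0$ also presumes that $F_K=a_{\mathrm{tot}}+\beta^2/2$ continues to the right of $\beta_0$. The formula on $(0,\beta_0]$ alone only gives $\beta_0g'_+(\beta_0)-g(\beta_0)\ge0$. That yields $h_K^{\mathrm{cnt}}\ge\sqrt{2a_1}$, but not uniqueness.

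The missing observation is that your feasibility chain $\beta^2V_j/2\le\beta^2/2\le a_1\le a_{\le j}$ is strict at $\beta=\sqrt{2a_1}$. For $j=1$ this holds because $V_1=v_1<1$, since $K\ge2$ and every $v_\ell>0$. For $j\ge2$ it holds because $a_{\le j}>a_1$. Hence $x_\ell=\beta v_\ell$ stays feasible on $(0,b_0]$ with $b_0=\min_j\sqrt{2a_{\le j}/V_j}>\sqrt{2a_1}$. So $h_K^{\mathrm{cnt}}=a_1/\beta+\beta/2$ is strictly increasing on $(\sqrt{2a_1},b_0)$, and $\beta g'-g=\beta^2/2-a_1>0$ there. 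Your monotonicity of $\beta g'-g$ then carries the strict sign to every larger $\beta$. This is exactly the paper's key step ($b_0>c$), which it combines with the tangent line to the convex $F_K$ at $c$ and with strict convexity of $F_K$ near $c$.

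The same slack repairs your large-$\beta$ endpoint claim, whose stated justification is wrong. The point $(\sqrt{2v_1a_1},\sqrt{2v_2a_2},0,\ldots,0)$ is feasible, but its coordinate sum need not exceed $\sqrt{2a_1}$. For $K=2$, $v_1=v_2=1/2$, $a_1=1$, $a_2=0.01$ it equals $1.1<\sqrt2$, although the true limit there is $\sqrt{2.02}$. Use instead $x_\ell=\beta'v_\ell$ with $\beta'\in(\sqrt{2a_1},b_0)$, which gives $\lim_{\beta\to\infty}F_K(\beta)/\beta\ge\beta'>\sqrt{2a_1}$. Alternatively, letting $\beta\to\infty$ in $h_K^{\mathrm{cnt}}\ge\sqrt{2a_1}$ gives the non-strict bound, which already suffices for the endpoint argument at the cost of an extra $\varepsilon$.
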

\begin{proof}
We first prove \eqref{eq:kvariational}. This is the $K$-level extension of the bin-counting argument in Appendix~\ref{app:free}. Partition a bounded interval into finitely many bins. For a choice of bins $B_1,\ldots,B_j$ along the first $j$ levels, put $r_\ell:=\inf_{x\in B_\ell}\frac{x^2}{2v_\ell}$. The expected number of depth-$j$ paths whose normalized edge scores lie in these bins is
\begin{equation*}
 \exp\left\{
 d\left(a_{\le j}-\sum_{\ell=1}^jr_\ell+o(1)\right)
 \right\}.
\end{equation*}
If $\sum_{\ell=1}^jr_\ell>a_{\le j}$, Markov's inequality shows that such a prefix is absent with probability tending to one. On the other hand, if $\sum_{\ell=1}^jr_\ell<a_{\le j}$ for $1\le j\le K$, conditional on the selected paths at level $j-1$, the number of level-$j$ extensions is binomial with an exponentially growing mean of rate $a_{\le j}-\sum_{\ell=1}^jr_\ell$. Applying a Chernoff bound successively for $j=1,\ldots,K$ gives the leaf-count exponent $a_{\mathrm{tot}}-\sum_{\ell=1}^Kr_\ell$. Each such leaf contributes exponential weight $\exp\{\beta d\sum_{\ell=1}^Kx_\ell+o(d)\}$. Summing over the finitely many bin choices and then shrinking the bins gives \eqref{eq:kvariational}. Boundary points follow from replacing $x$ by $tx$ and sending $t\uparrow1$. Gaussian union bounds restrict all edge scores to a fixed bounded interval after division by $d$, and the second-moment argument in Lemma~\ref{lem:counting} gives $L^1$ convergence.

Ignoring the constraints in \eqref{eq:kvariational}, differentiation gives $x_\ell=\beta v_\ell$. For this point, $\sum_{\ell=1}^j\frac{x_\ell^2}{2v_\ell}=\frac{\beta^2V_j}{2}$, so it satisfies every constraint whenever $0<\beta\le b_0:=\min_{1\le j\le K}\sqrt{2a_{\le j}/V_j}$. Because $K\ge2$ and every $v_\ell>0$, we have $v_1<1$ and hence $a_{\le1}/V_1=a_1/v_1>a_1$. For $j\ge2$, $a_{\le j}>a_1$ and $V_j\le1$, so again $a_{\le j}/V_j>a_1$. Therefore $b_0>c:=\sqrt{2a_1}$.

For $\beta$ in a neighborhood of $c$, the point $x_\ell=\beta v_\ell$ is feasible and $F_K(\beta)=a_{\mathrm{tot}}+\beta^2/2$. The variational formula is a maximum of affine functions of $\beta$, so $F_K$ is convex. At $\beta=c$, we have $F_K(c)=a_{\mathrm{tot}}+a_1$ with $F_K'(c)=c$. Convexity therefore gives, for every $\beta>0$,
\begin{equation*}
 F_K(\beta)\ge F_K(c)+F_K'(c)(\beta-c)
 =a_{\mathrm{tot}}-a_1+c\beta.
\end{equation*}
After subtracting $a_{\mathrm{tot}}-a_1$ and dividing by $\beta$, $h_K^{\mathrm{cnt}}(\beta)\ge c$. At $\beta=c$ equality holds. If equality also held at some $\beta\ne c$, convexity would force $F_K$ to equal the displayed tangent line on the segment joining $\beta$ and $c$, contradicting $F_K(t)=a_{\mathrm{tot}}+t^2/2$ near $c$. Thus $c$ is the unique minimizer.

It remains to control the empirical infimum over all $\beta$. Let
\begin{equation*}
 \overline X_d:=\frac1{|\mathcal L_d|}\sum_{u\in\mathcal L_d}X_u,
 \qquad
 X_{\max,d}:=\max_{u\in\mathcal L_d}X_u.
\end{equation*}
Each level-$\ell$ edge appears with the same coefficient in $\overline X_d$, and independence gives
\begin{equation*}
 \Var\left(\frac{\overline X_d}{d}\right)
 =\sum_{\ell=1}^K
 \frac{v_\ell}{d\prod_{j=1}^{\ell}n_{j,d}}
 \longrightarrow0.
\end{equation*}
Hence $\overline X_d/d\pto0$. Also, since $\frac1{\beta d}\log\Z_d^{(K)}(\beta) \in \left[\frac{X_{\max,d}}d, \frac{X_{\max,d}}d +\frac{\log|\mathcal L_d|}{\beta d}\right]$, using \eqref{eq:kvariational} and then sending $\beta\to\infty$ shows
\begin{equation*}
 \frac{X_{\max,d}}d\pto
 s_K:=\lim_{\beta\to\infty}\frac{F_K(\beta)}{\beta}.
\end{equation*}
This limit is strictly larger than $c$: choose $c<\beta_0<b_0$. Since $F_K(t)=a_{\mathrm{tot}}+t^2/2$ near $\beta_0$, we have $F_K'(\beta_0)=\beta_0>c$, and convexity implies $s_K\ge\beta_0$.

Jensen's inequality and the largest-term bound give, for every $\beta>0$,
\begin{equation}
 H_d^{(K)}(\beta)
 \ge
 \frac{\log|\mathcal L_d|-\log m_d}{\beta d}
 +\frac{\overline X_d}{d},
 \qquad
 H_d^{(K)}(\beta)
 \ge
 \frac{X_{\max,d}}d-\frac{\log m_d}{\beta d}.
 \label{eq:hierarchyendpoints}
\end{equation}
The first bound controls all sufficiently small $\beta$, because $|\mathcal L_d|/m_d=n_{1,d}$ and $d^{-1}\log n_{1,d}\to a_1$. The second controls all sufficiently large $\beta$, because $X_{\max,d}/d\pto s_K>c$ and $d^{-1}\log m_d\to a_{\mathrm{tot}}-a_1$. On any fixed interval $[t,T]\subset(0,\infty)$, a Gaussian union bound gives $\max_u|X_u|/d=O_{\mathbb P}(1)$, so the functions $d^{-1}\log\Z_d^{(K)}(\beta)$ have bounded random Lipschitz constants with high probability. Fixed-grid convergence from \eqref{eq:kvariational} then gives uniform convergence on $[t,T]$. Combining this with \eqref{eq:hierarchyendpoints} yields $\inf_{\beta>0}H_d^{(K)}(\beta)\pto c$.

Now suppose $X_*/d\pto s<c$, and put $\delta:=(c-s)/4$. With probability tending to one,
\begin{equation*}
 H_d^{(K)}(\beta)\ge c-\delta
 \quad\hbox{for every }\beta>0,
 \qquad
 \frac{X_*}{d}\le s+\delta.
\end{equation*}
Choose $t>0$ small enough that $t(|s|+2)<a_1/4$. For $\beta\ge t$,
\begin{equation*}
 \log\frac{1-A_{*,K}^{\mathrm{cnt}}(\beta)}
 {A_{*,K}^{\mathrm{cnt}}(\beta)}
 =\beta d\left(H_d^{(K)}(\beta)-\frac{X_*}{d}\right)
 \ge\frac{td(c-s)}2.
\end{equation*}
For $0<\beta<t$, Jensen's inequality gives
\begin{equation*}
 \frac1d\log\frac{1-A_{*,K}^{\mathrm{cnt}}(\beta)}
 {A_{*,K}^{\mathrm{cnt}}(\beta)}
 \ge\frac{\log n_{1,d}}d
 +\beta\left(\frac{\overline X_d}{d}-\frac{X_*}{d}\right)
 \ge\frac{a_1}{4}
\end{equation*}
with probability tending to one. Therefore the log odds diverge uniformly over $\beta>0$, proving $\sup_{\beta>0}A_{*,K}^{\mathrm{cnt}}(\beta)\pto0$.

Finally, fix a first-level group $a$. Its averaged score can be written as
\begin{equation*}
 \bar X_a=U_a+R_a,
\end{equation*}
where $U_a\sim N(0,v_1d)$ is the score on its first edge. At level $\ell\ge2$, the group average contains $\prod_{j=2}^{\ell}n_{j,d}$ independent edge scores with equal coefficients. Hence $R_a\sim N(0,dw_d)$, independently across groups and independently of $(U_a)_a$, where
\begin{equation*}
 w_d=\sum_{\ell=2}^K\frac{v_\ell}{\prod_{j=2}^{\ell}n_{j,d}}
 \le (1-v_1)/n_{2,d}\longrightarrow0.
\end{equation*}
For every $u>0$, the union bound gives
\begin{equation*}
 \Prob\left\{\max_{1\le a\le n_{1,d}}|R_a|>ud\right\}
 \le2n_{1,d}\exp\left(-\frac{du^2}{2w_d}\right)
 \longrightarrow0.
\end{equation*}
The log-sum-exp difference between $(\bar X_a)_a$ and $(U_a)_a$ is at most $\beta\max_a|R_a|$, so
\begin{equation*}
 \frac1d\log\Z_d^{(K),\mathrm{avg}}(\beta)
 -\frac1d\log\sum_{a=1}^{n_{1,d}}e^{\beta U_a}
 \pto0.
\end{equation*}
The remaining sum contains $\exp(a_1d+o(d))$ independent $N(0,v_1d)$ scores, and therefore has limiting free energy $f(\beta;a_1,v_1)$. Dividing by $\beta$ and using \eqref{eq:rem} gives the minimum $\sqrt{2v_1a_1}$.
\end{proof}

\section{Beyond Gaussian scores: a cumulant characterization}
\label{app:cramer}
The optimized separation extends to a class of non-Gaussian hierarchies. Non-Gaussian GREM free energies and level-dependent rate functions are classical \citep{jana2007grem}. This extension uses exponential tilting and large deviations \citep{dembo2010large}; it does not assert a new large-deviation principle or a new EVaR duality.

Keep the $K$-level tree, first-level groups, and notation of Appendix~\ref{app:hierarchy}. Replace the score on each level-$\ell$ edge $e$ by $S_{\ell,e}^{(d)}:=\sum_{r=1}^dY_{\ell,e,r}$. All increments are independent, and for each fixed level $\ell$ they have the same centered, nondegenerate distribution. Assume
\begin{equation*}
 \Lambda_\ell(t):=\log\E e^{tY_{\ell,e,r}}<\infty
 \qquad (t\in\R),
\end{equation*}
and $\Lambda_\ell'(t)\to\pm\infty$ as $t\to\pm\infty$. Define
\begin{equation*}
 I_\ell(x):=\sup_{t\in\R}\{tx-\Lambda_\ell(t)\},
 \qquad
 \Lambda(t):=\sum_{\ell=1}^K\Lambda_\ell(t).
\end{equation*}
Also write
\begin{equation*}
 a_{\le j}:=\sum_{\ell=1}^ja_\ell,\qquad
 a_{\mathrm{tot}}:=a_{\le K}=\sum_{\ell=1}^Ka_\ell.
\end{equation*}
and write
\begin{equation*}
 L_{K,d}(\beta):=\frac1d\log\Z_d^{(K)}(\beta),
 \qquad
 A_{*,K}^{\mathrm{avg}}(\beta):=
 \frac{e^{\beta X_*}}
 {e^{\beta X_*}+\Z_d^{(K),\mathrm{avg}}(\beta)}.
\end{equation*}

\begin{theorem}[Non-Gaussian free energy and count correction]\label{thm:cramer-count}
The equation
\begin{equation}
 \beta_c\Lambda'(\beta_c)-\Lambda(\beta_c)=a_1
 \label{eq:cramercounttemp}
\end{equation}
has a unique positive solution. For every fixed $\beta>0$,
\begin{equation}
 L_{K,d}(\beta)\pto F_K(\beta):=
 \max_{\substack{x\in\R^K\\
 \sum_{\ell=1}^jI_\ell(x_\ell)\le a_{\le j}\ (1\le j\le K)}}
 \left\{
 a_{\mathrm{tot}}-\sum_{\ell=1}^KI_\ell(x_\ell)
 +\beta\sum_{\ell=1}^Kx_\ell
 \right\}.
 \label{eq:cramervariational}
\end{equation}
The convergence also holds in $L^1$. Define
\begin{equation*}
 h_K^{\mathrm{cnt}}(\beta):=
 \frac{F_K(\beta)-(a_{\mathrm{tot}}-a_1)}{\beta},
 \qquad
 c_{\mathrm{cnt}}:=\Lambda'(\beta_c).
\end{equation*}
Then $h_K^{\mathrm{cnt}}(\beta)\ge c_{\mathrm{cnt}}$ and $h_K^{\mathrm{cnt}}(\beta)=c_{\mathrm{cnt}}\ \Longleftrightarrow\ \beta=\beta_c$. Moreover, $\inf_{\beta>0}H_d^{(K)}(\beta)\pto c_{\mathrm{cnt}}$. If $X_*/d\pto s<c_{\mathrm{cnt}}$, then $\sup_{\beta>0}A_{*,K}^{\mathrm{cnt}}(\beta)\pto0$, while if $s>c_{\mathrm{cnt}}$, the fixed choice $\beta=\beta_c$ gives $A_{*,K}^{\mathrm{cnt}}(\beta_c)\pto1$.
\end{theorem}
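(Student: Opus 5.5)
The plan is to transcribe the proof of Theorem~\ref{thm:hierarchy} step by step. Each Gaussian quadratic $x^2/(2v_\ell)$ becomes the Cramér rate $I_\ell$, and each Gaussian moment generating function becomes $e^{d\Lambda_\ell}$. The one genuinely new ingredient is identifying the count-optimal temperature through the Legendre structure of $\Lambda$.

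\textbf{Preliminaries on $\beta_c$.} Put $g(\beta):=\beta\Lambda'(\beta)-\Lambda(\beta)$.
\begin{itemize}
\item Since every increment is centered and nondegenerate, $\Lambda(0)=\Lambda'(0)=0$ and $g'(\beta)=\beta\Lambda''(\beta)>0$ for $\beta>0$, so $g$ is strictly increasing from $g(0)=0$.
\item Write $g(\beta)=\sum_\ell I_\ell(\Lambda_\ell'(\beta))$, using $I_\ell(\Lambda_\ell'(t))=t\Lambda_\ell'(t)-\Lambda_\ell(t)$.
\item Each $I_\ell$ is superlinear. For every fixed $t$ we have $I_\ell(x)\ge tx-\Lambda_\ell(t)$, so $\liminf_{x\to\infty} I_\ell(x)/x\ge t$.
\item The steepness assumption gives $\Lambda_\ell'(\beta)\to\infty$. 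Together with superlinearity this gives $g(\beta)\to\infty$, so \eqref{eq:cramercounttemp} has exactly one positive root $\beta_c$.
\end{itemize}

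\textbf{Free energy \eqref{eq:cramervariational}.} I would rerun the bin-counting argument of Lemma~\ref{lem:counting} and Theorem~\ref{thm:hierarchy}.
\begin{itemize}
\item Cramér's theorem replaces \eqref{eq:gaussianldp}: $d^{-1}\log\Prob(S^{(d)}_{\ell,e}/d\in B)\to-\inf_B I_\ell$ for intervals $B$ of positive length. The lower bound at interior points uses continuity of $I_\ell$ on the interior of its domain, which is all of $\R$ because $\Lambda_\ell$ is steep.
\item A Chernoff union bound restricts all normalized edge scores to a compact box with probability tending to one. Because the rates are superlinear, the escape probability is $e^{-d(\inf_{|x|>R}I_\ell-a_{\mathrm{tot}})}\to0$.
\item Markov's inequality kills prefixes with $\sum_{\ell\le j}r_\ell>a_{\le j}$. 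Successive conditional binomial Chernoff bounds populate strictly feasible prefixes.
\item Boundary points are handled by the map $x\mapsto tx$ with $t\uparrow1$, since convexity and $I_\ell(0)=0$ give $I_\ell(tx)\le tI_\ell(x)$.
\item $L^1$ convergence follows from the same second-moment bound on $\max_u|X_u|/d$, now obtained from the Chernoff tail.
\end{itemize}

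\textbf{Threshold identification.} The unconstrained maximizer of the objective in \eqref{eq:cramervariational} is $x_\ell=\Lambda_\ell'(\beta)$, with value $a_{\mathrm{tot}}+\Lambda(\beta)$.
\begin{itemize}
\item At $\beta=\beta_c$, each partial sum satisfies $\sum_{\ell\le j}I_\ell(\Lambda_\ell'(\beta_c))\le g(\beta_c)=a_1\le a_{\le j}$. For $j=1$ the inequality is strict, because $K\ge2$ and the levels $\ell\ge2$ contribute strictly positive terms. For $j\ge2$ it is strict because $a_{\le j}>a_1$.
\item Hence this point is feasible with slack, and $F_K(\beta)=a_{\mathrm{tot}}+\Lambda(\beta)$ on a neighborhood of $\beta_c$.
\item $F_K$ is a supremum of affine functions of $\beta$, hence convex. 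The tangent line at $\beta_c$ gives
\[
F_K(\beta)\ge a_{\mathrm{tot}}+\Lambda(\beta_c)+\Lambda'(\beta_c)(\beta-\beta_c)=a_{\mathrm{tot}}-a_1+\beta\, c_{\mathrm{cnt}},
\]
so $h_K^{\mathrm{cnt}}(\beta)\ge c_{\mathrm{cnt}}$, with equality at $\beta_c$.
\item Uniqueness follows as in Theorem~\ref{thm:hierarchy}. If equality held at some $\beta\ne\beta_c$, convexity would force $F_K$ to be affine on the segment between them, contradicting strict convexity of $\Lambda$ near $\beta_c$.
\end{itemize}

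\textbf{Adaptive-temperature statements.} I would copy the endpoint argument \eqref{eq:hierarchyendpoints}.
\begin{itemize}
\item Small $\beta$ is controlled by Jensen's inequality. The leaf mean satisfies $\overline X_d/d\pto0$ by the same variance computation, which needs only finite second moments.
\item Large $\beta$ is controlled by the largest leaf score. The squeeze gives $X_{\max,d}/d\pto s_K=\lim_{\beta\to\infty}F_K(\beta)/\beta$. Taking $\beta_0$ slightly above $\beta_c$ inside the neighborhood above, convexity gives $s_K\ge F_K'(\beta_0)=\Lambda'(\beta_0)>c_{\mathrm{cnt}}$.
\item On compact intervals $[t,T]$, uniform convergence follows from fixed-grid convergence plus the random Lipschitz bound $\max_u|X_u|/d=O_{\mathbb P}(1)$ from the Chernoff tail.
\item These three pieces give $\inf_{\beta>0}H_d^{(K)}(\beta)\pto c_{\mathrm{cnt}}$. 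The log-odds argument of Theorem~\ref{thm:optimal} then gives $\sup_{\beta>0}A^{\mathrm{cnt}}_{*,K}(\beta)\pto0$ when $s<c_{\mathrm{cnt}}$.
\item When $s>c_{\mathrm{cnt}}$, the fixed choice $\beta=\beta_c$ gives limiting normalized log odds $\beta_c(c_{\mathrm{cnt}}-s)<0$, so $A^{\mathrm{cnt}}_{*,K}(\beta_c)\pto1$.
\end{itemize}

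\textbf{Main obstacle.} I expect the most care in the large-deviation bin counting: getting uniform $o(1)$ errors for non-Gaussian triangular sums and an exponential-tightness cutoff that works at all $K$ levels at once. Superlinearity of each $I_\ell$, which comes from finiteness of $\Lambda_\ell$ on all of $\R$, is exactly what makes the compact-box reduction work, so I would establish it first.
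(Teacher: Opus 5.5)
Your proposal is correct and follows the paper's proof essentially step for step. It uses the same structure throughout:
\begin{itemize}
\item one-edge large-deviation estimates feeding the $K$-level bin-counting argument;
\item the observation that the unconstrained maximizer $x_\ell=\Lambda_\ell'(\beta)$ is strictly feasible near $\beta_c$, so the tangent line of the convex $F_K$ at $\beta_c$ gives $h_K^{\mathrm{cnt}}\ge c_{\mathrm{cnt}}$, with uniqueness from $\Lambda''>0$;
\item the Jensen, maximum-score and compact-uniform-convergence control over all temperatures.
\end{itemize}
The deviations are cosmetic. You obtain $\beta\Lambda'(\beta)-\Lambda(\beta)\to\infty$ from $\sum_\ell I_\ell(\Lambda_\ell'(\beta))$ and coercivity of $I_\ell$, where the paper uses the bound $\int_1^t u\Lambda_\ell''(u)\,du\ge\Lambda_\ell'(t)-\Lambda_\ell'(1)$. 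You also cite Cramér's theorem where the paper writes out the exponential-tilting estimate directly.
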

\begin{proof}
\emph{Rate functions.} For each level $\ell$ and $t\in\R$, define the tilted law
\begin{equation*}
 \frac{d\mathbb P_{\ell,t}}{d\mathbb P}(y)
 =e^{ty-\Lambda_\ell(t)}.
\end{equation*}
Differentiating the moment generating function gives
\begin{equation*}
 \Lambda_\ell'(t)=\mathbb E_{\ell,t}[Y_{\ell,e,r}],
 \qquad
 \Lambda_\ell''(t)=\operatorname{Var}_{\ell,t}(Y_{\ell,e,r})>0.
\end{equation*}
The variance is positive because the tilted law has the same support as the original nonconstant distribution. Since the increments are centered, $\Lambda_\ell(0)=\Lambda_\ell'(0)=0$. Thus $\Lambda_\ell'$ is strictly increasing. Because its range is $\R$, for every $x$ there is a unique $t_x$ with $\Lambda_\ell'(t_x)=x$, and $I_\ell(x)=t_xx-\Lambda_\ell(t_x)$. Also, for every $T>0$,
\begin{equation*}
 I_\ell(x)\ge
 T|x|-\max\{\Lambda_\ell(T),\Lambda_\ell(-T)\},
\end{equation*}
so $I_\ell(x)/|x|\to\infty$.

Put $J_\ell(t):=t\Lambda_\ell'(t)-\Lambda_\ell(t)$. For $t>0$, we have $J_\ell'(t)=t\Lambda_\ell''(t)>0$. Moreover, for $t\ge1$,
\begin{equation*}
 J_\ell(t)-J_\ell(1)
 =\int_1^t u\Lambda_\ell''(u)\,du
 \ge\Lambda_\ell'(t)-\Lambda_\ell'(1)\longrightarrow\infty.
\end{equation*}
Hence $\sum_\ell J_\ell(t)=t\Lambda'(t)-\Lambda(t)$ increases continuously from zero to infinity, proving existence and uniqueness of $\beta_c$.

\emph{Free energy.} Let $\bar Y_d=d^{-1}\sum_{r=1}^dY_{\ell,e,r}$ and fix $x\in\R$. Under the product tilted law with parameter $t_x$, the law of large numbers gives $\bar Y_d\to x$. For $\epsilon>0$ and $B_\epsilon(x)=(x-\epsilon,x+\epsilon)$, change of measure gives
\begin{equation*}
 e^{-d[I_\ell(x)+|t_x|\epsilon]}
 \mathbb P_{\ell,t_x}\{\bar Y_d\in B_\epsilon(x)\}
 \le
 \Prob\{\bar Y_d\in B_\epsilon(x)\}
 \le
 e^{-d[I_\ell(x)-|t_x|\epsilon]}.
\end{equation*}
Therefore the probability that a level-$\ell$ edge has normalized score near $x$ has exponential rate $-I_\ell(x)$.

Now partition a bounded interval into finitely many bins. If the first $j$ normalized edge scores lie near $x_1,\ldots,x_j$, the expected number of such depth-$j$ paths has exponential rate $a_{\le j}-\sum_{\ell=1}^jI_\ell(x_\ell)$. If this number is negative, Markov's inequality shows that no such depth-$j$ partial path exists with probability tending to one. If the corresponding rate is positive at every depth, then conditional on the paths at level $j-1$, the number of level-$j$ extensions is binomial with exponentially growing mean. Applying a Chernoff bound successively over the $K$ levels gives the leaf-count rate $a_{\mathrm{tot}}-\sum_{\ell=1}^KI_\ell(x_\ell)$. Multiplying this count by the leaf weight $\exp\{\beta d\sum_\ell x_\ell+o(d)\}$ and maximizing over feasible $x$ gives \eqref{eq:cramervariational}. Boundary points follow by replacing $x$ with $ux$ and sending $u\uparrow1$, since $I_\ell(ux)\le uI_\ell(x)$.

It remains only to remove the bounded-score restriction. For fixed $t,R>0$, a union bound over all level-$\ell$ edges gives
\begin{equation*}
 \Prob\left\{\max_e|S_{\ell,e}^{(d)}|>dR\right\}
 \le
 2\Big(\prod_{j=1}^{\ell}n_{j,d}\Big)
 \exp\{d[\max(\Lambda_\ell(t),\Lambda_\ell(-t))-tR]\}.
\end{equation*}
Choosing $R$ large makes this tend to zero for every level. The same bound, integrated in $R$, gives a uniform second-moment bound for the largest leaf score divided by $d$, and hence the convergence also holds in $L^1$.

\emph{Count threshold.} Without the path-existence constraints, each coordinate maximizes $\beta x-I_\ell(x)$ at $x_\ell=\Lambda_\ell'(\beta)$, with value $\Lambda_\ell(\beta)$. Its cost is $I_\ell(\Lambda_\ell'(\beta))=\beta\Lambda_\ell'(\beta)-\Lambda_\ell(\beta)$. At $\beta=\beta_c$, the total cost is $\sum_\ell J_\ell(\beta_c)=a_1$. The level-$1$ cost is strictly smaller than $a_1$ because $J_\ell(\beta_c)>0$ for $\ell\ge2$, while for $j\ge2$,
\begin{equation*}
 \sum_{\ell=1}^jJ_\ell(\beta_c)\le a_1<a_{\le j}.
\end{equation*}
Thus all constraints are strict at $\beta_c$ and remain strict nearby. On this neighborhood,
\begin{equation*}
 F_K(\beta)
 =a_{\mathrm{tot}}+\sum_{\ell=1}^K
 \max_x\{\beta x-I_\ell(x)\}
 =a_{\mathrm{tot}}+\Lambda(\beta),
 \qquad
 F_K'(\beta)=\Lambda'(\beta).
\end{equation*}
Since $F_K$ is a maximum of affine functions of $\beta$, it is convex. Its tangent at $\beta_c$ gives
\begin{align*}
 F_K(\beta)\ge F_K(\beta_c)+F_K'(\beta_c)(\beta-\beta_c)=a_{\mathrm{tot}}-a_1+\beta\Lambda'(\beta_c),
\end{align*}
where \eqref{eq:cramercounttemp} gives the second equation. Therefore $h_K^{\mathrm{cnt}}(\beta)\ge\Lambda'(\beta_c)=c_{\mathrm{cnt}}$. Equality holds at $\beta_c$. Equality at another point would make $F_K$ affine on the segment joining that point to $\beta_c$, contradicting $F_K''(\beta)=\Lambda''(\beta)>0$ near $\beta_c$.

\emph{All temperatures.} Let
\begin{equation*}
 \overline X_d:=\frac1{|\mathcal L_d|}\sum_{u\in\mathcal L_d}X_u,
 \qquad
 X_{\max,d}:=\max_{u\in\mathcal L_d}X_u.
\end{equation*}
Each level-$\ell$ edge appears with the same coefficient in $\overline X_d$, so
\begin{equation*}
 \Var\left(\frac{\overline X_d}{d}\right)
 =\frac1d\sum_{\ell=1}^K
 \frac{\Lambda_\ell''(0)}{\prod_{j=1}^{\ell}n_{j,d}}
 \longrightarrow0.
\end{equation*}
Hence $\overline X_d/d\pto0$. Also,
\begin{equation*}
 \frac{X_{\max,d}}d
 \le\frac{L_{K,d}(\beta)}{\beta}
 \le\frac{X_{\max,d}}d
 +\frac{\log|\mathcal L_d|}{\beta d}.
\end{equation*}
Using $L_{K,d}(\beta)\pto F_K(\beta)$ and then sending $\beta\to\infty$ gives
\begin{equation*}
 \frac{X_{\max,d}}d\pto
 v:=\lim_{\beta\to\infty}\frac{F_K(\beta)}{\beta}.
\end{equation*}
Moreover $v>c_{\mathrm{cnt}}$: choose $\beta_0>\beta_c$ in the neighborhood where $F_K=a_{\mathrm{tot}}+\Lambda$. Then
\begin{equation*}
 F_K'(\beta_0)=\Lambda'(\beta_0)>
 \Lambda'(\beta_c)=c_{\mathrm{cnt}},
\end{equation*}
and convexity implies $v\ge F_K'(\beta_0)$.

Jensen's inequality and the largest-term bound give, for every $\beta>0$,
\begin{equation*}
 H_d^{(K)}(\beta)
 \ge\frac{\log n_{1,d}}{\beta d}+\frac{\overline X_d}{d},
 \qquad
 H_d^{(K)}(\beta)
 \ge\frac{X_{\max,d}}d-\frac{\log m_d}{\beta d}.
\end{equation*}
The first inequality keeps the minimum away from $\beta=0$, and the second keeps it away from $\beta=\infty$, because $d^{-1}\log n_{1,d}\to a_1$, $d^{-1}\log m_d\to a_{\mathrm{tot}}-a_1$, and $v>c_{\mathrm{cnt}}$. On every fixed interval $[t,T]\subset(0,\infty)$, the derivative of $L_{K,d}$ is bounded in absolute value by $\max_u|X_u|/d=O_{\mathbb P}(1)$. Fixed-grid convergence therefore gives uniform convergence on $[t,T]$. Combining the three temperature ranges and evaluating at $\beta_c$ yields
\begin{equation*}
 \inf_{\beta>0}H_d^{(K)}(\beta)\pto c_{\mathrm{cnt}}.
\end{equation*}

Finally suppose $X_*/d\pto s<c_{\mathrm{cnt}}$ and put $\delta=(c_{\mathrm{cnt}}-s)/4$. With probability tending to one,
\begin{equation*}
 H_d^{(K)}(\beta)\ge c_{\mathrm{cnt}}-\delta
 \quad\text{for every }\beta>0,
 \qquad
 \frac{X_*}{d}\le s+\delta.
\end{equation*}
Choose $t>0$ small enough that $t(|s|+2)<a_1/2$. For $\beta\ge t$,
\begin{equation*}
 \log\frac{1-A_{*,K}^{\mathrm{cnt}}(\beta)}
 {A_{*,K}^{\mathrm{cnt}}(\beta)}
 =\beta d\left(H_d^{(K)}(\beta)-\frac{X_*}{d}\right)
 \ge\frac{td(c_{\mathrm{cnt}}-s)}2.
\end{equation*}
For $0<\beta<t$, Jensen's inequality gives
\begin{equation*}
 \frac1d\log\frac{1-A_{*,K}^{\mathrm{cnt}}(\beta)}
 {A_{*,K}^{\mathrm{cnt}}(\beta)}
 \ge
 \frac{\log n_{1,d}}d
 +\beta\left(\frac{\overline X_d}{d}-\frac{X_*}{d}\right),
\end{equation*}
whose right-hand side is bounded away from zero with probability tending to one. Thus the log odds diverge uniformly over all $\beta>0$, proving
\begin{equation*}
 \sup_{\beta>0}A_{*,K}^{\mathrm{cnt}}(\beta)\pto0.
\end{equation*}
If $s>c_{\mathrm{cnt}}$, then at $\beta=\beta_c$,
\begin{equation*}
 \frac1{\beta_cd}\log
 \frac{1-A_{*,K}^{\mathrm{cnt}}(\beta_c)}
 {A_{*,K}^{\mathrm{cnt}}(\beta_c)}
 \pto c_{\mathrm{cnt}}-s<0,
\end{equation*}
so $A_{*,K}^{\mathrm{cnt}}(\beta_c)\pto1$.
\end{proof}

\begin{theorem}[Averaging beyond Gaussian scores]\label{thm:cramer}
The equation
\begin{equation}
 \beta_r\Lambda_1'(\beta_r)-\Lambda_1(\beta_r)=a_1
 \label{eq:crameravgtemp}
\end{equation}
has a unique positive solution. Define
\begin{equation*}
 F_{\mathrm{avg}}(\beta):=
 \max_{I_1(x)\le a_1}\{a_1-I_1(x)+\beta x\},
 \qquad
 c_{\mathrm{avg}}:=\Lambda_1'(\beta_r).
\end{equation*}
Then, for every fixed $\beta>0$,
\begin{equation*}
 \frac1d\log\Z_d^{(K),\mathrm{avg}}(\beta)
 \pto F_{\mathrm{avg}}(\beta).
\end{equation*}
The threshold $F_{\mathrm{avg}}(\beta)/\beta$ has minimum $c_{\mathrm{avg}}$ and equals $c_{\mathrm{avg}}$ for every $\beta\ge\beta_r$. Moreover, $c_{\mathrm{avg}}<c_{\mathrm{cnt}}$. Hence for every $s\in(c_{\mathrm{avg}},c_{\mathrm{cnt}})$, averaging retrieves at every fixed $\beta\ge\beta_r$, whereas count correction fails for every temperature choice. The groups must still be known and the planted item must still be a singleton.
\end{theorem}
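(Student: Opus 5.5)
The plan reuses the structure of the proofs of Theorem~\ref{thm:averaging} and Theorem~\ref{thm:hierarchy}, with Gaussian estimates replaced by Cram\'er-type ones from the proof of Theorem~\ref{thm:cramer-count}.

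\emph{Step 1: existence and uniqueness of $\beta_r$.} This is the one-level version of the argument for $\beta_c$. Put $J_1(t)=t\Lambda_1'(t)-\Lambda_1(t)$. Then $J_1(0)=0$, $J_1'(t)=t\Lambda_1''(t)>0$ for $t>0$, and $J_1(t)\to\infty$ because $\Lambda_1'(t)\to\infty$. Hence $J_1(\beta_r)=a_1$ has a unique positive root.

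\emph{Step 2: the averaged free energy.} Write $\bar X_a=U_a+R_a$. Here $U_a=S^{(d)}_{1,a}$ is the first-edge score, and $R_a$ is the group average of the deeper edge contributions. $R_a$ is a linear combination of independent centered increments, with weights of size at most $1/n_{2,d}$ per increment. I would bound $\max_a|R_a|/d$ by a Chernoff/union bound.

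For $\theta>0$, the cumulant generating function of $R_a$ at $\theta$ is
\[
\sum_{\ell\ge2}\Big(\prod_{j=2}^{\ell}n_{j,d}\Big)\,d\,\Lambda_\ell\Big(\theta\big/\textstyle\prod_{j=2}^{\ell}n_{j,d}\Big).
\]
Since $\Lambda_\ell(u)=O(u^2)$ near $0$, this is $O(d\theta^2/n_{2,d})$. Choosing $\theta=d\,n_{2,d}^{1/2}$ (or any large multiple of $d$) gives
\[
\Prob\{|R_a|>ud\}\le 2\exp\{-\theta ud+O(d\theta^2/n_{2,d})\}.
\]
This is superexponentially small, so it survives the union bound over $n_{1,d}$ groups. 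As in Theorem~\ref{thm:averaging}, the log-sum-exp then differs from $d^{-1}\log\sum_a e^{\beta U_a}$ by at most $\beta\max_a|R_a|/d\pto0$.

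The remaining sum has $e^{a_1d+o(d)}$ i.i.d.\ terms. The one-level case ($K=1$) of \eqref{eq:cramervariational}, proved by the bin-counting in Theorem~\ref{thm:cramer-count}, gives the limit $F_{\mathrm{avg}}(\beta)$.

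\emph{Step 3: shape of the averaging threshold.} The unconstrained maximizer of $\beta x-I_1(x)$ is $x=\Lambda_1'(\beta)$, with cost $J_1(\beta)$. For $\beta\le\beta_r$ the constraint is inactive, so $F_{\mathrm{avg}}=a_1+\Lambda_1(\beta)$. Its ratio to $\beta$ has derivative $(\beta\Lambda_1'-\Lambda_1-a_1)/\beta^2=(J_1(\beta)-a_1)/\beta^2\le0$.

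For $\beta\ge\beta_r$ the constraint binds at the largest $x$ with $I_1(x)=a_1$, namely $x=\Lambda_1'(\beta_r)$. Then $F_{\mathrm{avg}}=\beta c_{\mathrm{avg}}$, so the ratio is constant and equal to $c_{\mathrm{avg}}$. Hence the minimum is $c_{\mathrm{avg}}$, attained for every $\beta\ge\beta_r$. Retrieval for $s>c_{\mathrm{avg}}$ then follows from the log-odds identity, exactly as in Theorem~\ref{thm:averaging}.

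\emph{Step 4: the strict gap $c_{\mathrm{avg}}<c_{\mathrm{cnt}}$.} This is the main obstacle. I would use the tangent characterization. By Theorem~\ref{thm:cramer-count}, $c_{\mathrm{cnt}}=\min_\beta(a_1+\Lambda(\beta))/\beta$, at least locally. More robustly, $c_{\mathrm{cnt}}=\Lambda'(\beta_c)$ with $\beta_c\Lambda'(\beta_c)-\Lambda(\beta_c)=a_1$. By the tangent-line computation, this equals $\inf_{\beta>0}(a_1+\Lambda(\beta))/\beta$, since that function is minimized where $\beta\Lambda'-\Lambda=a_1$. Likewise $c_{\mathrm{avg}}=\inf_\beta(a_1+\Lambda_1(\beta))/\beta$.

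Since $\Lambda_\ell(\beta)>0$ for $\beta>0$ (strict convexity with $\Lambda_\ell(0)=\Lambda_\ell'(0)=0$) and $K\ge2$, we have $\Lambda>\Lambda_1$ pointwise on $(0,\infty)$. Evaluating at $\beta_c$ gives
\[
c_{\mathrm{avg}}\le\frac{a_1+\Lambda_1(\beta_c)}{\beta_c}<\frac{a_1+\Lambda(\beta_c)}{\beta_c}=c_{\mathrm{cnt}}.
\]

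\emph{Step 5: the separation.} For $s$ in the interval $(c_{\mathrm{avg}},c_{\mathrm{cnt}})$, Step 3 gives averaging success for any fixed $\beta\ge\beta_r$. Theorem~\ref{thm:cramer-count} gives $\sup_\beta A^{\mathrm{cnt}}_{*,K}\pto0$, so count correction fails for every temperature choice.
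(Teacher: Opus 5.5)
Your proposal is correct and follows the paper's proof essentially step for step: the monotone function $J_1$ gives $\beta_r$; the split $\bar X_a=U_a+R_a$ is controlled by an exponential Markov bound plus a union bound over the $n_{1,d}$ groups; the one-level constrained optimization gives the shape of $F_{\mathrm{avg}}(\beta)/\beta$; and the gap comes from $c_{\mathrm{cnt}}=(a_1+\Lambda(\beta_c))/\beta_c$ together with $\Lambda_\ell(\beta_c)>0$ for $\ell\ge2$. The only difference is cosmetic: you let the Chernoff parameter grow with $d$ to get a superexponential tail for $R_a$, whereas the paper uses a fixed $t>a_1/u$, which already suffices.
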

\begin{proof}
Put $J_1(t):=t\Lambda_1'(t)-\Lambda_1(t)$. Since $J_1'(t)=t\Lambda_1''(t)>0$, $J_1(0)=0$, and $J_1(t)\to\infty$, equation \eqref{eq:crameravgtemp} has a unique positive solution.

For group $a$, write $\bar X_a=S_{1,a}^{(d)}+R_a$, where $R_a$ is the averaged contribution from levels $2,\ldots,K$. Put $q_{\ell,d}=\prod_{j=2}^{\ell}n_{j,d}$ for $\ell\ge2$; then $R_a$ is the sum, over these levels, of averages of $q_{\ell,d}$ independent edge scores. Hence
\begin{equation*}
 \frac1d\log\E e^{tR_a}
 =\sum_{\ell=2}^Kq_{\ell,d}\Lambda_\ell(t/q_{\ell,d})
 \longrightarrow0
\end{equation*}
for every fixed real $t$, since $\Lambda_\ell(z)=\Lambda_\ell''(0)z^2/2+o(z^2)$ at zero. For $u>0$, choose a fixed $t>a_1/u$. A union bound and exponential Markov inequalities at $t$ and $-t$ give
\begin{equation*}
 \Prob\{\max_a|R_a|>du\}\le
 2n_{1,d}\exp\{-dtu+o(d)\}\longrightarrow0.
\end{equation*}
Averaging therefore leaves the one-level root free energy $F_{\mathrm{avg}}$ defined in the theorem.
For positive $\beta$, the optimizer is $\Lambda_1'(\beta)$ until $\beta_r$ and the rightmost feasible point $c_{\mathrm{avg}}=\Lambda_1'(\beta_r)$ thereafter. Thus the threshold decreases to $c_{\mathrm{avg}}$ and equals it for $\beta\ge\beta_r$. In particular
\begin{equation*}
 c_{\mathrm{avg}}=\min_{\beta>0}\frac{a_1+\Lambda_1(\beta)}{\beta}.
\end{equation*}
At the count optimum,
\begin{equation*}
 c_{\mathrm{cnt}}=
 \frac{a_1+\Lambda_1(\beta_c)}{\beta_c}
 +\frac{\sum_{\ell=2}^K\Lambda_\ell(\beta_c)}{\beta_c}
 >c_{\mathrm{avg}},
\end{equation*}
because the second summand is strictly positive. This proves the gap and all retrieval assertions.
\end{proof}

\paragraph{Concrete non-Gaussian example.} Let $Y_\ell=b_\ell(P_\ell^+-P_\ell^-)$ for independent Poisson variables of mean $\lambda_\ell/2$, with $b_\ell,\lambda_\ell>0$. Then
\begin{equation*}
 \Lambda_\ell(t)=\lambda_\ell(\cosh(b_\ell t)-1),\qquad
 I_\ell(x)=\frac{x}{b_\ell}\operatorname{arsinh}\frac{x}{\lambda_\ell b_\ell}
 -\sqrt{(x/b_\ell)^2+\lambda_\ell^2}+\lambda_\ell.
\end{equation*}
These distributions meet every hypothesis and allow direct numerical optimization of \eqref{eq:cramervariational}. For Gaussian increments $\Lambda_\ell(t)=v_\ell t^2/2$ with $\sum_\ell v_\ell=1$, Theorems~\ref{thm:cramer-count} and \ref{thm:cramer} recover exactly $\beta_c=c_{\mathrm{cnt}}=\sqrt{2a_1}$ and $c_{\mathrm{avg}}=\sqrt{2v_1a_1}$. The dependence on the leaf marginal cumulant, rather than its variance alone, matters beyond Gaussian scores. Heavy-tailed increments without exponential moments and bounded-support saturation regimes are not covered by these theorems.

\section{Extensions and boundary cases}
\label{app:extensions}
\subsection{Partial correction and a second optimization boundary}
Fix $0<\gamma\le1$. Partial count correction divides every distractor weight by $m_d^\gamma$, so $\gamma=1$ is full count correction and smaller $\gamma$ leaves a factor $m_d^{1-\gamma}$ of the within-cluster multiplicity. Its distractor denominator and limiting threshold are
\begin{equation*}
 D_{\gamma,d}(\beta):=\frac{\Z_d(\beta)}{m_d^\gamma},
 \qquad
 h_\gamma(\beta):=\frac{F(\beta)-\gamma\alpha_2}{\beta}.
\end{equation*}
Put $c_\gamma:=\alpha_1+(1-\gamma)\alpha_2$, the total distractor growth rate left after the partial correction.
\begin{proposition}[Optimal partial correction]\label{prop:partial}
Assume $0<\rho<1$. If $\rho c_\gamma\le\alpha_1$, then
\begin{equation}
 \beta_\gamma=\sqrt{2c_\gamma},\qquad
 \min_{\beta>0}h_\gamma(\beta)=\sqrt{2c_\gamma}.
 \label{eq:partial1}
\end{equation}
If $\rho c_\gamma>\alpha_1$, then
\begin{equation}
 \beta_\gamma=\sqrt{\frac{2(1-\gamma)\alpha_2}{1-\rho}},\qquad
 \min_{\beta>0}h_\gamma(\beta)
 =\sqrt{2\rho\alpha_1}+\sqrt{2(1-\rho)(1-\gamma)\alpha_2}.
 \label{eq:partial2}
\end{equation}
The formulas agree at equality. The minimizer is unique for $\gamma>0$.
\end{proposition}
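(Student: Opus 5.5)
The plan is to repeat the argument of Lemma~\ref{lem:shape} with $\alpha_2$ replaced by $\gamma\alpha_2$ in the numerator. We substitute the explicit branches of $F$ from \eqref{eq:free} and \eqref{eq:split} into $h_\gamma(\beta)=(F(\beta)-\gamma\alpha_2)/\beta$. Then we differentiate branch by branch and locate the unique sign change of $h_\gamma'$. Since $F$ is continuous, $h_\gamma$ is continuous, and a strict sign pattern of $h_\gamma'$ on each open branch gives both the minimizer and its uniqueness. On every branch below the first freezing point we have $F=\alpha+\beta^2/2$, so
\begin{equation*}
 h_\gamma(\beta)=\frac{c_\gamma}{\beta}+\frac{\beta}{2},
 \qquad
 h_\gamma'(\beta)=-\frac{c_\gamma}{\beta^2}+\frac12,
\end{equation*}
which vanishes only at $\beta=\sqrt{2c_\gamma}$, where $h_\gamma=\sqrt{2c_\gamma}$. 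On every fully frozen branch, $F$ is linear in $\beta$. There $h_\gamma=\mathrm{const}-\gamma\alpha_2/\beta$, which is strictly increasing because $\gamma>0$. This is the only place where $\gamma>0$ is used for uniqueness.

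\emph{Case $\rho\le\alpha_1/\alpha$.} Here $F=\frem(\beta;\alpha,1)$ with a single freezing point $b=\sqrt{2\alpha}$. Since $c_\gamma\le\alpha$, we have $\sqrt{2c_\gamma}\le b$. Also $\rho c_\gamma\le\rho\alpha\le\alpha_1$, so we are automatically in the regime of \eqref{eq:partial1}. Then $h_\gamma'<0$ on $(0,\sqrt{2c_\gamma})$ and $h_\gamma'>0$ on $(\sqrt{2c_\gamma},b)$, and $h_\gamma$ increases strictly beyond $b$. This gives \eqref{eq:partial1}.

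\emph{Case $\rho>\alpha_1/\alpha$, i.e.\ $b_1<b_2$.} We use the three branches of \eqref{eq:split}. On the middle branch $b_1\le\beta\le b_2$,
\begin{equation*}
 h_\gamma(\beta)=\sqrt{2\rho\alpha_1}+\frac{(1-\gamma)\alpha_2}{\beta}+\frac{\beta(1-\rho)}{2},
\end{equation*}
whose derivative changes sign only at $\beta_*:=\sqrt{2(1-\gamma)\alpha_2/(1-\rho)}$. Note that $\beta_*\le b_2$ always holds, and the third branch is increasing.

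The key comparison is that the two conditions $\sqrt{2c_\gamma}\le b_1$ and $\beta_*\le b_1$ are both equivalent to $\rho c_\gamma\le\alpha_1$. For the first, $c_\gamma\le\alpha_1/\rho$ is exactly $\rho c_\gamma\le\alpha_1$. For the second, $(1-\gamma)\alpha_2/(1-\rho)\le\alpha_1/\rho$ is equivalent to $\rho(1-\gamma)\alpha_2\le(1-\rho)\alpha_1$, which rearranges to $\rho c_\gamma\le\alpha_1$.

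If $\rho c_\gamma\le\alpha_1$, the interior critical point $\sqrt{2c_\gamma}$ lies in the first branch. The middle branch is then increasing on $[b_1,b_2]$ because $\beta\ge b_1\ge\beta_*$, which gives \eqref{eq:partial1}.

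If $\rho c_\gamma>\alpha_1$, the first branch is strictly decreasing on all of $(0,b_1]$. The middle branch decreases up to $\beta_*\in(b_1,b_2]$ and increases afterwards. Evaluating at $\beta_*$ gives
\begin{equation*}
 \sqrt{2\rho\alpha_1}+2\sqrt{(1-\gamma)\alpha_2(1-\rho)/2},
\end{equation*}
which is \eqref{eq:partial2}.

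At $\rho c_\gamma=\alpha_1$ we have $\beta_*=b_1=\sqrt{2c_\gamma}$, and both formulas give $\sqrt{2c_\gamma}$ there. For the values this uses $2\rho\alpha_1=2\rho^2c_\gamma$ and $2(1-\rho)(1-\gamma)\alpha_2=2(1-\rho)^2c_\gamma$.

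The main obstacle is not any single computation but the bookkeeping that the position of each critical point relative to $b_1$ (and $b_2$) is governed by the one condition $\rho c_\gamma\le\alpha_1$. The two comparisons above carry this. If this case analysis becomes messy, a fallback is the convexity/tangent-line argument used in Theorem~\ref{thm:hierarchy}. In that argument, the minimizer of $(F(\beta)-\gamma\alpha_2)/\beta$ is the point where the tangent to the convex function $F$ passes through $(0,\gamma\alpha_2)$, i.e.\ $\beta F'(\beta)-F(\beta)=-\gamma\alpha_2$. We would then solve this equation on each branch.
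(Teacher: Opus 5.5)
Your proposal is correct and follows the paper's proof essentially step for step. Like the paper, you substitute the branches of $F$, observe that the unfrozen branch gives $c_\gamma/\beta+\beta/2$ and the fully frozen branch gives a constant minus $\gamma\alpha_2/\beta$, and read off the derivative signs from the fact that both $\sqrt{2c_\gamma}\le b_1$ and $\sqrt{2(1-\gamma)\alpha_2/(1-\rho)}\le b_1$ are equivalent to $\rho c_\gamma\le\alpha_1$; your explicit checks of these two equivalences and of the agreement at equality simply fill in details the paper only asserts.
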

\begin{proof}
In the merged regime the first branch is $c_\gamma/\beta+\beta/2$, whose optimum lies strictly before $\sqrt{2\alpha}$. The last branch is $\sqrt{2\alpha}-\gamma\alpha_2/\beta$, strictly increasing. Also $\rho\le\alpha_1/\alpha$ implies $\rho c_\gamma<\alpha_1$. In the split regime, the first branch has the same expression. Its optimum lies at or before $b_1$ precisely when $\rho c_\gamma\le\alpha_1$. The middle branch is
\begin{equation*}
 \sqrt{2\rho\alpha_1}+\frac{(1-\gamma)\alpha_2}{\beta}+\frac{\beta(1-\rho)}2.
\end{equation*}
Under the reverse strict inequality its optimum in \eqref{eq:partial2} lies strictly between $b_1$ and $b_2$. Otherwise its derivative is nonnegative from $b_1$ onward. The last branch $v-\gamma\alpha_2/\beta$ is strictly increasing. The derivative signs and continuity prove the formulas and uniqueness. The case $\gamma=1$ always satisfies \eqref{eq:partial1}.
\end{proof}

\subsection{Unequal multiplicities and a relevant group}
Fix a finite number $T$ of distractor types. Type $t$ has
\begin{equation*}
 M_{t,d}:=\lceil e^{a_{1t}d}\rceil
 \quad\text{clusters},\qquad
 m_{t,d}:=\lceil e^{a_{2t}d}\rceil
 \quad\text{tokens per cluster},
\end{equation*}
correlation $\rho_t\in(0,1)$, and scores $\mu_td+\sigma_tX^{(t)}_{aj}$ with $\sigma_t>0$ where $X^{(t)}$ follows \eqref{eq:model} with rates $(a_{1t},a_{2t})$ and correlation $\rho_t$. Let $F_t$ denote its free energy from Theorem~\ref{thm:free}, and let $\Z_{t,d}$ be its partition function including the shift and scale above. Applying exponent-$\gamma$ count correction gives the total distractor denominator
\begin{equation*}
 D_d^{\mathrm{types}}(\beta)
 :=\sum_{t=1}^T
 \frac{\Z_{t,d}(\beta)}{m_{t,d}^\gamma}.
\end{equation*}
Suppose the relevant group has $r_d$ identical copies of score $X_*$, where $d^{-1}\log r_d\to a_*\ge0$ and $X_*/d\pto s$. Its corrected numerator and total attention mass are
\begin{equation*}
 r_d^{1-\gamma}e^{\beta X_*},
 \qquad
 A_{\mathrm{rel}}
 :=\frac{r_d^{1-\gamma}e^{\beta X_*}}
 {r_d^{1-\gamma}e^{\beta X_*}+D_d^{\mathrm{types}}(\beta)}.
\end{equation*}
\begin{proposition}[Finitely many distractor types]\label{prop:types}
Under the setup above,
\begin{equation*}
 \frac1d\log\frac{1-A_{\mathrm{rel}}}{A_{\mathrm{rel}}}
 \pto\max_{1\le t\le T}\{\beta\mu_t+F_t(\beta\sigma_t)-\gamma a_{2t}\}
 -\beta s-(1-\gamma)a_*.
\end{equation*}
A strictly negative limit implies success and a strictly positive limit implies failure. Cross-type independence is unnecessary if each type has the stated marginal model.
\end{proposition}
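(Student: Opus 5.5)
The plan is to write the log odds exactly and reduce to per-type free energies. By definition,
\begin{equation*}
 \log\frac{1-A_{\mathrm{rel}}}{A_{\mathrm{rel}}}
 =\log D_d^{\mathrm{types}}(\beta)-(1-\gamma)\log r_d-\beta X_*,
\end{equation*}
so after dividing by $d$, the last two terms converge in probability to $(1-\gamma)a_*$ and $\beta s$ by the hypotheses $d^{-1}\log r_d\to a_*$ and $X_*/d\pto s$. No independence between $X_*$ and the distractors is used, exactly as in the proof of Theorem~\ref{thm:free}. It therefore remains to show
\begin{equation*}
 d^{-1}\log D_d^{\mathrm{types}}(\beta)\pto\max_t\{\beta\mu_t+F_t(\beta\sigma_t)-\gamma a_{2t}\}.
\end{equation*}

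\textbf{Single type.} For a single type, the shift and scale factor out:
\begin{equation*}
 \Z_{t,d}(\beta)=e^{\beta\mu_td}\sum_{a,j}e^{(\beta\sigma_t)X^{(t)}_{aj}}.
\end{equation*}
Applying Theorem~\ref{thm:free} at the fixed inverse temperature $\beta\sigma_t>0$, with rates $(a_{1t},a_{2t})$ and correlation $\rho_t$, gives
\begin{equation*}
 d^{-1}\log\Z_{t,d}(\beta)\pto\beta\mu_t+F_t(\beta\sigma_t).
\end{equation*}
Since $d^{-1}\log m_{t,d}^\gamma\to\gamma a_{2t}$, each summand of $D_d^{\mathrm{types}}$ has normalized log converging in probability to $c_t:=\beta\mu_t+F_t(\beta\sigma_t)-\gamma a_{2t}$.

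\textbf{Combining the types.} Use the elementary sandwich already used in the proof of Corollary~\ref{cor:singletons}, extended to $T$ positive terms:
\begin{equation*}
 \max_t\log U_t\le\log\sum_tU_t\le\max_t\log U_t+\log T.
\end{equation*}
Because $T$ is fixed, $d^{-1}\log T\to0$. Convergence in probability of finitely many coordinates is joint convergence in probability, and the max map is continuous, so $d^{-1}\max_t\log U_t\pto\max_tc_t$. This step only uses each type's marginal convergence, which is why no cross-type independence is required. The limiting log-odds formula follows.

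\textbf{Dichotomy.} Write $E_d$ for the normalized log odds with limit $\ell$. If $\ell<0$, then with probability tending to one $E_d<\ell/2$, so
\begin{equation*}
 1-A_{\mathrm{rel}}\le e^{d\ell/2}\to0.
\end{equation*}
If $\ell>0$, then symmetrically $A_{\mathrm{rel}}\le e^{-d\ell/2}\to0$.

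The only point needing any care is that Theorem~\ref{thm:free} is stated for a single parameter set with the $N(0,d)$ normalization. I would check that the rescaling $\beta\mapsto\beta\sigma_t$ and the deterministic shift $\mu_td$ commute with taking $d^{-1}\log$. I would also note that the ceiling in $m_{t,d}$ and the rate $a_*$ enter only through deterministic $o(d)$ corrections. Beyond this, I expect no real obstacle.
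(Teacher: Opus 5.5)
Your proposal is correct and follows the paper's proof essentially step for step. The shared steps are: per-type exponents from Theorem~\ref{thm:free} (after factoring out the shift $e^{\beta\mu_t d}$ and using temperature $\beta\sigma_t$), the $\log T=o(d)$ sandwich for the log of a finite sum, simultaneous convergence of the $T$ components by a union bound (the reason cross-type independence is not needed), and subtraction of the log numerator $(1-\gamma)\log r_d+\beta X_*$ in the logistic identity. You spell out the shift/rescaling and the success/failure dichotomy more explicitly than the paper does, but the argument is the same.
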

\begin{proof}
Each corrected type partition has the asserted exponent by Theorem~\ref{thm:free}. The logarithm of a sum of $T$ positive numbers differs from their maximum log by at most $\log T=o(d)$. A union bound gives simultaneous component convergence. The relevant numerator is $r_d^{1-\gamma}e^{\beta X_*}$. Subtract its logarithm and use the logistic identity.
\end{proof}
Full correction removes the relevant group's multiplicity advantage as well as distractor multiplicities. Individual relevant-token mass is $A_{\mathrm{rel}}/r_d$, so semantic-group retrieval and retrieval of one specific token are different objectives.

\subsection{Exact and approximate replication}
Consider a scalar count rule that multiplies every item in a group of size $m$ by the same factor $w(m)>0$. A group with scores $x_1,\ldots,x_m$ then contributes
\begin{equation*}
 C_w(x_1,\ldots,x_m)
 :=w(m)\sum_{j=1}^me^{\beta x_j}.
\end{equation*}
Call the rule replication invariant if replacing every score by $k$ identical copies leaves this contribution unchanged for every integer $k\ge1$.
\begin{proposition}[Uniqueness within scalar count rules]\label{prop:duplicates}
Replication invariance holds if and only if $w(m)=w(1)/m$. With this rule, replicating an entire score list leaves $m^{-1}\sum_je^{\beta x_j}$ unchanged, whereas replicating only selected nonidentical scores need not preserve it.
\end{proposition}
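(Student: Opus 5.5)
The plan is to turn replication invariance into a functional equation for $w$ and solve it directly. Replace each of the $m$ scores by $k$ identical copies. The group then has size $km$, and its exponential sum becomes $k\sum_j e^{\beta x_j}$. Invariance therefore reads
\begin{equation*}
 w(km)\,k\sum_{j=1}^m e^{\beta x_j}=w(m)\sum_{j=1}^m e^{\beta x_j}
 \qquad\text{for all } m,k\ge1 .
\end{equation*}
Because the sum is strictly positive, it cancels, and we are left with the functional equation $k\,w(km)=w(m)$ for all positive integers $m,k$. This reduction is the only real step.

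For the forward direction, take $m=1$. This gives $w(k)=w(1)/k$ for every $k\ge1$, which is the claimed rule. For the converse, suppose $w(m)=w(1)/m$. Then
\begin{equation*}
 k\,w(km)=\frac{k\,w(1)}{km}=\frac{w(1)}{m}=w(m),
\end{equation*}
so invariance holds for every $k$. Taking $w(1)=1$ shows that the contribution equals $m^{-1}\sum_j e^{\beta x_j}$, the empirical mean of $e^{\beta x}$. Replicating the whole list leaves the empirical law of the scores unchanged, and therefore leaves this mean unchanged.

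For the last clause, a single explicit counterexample suffices. Take $m=2$ with scores $x_1\ne x_2$, and duplicate only $x_1$. The original value is $(e^{\beta x_1}+e^{\beta x_2})/2$. After duplication it becomes $(2e^{\beta x_1}+e^{\beta x_2})/3$. The difference is
\begin{equation*}
 \frac{2e^{\beta x_1}+e^{\beta x_2}}{3}-\frac{e^{\beta x_1}+e^{\beta x_2}}{2}
 =\frac{e^{\beta x_1}-e^{\beta x_2}}{6},
\end{equation*}
which is nonzero because $\beta>0$ and $x_1\ne x_2$.

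I do not expect a real obstacle. The only point needing care is to read "replication invariant" as holding for every group size $m$, every score list, and every $k$. Under that reading, the specialization to $m=1$ in the forward direction is legitimate.
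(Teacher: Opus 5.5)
Your proposal is correct and follows the paper's own argument: replicating a singleton gives $k\,w(k)=w(1)$, the converse is a direct check, and replicating the whole list multiplies both the exponential sum and the group size by the same factor. Your explicit two-score counterexample, with difference $(e^{\beta x_1}-e^{\beta x_2})/6\neq0$, makes concrete the paper's one-line remark that selective replication changes the empirical score distribution.
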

\begin{proof}
Replicate a singleton $m$ times to get $m w(m)=w(1)$; the converse is immediate. Replication of a whole list multiplies its exponential sum and its size by the same factor. Selective replication changes the empirical distribution of scores.
\end{proof}
For approximate replication, let $\bar x=m^{-1}\sum_jx_j$. If $|x_j-\bar x|\le u$ for all group members, Jensen's inequality and the termwise upper bound give
\begin{equation*}
 0\le\log\left(m^{-1}\sum_je^{\beta x_j}\right)-\beta\bar x\le\beta u.
\end{equation*}
Thus uniformly $o(d)$ score spread makes count correction and averaging equivalent at the exponential scale. With fixed $\rho<1$ and exponentially many children, that uniform spread need not be $o(d)$.

\subsection{No universal effective context length}
An i.i.d. Gaussian context with $\exp(a_{\mathrm{eff}}d+o(d))$ distractors has limiting free energy $f(\beta;a_{\mathrm{eff}},1)$. A temperature-independent effective context length would therefore be a single rate $a_{\mathrm{eff}}$ satisfying
\begin{equation*}
 F(\beta)=f(\beta;a_{\mathrm{eff}},1)
 \qquad\text{for every }\beta>0.
\end{equation*}
The question is whether the correlated two-level model can be replaced by such an i.i.d. context at all temperatures.
\begin{corollary}\label{cor:neff}
If $\rho>\alpha_1/\alpha$, no such rate $a_{\mathrm{eff}}$ exists.
\end{corollary}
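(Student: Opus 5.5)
Assume toward a contradiction that some rate $a_{\mathrm{eff}}$ satisfies $F(\beta)=f(\beta;a_{\mathrm{eff}},1)$ for every $\beta>0$. I will compare the two sides using the explicit three-branch formula \eqref{eq:split}, which applies because $\rho>\alpha_1/\alpha$ is equivalent to $b_1<b_2$. The idea is that $f(\cdot;a_{\mathrm{eff}},1)$ has exactly one non-analytic point, whereas $F$ has two distinct freezing points.

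\emph{Small $\beta$.} First identify $a_{\mathrm{eff}}$ from the behaviour near $\beta=0$. For $\beta\le b_1$, \eqref{eq:split} gives $F(\beta)=\alpha+\beta^2/2$. For $\beta\le\sqrt{2a_{\mathrm{eff}}}$, \eqref{eq:rem} gives $f(\beta;a_{\mathrm{eff}},1)=a_{\mathrm{eff}}+\beta^2/2$. Letting $\beta\downarrow0$ forces $a_{\mathrm{eff}}=\alpha$. The only alternative would be $a_{\mathrm{eff}}=0$, in which case $f(\beta;0,1)=0$ for all $\beta$; this contradicts $F(\beta)\ge\alpha>0$ near zero.

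\emph{Large $\beta$.} With $a_{\mathrm{eff}}=\alpha$, compare slopes as $\beta\to\infty$. The function $f(\beta;\alpha,1)/\beta$ tends to $\sqrt{2\alpha}$. By contrast, $F(\beta)/\beta=\sqrt{2\rho\alpha_1}+\sqrt{2(1-\rho)\alpha_2}$ for $\beta\ge b_2$. By Cauchy--Schwarz,
\begin{equation*}
\sqrt{2\rho\alpha_1}+\sqrt{2(1-\rho)\alpha_2}\le\sqrt{2\alpha},
\end{equation*}
with equality exactly when $\alpha_1/\rho=\alpha_2/(1-\rho)$, i.e.\ $\rho=\alpha_1/\alpha$. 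Since $\rho>\alpha_1/\alpha$, the inequality is strict, so the slopes differ and we have the contradiction.

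\emph{Alternative route.} One could instead compare on the interval $(b_1,b_2)$. There $F''=1-\rho$, while $f(\cdot;\alpha,1)''\in\{0,1\}$. This also gives a contradiction, but the slope argument is cleaner. The only step needing care is the strictness of the Cauchy--Schwarz inequality, and that follows directly from the equality condition.
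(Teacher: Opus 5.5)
Your argument is correct. Its first step (letting $\beta\downarrow0$ to force $a_{\mathrm{eff}}=\alpha$) is the same as the paper's, but you get the contradiction at the opposite end of the temperature range.

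The paper works at moderate temperatures. Since $\rho>\alpha_1/\alpha$ gives $b_1<\min\{b_2,\sqrt{2\alpha}\}$, both functions are on explicit quadratic branches on that nonempty interval, where $f(\beta;\alpha,1)-F(\beta)=\tfrac{\rho}{2}(\beta-b_1)^2>0$.

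You instead compare slopes as $\beta\to\infty$: $F(\beta)/\beta\to\sqrt{2\rho\alpha_1}+\sqrt{2(1-\rho)\alpha_2}$ while $f(\beta;\alpha,1)/\beta\to\sqrt{2\alpha}$. The gap comes from strict Cauchy--Schwarz, and your equality condition $\alpha_1/\rho=\alpha_2/(1-\rho)\iff\rho=\alpha_1/\alpha$ is right.

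The paper's version locates and quantifies the mismatch in the intermediate phase $b_1<\beta<b_2$, just after cluster freezing, which is the phenomenon the corollary is meant to highlight. Yours uses only two endpoint features of $F$: the total entropy $\alpha$ at small $\beta$, and the limiting maximal normalized score $v$ of Lemma~\ref{lem:uniform} at large $\beta$. It therefore reads cleanly as saying that an i.i.d.\ context with the same total count has a strictly larger maximum score than the correlated ensemble.

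Your alternative via $F''=1-\rho\notin\{0,1\}$ on $(b_1,b_2)$ is essentially the paper's argument in second-derivative form.
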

\begin{proof}
Sending $\beta\downarrow0$ forces $a_{\mathrm{eff}}=\alpha$. But $b_1<\min(b_2,\sqrt{2\alpha})$, and on the nonempty interval between them,
\begin{equation*}
 f(\beta;\alpha,1)-F(\beta)=\frac\rho2(\beta-b_1)^2>0.
\end{equation*}
This contradicts equality. The strict split condition is essential; in the merged regime the iid free energy holds.
\end{proof}

\subsection{Endpoint and scaling checks}
At $\rho=0$, all scores are i.i.d. and $F=f(\beta;\alpha,1)$. Count correction still has unique optimum $\sqrt{2\alpha_1}$. Averaging has limiting free energy $\alpha_1$, hence threshold $\alpha_1/\beta$ and infimum zero as $\beta\to\infty$. At $\rho=1$,
\begin{equation*}
 \Z_d=m_d\sum_ae^{\beta\sqrt dG_a},\qquad F=\alpha_2+f(\beta;\alpha_1,1).
\end{equation*}
Count correction and averaging coincide exactly. Their minimum is $\sqrt{2\alpha_1}$, attained by every $\beta\ge\sqrt{2\alpha_1}$, so uniqueness in Theorem~\ref{thm:optimal} excludes this endpoint.

At fixed $\beta$, a correction with uniformly $o(d)$ log weight cannot change an off-boundary threshold. Fixed or subexponential multiplicities therefore cannot generate the order-$d$ correction studied here. The two-level model assumes $\alpha_2>0$ rather than a fixed number of duplicates.

For conventional scaled dot-product pre-softmax scores $t_dX/\sqrt d$, a finite-size plug-in choice is
\begin{equation*}
 t_{\mathrm{opt},d}=\sqrt{2\log M_d},\qquad
 \beta_{\mathrm{opt},d}=\sqrt{2\log M_d/d}\longrightarrow\sqrt{2\alpha_1}.
\end{equation*}
Compact uniform convergence justifies this plug-in choice. The scale depends on marginal Gaussian variance and on supplied groups; it is not a universal scaling law for bounded or normalized attention models.

\section{Numerical protocol}
\label{app:numerics}
\begin{figure}[t]
\centering\includegraphics[width=\linewidth]{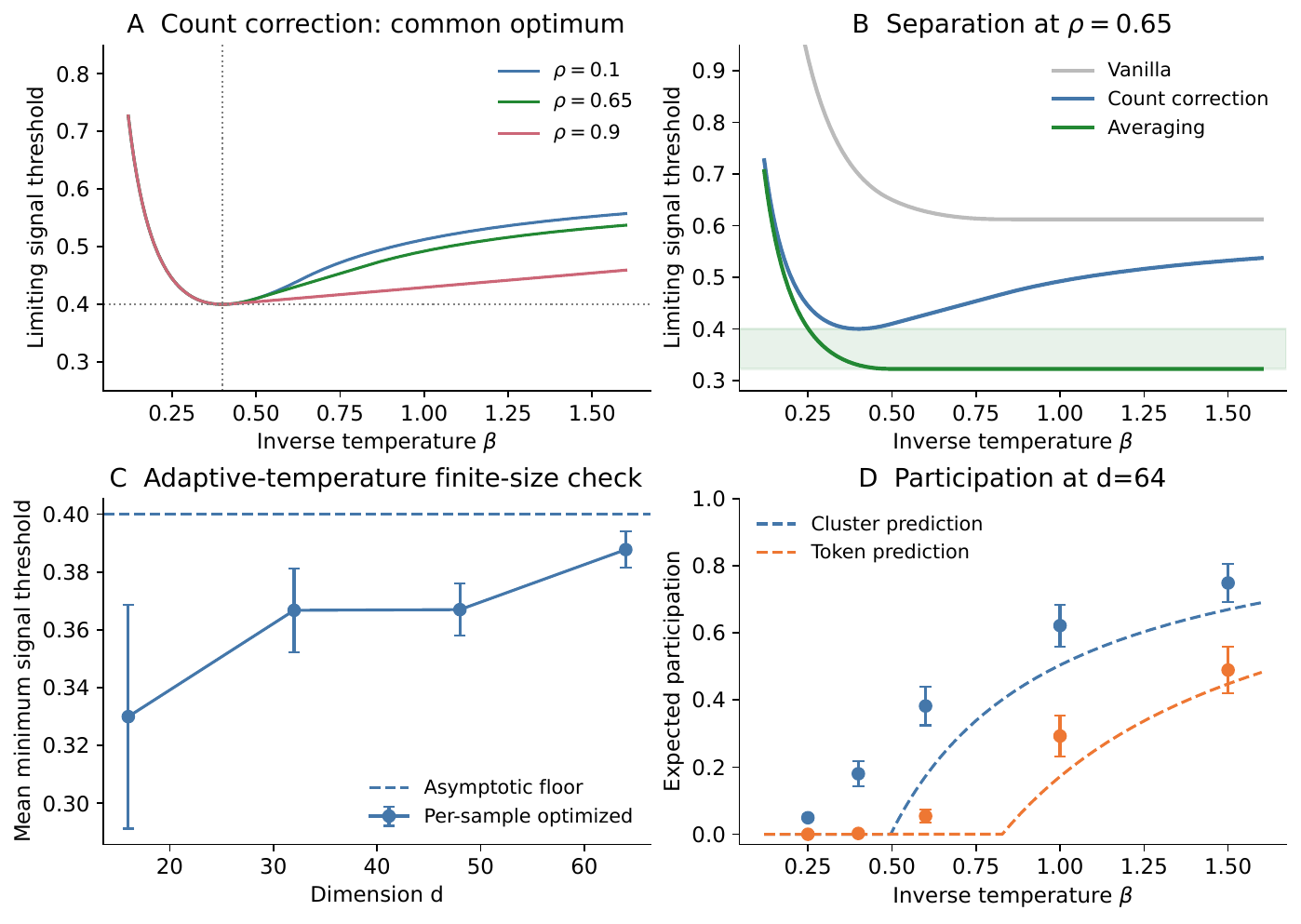}
\caption{Analytical retrieval thresholds and finite-size Gaussian checks. Dashed lines in the simulation panels are infinite-size predictions; markers are empirical means with $1.96$ standard errors. These are synthetic model checks.}\label{fig:checks}
\end{figure}
Figure~\ref{fig:checks} retains the four Gaussian diagnostics. Panel A shows the common count-correction optimum as correlation changes. Panel B compares the required signals for ordinary attention, count correction, and averaging. Panel C optimizes the temperature for each finite array and compares the required signal to the asymptotic limit. Panel D distinguishes cluster participation from token participation.

The scripts use NumPy, SciPy, and Matplotlib on CPU. The theory verification uses seed 20260910 and 120 parameter configurations: $\alpha_1,\alpha_2$ log-uniform on $[e^{-3},e^{0.5}]$, $\rho$ uniform on $[0.015,0.985]$, and $\beta$ log-uniform on $[e^{-3},e^2]$. SLSQP optimizes the variational problem from two starting points, independently of its closed-form case distinction. Scalar optimization checks the count and partial-count optima. Additional checks cover covariance monotonicity, Jensen comparisons, replication, bounded count errors, and the finite-sample interpolation identity. The last check has Monte Carlo sampling error and is not symbolic verification.

The Gaussian simulation uses seed 271828 with $(\alpha_1,\alpha_2,\rho)=(0.08,0.12,0.65)$, $d\in\{16,32,48,64\}$, and 64 independent arrays at each dimension. It evaluates $\beta\in\{0.25,0.4,0.6,1,1.5\}$ on each shared array. The largest array has $168\times2165=363720$ distractor scores. Each per-array optimum is searched on $[0.025,4]$; this bounded search is not the all-temperature proof. Plot error bars are $1.96$ times the sample standard error, interpreted as descriptive normal approximations.

The extension verification uses seed 314159 to compare entropy-root calibration with scalar optimization and a separately constrained 20-dimensional entropy optimization on 20 arrays. It also checks the high-temperature tangent bound by directly optimizing the prefix-constrained variational problem for 40 hierarchies of depths three through five. The maximum entropy-duality discrepancy was about $5.1\times10^{-6}$; numerical tolerances and raw discrepancies are recorded, and solver success is not a proof.

The non-Gaussian verification uses seed 161803 and 30 hierarchies of depths two through five, with independent level-specific Poisson rates and jump sizes. It compares the cumulant optimum with direct prefix-constrained optimization and checks the averaged root floor through a separate rate-function inversion.

Agreement with a Gaussian simulator checks the stipulated model's implementation, not its validity for trained heads. The separate frozen-sentence experiment is described in Appendix~\ref{app:learned}; no training experiment or implemented LUCID comparison is claimed.

\section{Frozen-encoder experiment: protocol and additional results}
\label{app:learned}

\paragraph{Data and representation.} BANKING77 contains 10,003 training and 3,080 test sentences across 77 intents \citep{casanueva2020banking}. We process the official training split first and test split second, discarding repeated texts after lowercasing and collapsing whitespace, including test texts already encountered in training. This removes four training and eight test rows. With seed 20260913, we select 32 training sentences per intent as a key pool, then four validation and 12 evaluation queries per intent from the test split. Validation and evaluation share the intent vocabulary and key pool, but no normalized query text. We retain all selected queries regardless of retrieval outcome. The resulting scores are not an official BANKING77 benchmark result. Dataset labels construct the controlled contexts and evaluate outputs; the retrieval functions receive scores, groups, and values, without the query label.

The encoder is \texttt{sentence-transformers/all-MiniLM-L6-v2}, revision \texttt{1110a243fdf4706b3f48f1d95db1a4f5529b4d41}, using its official float32 ONNX export on CPU. Following the model card \citep{minilm2021model}, we apply attention-mask mean pooling and normalize the resulting 384-dimensional sentence vectors. The maximum input length is 256 tokens. The 3,696 selected sentences are encoded in batches of 32; no parameters are trained. Cosine scores are computed from these vectors. A group score is the arithmetic mean of its member scores, equivalently the dot product with the mean of the normalized key vectors, without normalizing that mean again. BANKING77 is distributed under CC BY 4.0 and the encoder under Apache 2.0. Model pretraining overlap has not been audited.

\paragraph{Paired contexts.} For each seed in $\{11,23,47\}$, we randomly order each intent's 32 keys. For each target intent, we choose 38 of the other 76 intents to form larger distractor groups; the remaining 38 are singleton distractors. Queries of the same intent share these draws. The six conditions use the same ordering: one key per intent; eight copies of the first key in the selected distractor groups; four distinct keys; eight distinct keys; eight distractor keys with four relevant keys; and a mixed relevant group. In the last condition, three keys are moved from the first selected distractor group into the relevant singleton's group. This preserves the token set and leaves five keys in the donor group. Context sizes are respectively 77, 343, 191, 343, 346, and 343 keys. This construction deliberately controls whether relevant keys are averaged; it does not estimate a partition or sample naturally occurring context frequencies.

\paragraph{Methods and values.} With cosine score $c_i$, multiplier $t$, and supplied group size $m_i$, ordinary and corrected token weights are proportional to $e^{tc_i}$ and $e^{tc_i}/m_i$. Averaging assigns weight $e^{t\bar c_g}$ and value $\bar v_g$ to group $g$. Individual values are the 77-dimensional one-hot intent vectors. Thus the output's relevant coordinate sums the attention on relevant values; with several relevant keys it is not a single-token mass. A fourth baseline selects one member uniformly per group and applies softmax with equal group priors. Its selection uses the context seed, not query scores. In the mixed condition it may discard the sole relevant key, yielding zero relevant mass and infinite negative log mass. It therefore has no finite validation objective there; its deterministic smallest-grid fallback is archived but excluded from Table~\ref{tab:learned-detail}.

\paragraph{Temperature and uncertainty.} Each method and condition selects one $t$ from 81 logarithmically spaced values between 0.25 and 1024 by minimizing mean validation negative log relevant value mass over the 308 queries and three seeds. The same chosen value is applied to all evaluation queries and context seeds. All three primary methods choose an interior grid point. The multiplier $t$ acts on cosine scores; we do not identify it with the Gaussian $\beta$ or fit the asymptotic square-root thresholds to these bounded scores. Table~\ref{tab:learned-detail} reports top-1 accuracy of the output intent at the same selected multiplier. Figure~\ref{fig:learned} shows evaluation curves descriptively, without using them to select temperatures. Bootstrap intervals use 2,000 paired resamples of the 77 intent blocks (seed 8831), after averaging the three context seeds within each query. They describe variation across these intents conditional on this encoder, corpus, and key draws; the 2,772 query-context pairs are not treated as independent queries.

\begin{figure}[ht]
\centering\includegraphics[width=\linewidth]{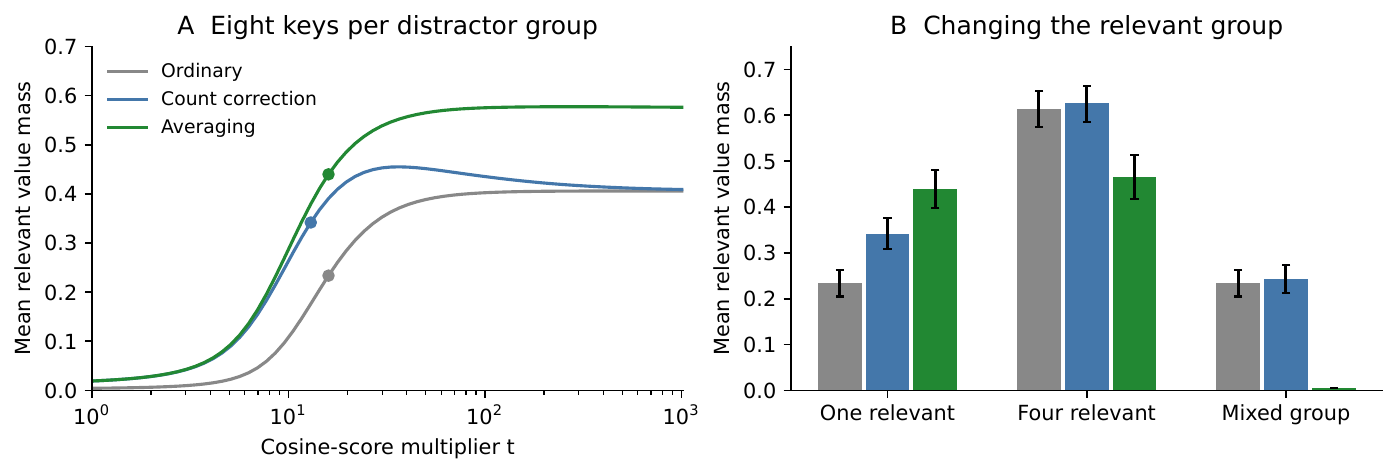}
\caption{Frozen sentence scores. A: evaluation temperature curves for eight-key distractor groups; dots mark independently validation-selected temperatures. B: relevant value mass after changing the relevant group, with 95\% intent-block bootstrap intervals. The reversal for four relevant keys occurs even though all four carry the correct intent label.}\label{fig:learned}
\end{figure}

\paragraph{An all-temperature oracle.} For singleton-target conditions we additionally give count correction the true relevant position and optimize its relevant mass separately for each evaluation query. With distractor scores $c_i$ and target score $c_*$, minimize the convex log-odds function $\log\sum_{i\ne *}\exp[t(c_i-c_*)]/m_i$ over $t\ge0$. Its derivative is monotone, so bracketing and scalar root finding locate an interior minimum; the implementation also handles the limits at zero and infinity. This objective maximizes mass for a known target and differs from minimizing the signal threshold in Proposition~\ref{prop:entropy}. With eight-key distractor groups, oracle count correction has mean mass $0.511$, above calibrated averaging's $0.440$. Only seven of the 2,772 evaluation query-context pairs have averaging mass above $0.9$ while the count oracle cannot reach $0.9$. Moreover, the oracle optimum is at infinity for 40.7\% of these pairs. Finite optimal calibration across queries must therefore not be confused with the theorem's unique finite threshold optimum. The experiment does not establish an asymptotic or uniform all-temperature lower bound on learned scores.

\paragraph{Failures and checks.} With four relevant keys, averaging loses $0.161$ mean relevant mass against count correction, with paired 95\% interval $[-0.181,-0.142]$. High-scoring individual matches can be useful on the relevant side even with correct intent groups. The mixed condition also changes the value: only one quarter of the averaged target-group value is relevant, so its relevant mass cannot exceed $1/4$. The three contaminating keys share one other intent; its output coordinate is always at least three times the target's, making zero top-1 accuracy structural in this stress test. We do not interpret it as a separate empirical discovery. For singleton-target contexts, Jensen ordering holds at every evaluated temperature; exact-copy invariance and equality of all methods on all-singleton contexts hold to $1.1\times10^{-13}$. Independent small-array tests compare mixed values with direct softmax sums and check interior and endpoint oracle solutions against dense temperature search.

\begin{table}[!ht]
\centering\small
\caption{Temperature $t$, mean relevant value mass $V$, and top-1 output-intent accuracy (\%) at that temperature. Temperatures minimize validation negative log relevant value mass, not top-1 error.}
\label{tab:learned-detail}
\resizebox{0.8\linewidth}{!}{%
\begin{tabular}{llrrr}
\toprule
Context & Method & $t$ & $V$ & Accuracy (\%) \\
\midrule
One key per intent & Ordinary & 14.42 & 0.379 & 51.8 \\
One key per intent & Count correction & 14.42 & 0.379 & 51.8 \\
One key per intent & Averaging & 14.42 & 0.379 & 51.8 \\
One key per intent & One representative & 14.42 & 0.379 & 51.8 \\
Exact copies (8) & Ordinary & 17.75 & 0.288 & 36.3 \\
Exact copies (8) & Count correction & 14.42 & 0.379 & 51.8 \\
Exact copies (8) & Averaging & 14.42 & 0.379 & 51.8 \\
Exact copies (8) & One representative & 14.42 & 0.379 & 51.8 \\
Same intent (4) & Ordinary & 14.42 & 0.267 & 39.4 \\
Same intent (4) & Count correction & 13.00 & 0.345 & 53.5 \\
Same intent (4) & Averaging & 14.42 & 0.406 & 56.7 \\
Same intent (4) & One representative & 14.42 & 0.381 & 53.3 \\
Same intent (8) & Ordinary & 16.00 & 0.234 & 33.1 \\
Same intent (8) & Count correction & 13.00 & 0.342 & 53.2 \\
Same intent (8) & Averaging & 16.00 & 0.440 & 57.6 \\
Same intent (8) & One representative & 14.42 & 0.375 & 51.6 \\
Four relevant keys & Ordinary & 21.86 & 0.614 & 72.1 \\
Four relevant keys & Count correction & 21.86 & 0.626 & 72.8 \\
Four relevant keys & Averaging & 19.70 & 0.465 & 59.1 \\
Four relevant keys & One representative & 14.42 & 0.366 & 50.3 \\
Mixed relevant group & Ordinary & 16.00 & 0.234 & 33.1 \\
Mixed relevant group & Count correction & 16.00 & 0.243 & 34.2 \\
Mixed relevant group & Averaging & 4.59 & 0.005 & 0.0 \\
\bottomrule
\end{tabular}%
}
\end{table}

The protocol was written before inspecting retrieval results; the oracle rescue fraction is an additional descriptive analysis. This single-encoder, single-dataset study neither isolates correlation at fixed marginal score distributions nor tests internal attention heads, context-length scaling, or unknown-group recovery.

\section{Frozen-transformer internal-attention diagnostic}
\label{app:transformer}

This diagnostic asks whether count correction and averaging remain mechanically distinct inside an actual pretrained transformer. It is deliberately secondary to the frozen sentence-retrieval study in Section~\ref{sec:experiments}: the intervention is causal, but the model is weak on the controlled task and the downstream results do not support a general advantage for averaging.

\paragraph{Model and task.} We freeze the masked-language model \texttt{distilbert-base-uncased} \citep{sanh2019distilbert} at revision \texttt{12040accade4e8a0f71eabdb258fecc2e7e948be}. Synthetic prompts define an eight-color associative retrieval task. Marked answer positions form the supplied groups and every other active token is a singleton. The locked evaluation contains 80 query specifications, three context seeds, and paired forward and reversed group orders, giving 1,920 prompt instances. The all-singleton control, four repeated exact facts, four distinct related facts, and four relevant answer positions test identity, singleton, and multiple-relevant regimes.

\paragraph{Internal intervention.} At the masked-token query row, we replace the actual softmax computation in all 12 heads of validation-selected layer 5, the final DistilBERT layer. Ordinary attention applies softmax to the token logits. Count correction subtracts the log supplied group size from each member logit. Averaging applies softmax to group-mean logits and combines group-mean projected values. Every other model operation and parameter remains unchanged. Layer and temperature choices were fixed from 24 validation query specifications before the 80 evaluation specifications were generated and hash-locked. Intervals below use 5,000 paired bootstrap resamples of query specifications after averaging context seeds and the two orders.

\begin{table}[t]
\centering
\caption{Frozen DistilBERT layer-6 attention intervention. Means average three
context seeds and two reversed orders within 80 queries; brackets are paired 95\%
bootstrap intervals. $\Delta$ is averaging minus count correction.}
\label{tab:transformer-intervention}
\small
\begin{tabular}{lccc}
\toprule
Condition & $\tau_{\rm cnt}/\tau_{\rm avg}$ & $\Delta$ relevant mass & $\Delta$ candidate prob. \\
\midrule
All singleton & $1/1$ & .00000 & .00000 \\
Repeated facts (4) & $1/1$ & .00032 [.00030,.00035] & -.00215 [-.00253,-.00181] \\
Related (4), selected & $2/1$ & -.00139 [-.00291,.00001] & -.02042 [-.03285,-.00946] \\
Related (4), native & $1/1$ & .00018 [.00017,.00020] & -.00159 [-.00186,-.00133] \\
Four relevant & $1/1$ & -.00083 [-.00093,-.00074] & .00054 [.00014,.00098] \\
\bottomrule
\end{tabular}
\end{table}

\paragraph{Mechanistic result.} At the native temperature, averaging raises singleton relevant attention mass above count correction for every prompt in both repeated and related groups. The mean gains are small but resolved: $3.21\times10^{-4}$ for repeated facts and $1.82\times10^{-4}$ for related facts. Related groups also have larger within-group extremes: the mean maximum-minus-mean logit is $0.443$, compared with $0.391$ for repeated facts, while the corresponding log-mean-exp-minus-mean gaps are $0.0811$ and $0.0676$. Thus the score variation targeted by averaging is present in actual self-attention logits.

Repeated textual facts are not identical score--value copies inside the transformer. Absolute positions and prior contextualization give their marked tokens different logits and projected values. Consequently, the exact-copy equality from the abstract finite-array model applies only to an artificial control with identical logits and values; that control passes to floating-point tolerance, but repetition in a transformer need not satisfy it.

\paragraph{Downstream result and scope.} Increased relevant attention mass does not improve the frozen model's masked-token prediction. At the common native temperature, averaging lowers the correct candidate probability relative to count correction for both repeated and related singleton conditions. After validation-only calibration, related-group count correction selects temperature 2 while averaging selects 1, and averaging is lower by $0.0204$ with paired 95\% interval $[-0.0329,-0.00946]$. Candidate top-1 accuracy is inconclusive, and secondary full-vocabulary probability and rank change direction under separate calibration. Target position also changes the selected-temperature mass comparison. With four relevant positions, averaging lowers total relevant mass, while its candidate-probability increase is only $5.42\times10^{-4}$; ordinary attention has the highest candidate probability in that condition.

The model is barely above the eight-way random baseline: its all-singleton candidate accuracy is $0.131$, versus $0.125$ at chance. Only one small English masked-language model, one synthetic task, the final layer, and oracle-supplied groups are tested, with all heads intervened together. These facts limit what either the downstream nulls or reversals imply about capable retrieval heads. Finite bounded logits on these prompts do not verify the Gaussian phase transition, the all-temperature separation, or the hierarchical extension. The diagnostic establishes only that the two operations are internally distinct and that their downstream effects here are small, metric-dependent, and often adverse.

\paragraph{Evaluation audit.} The locked evaluation used 6,240 forward passes in 165 seconds on a GTX 1080 and produced 11,520 metric rows. The audit checks source and protocol hashes, row uniqueness, probability bounds, temperature locks, singleton equality, native Jensen inequalities, and deterministic reanalysis.

\end{document}